\documentclass{article}

\usepackage{microtype}
\usepackage{graphicx}
\usepackage{subcaption}
\usepackage{booktabs} 

\usepackage{hyperref}
\usepackage[table]{xcolor}

\usepackage[preprint]{icml2026}

\usepackage{amsmath}
\usepackage{amssymb}
\usepackage{mathtools}
\usepackage{amsthm}

\usepackage[capitalize,noabbrev]{cleveref}

\theoremstyle{plain}
\newtheorem{theorem}{Theorem}[section]

\theoremstyle{definition}
\newtheorem{definition}[theorem]{Definition}

\theoremstyle{remark}

\DeclareMathOperator*{\smax}{smax}
\DeclareMathOperator*{\smin}{smin}
\DeclareMathOperator*{\argmin}{argmin}
\DeclareMathOperator*{\argmax}{argmax}

\usepackage{alphalph}

\usepackage[textsize=tiny]{todonotes}

\def\ddefloop#1{\ifx\ddefloop#1\else\ddef{#1}\expandafter\ddefloop\fi}

\usepackage{tikz}

\def\ddef#1{\expandafter\def\csname v#1\endcsname{\ensuremath{\boldsymbol{#1}}}}
\ddefloop ABCDEFGHIJKLMNOPQRSTUVWXYZabcdefghijklmnopqrstuvwxyz\ddefloop

\def\ddef#1{\expandafter\def\csname v#1\endcsname{\ensuremath{\boldsymbol{\csname #1\endcsname}}}}
\ddefloop {alpha}{beta}{gamma}{delta}{epsilon}{varepsilon}{zeta}{eta}{theta}{vartheta}{iota}{kappa}{lambda}{mu}{nu}{xi}{pi}{varpi}{rho}{varrho}{sigma}{varsigma}{tau}{upsilon}{phi}{varphi}{chi}{psi}{omega}{Gamma}{Delta}{Theta}{Lambda}{Xi}{Pi}{Sigma}{varSigma}{Upsilon}{Phi}{Psi}{Omega}{ell}\ddefloop

\def\ddef#1{\expandafter\def\csname bb#1\endcsname{\ensuremath{\mathbb{#1}}}}
\ddefloop ABCDEFGHIJKLMNOPQRSTUVWXYZ\ddefloop

\begin{document}

\twocolumn[
  \icmltitle{Quality-Diversity Optimization as Multi-Objective Optimization}



  \icmlsetsymbol{equal}{*}

\begin{icmlauthorlist}
\icmlauthor{Xi Lin}{equal,xjtu}
\icmlauthor{Ping Guo}{equal,cityu}
\icmlauthor{Yilu Liu}{cityu}
\icmlauthor{Qingfu Zhang}{cityu}
\icmlauthor{Jianyong Sun}{xjtu}
\end{icmlauthorlist}

\icmlaffiliation{xjtu}{Xi'an Jiaotong University}
\icmlaffiliation{cityu}{City University of Hong Kong}

\icmlcorrespondingauthor{Xi Lin}{xi.lin@xjtu.edu.cn}
\icmlcorrespondingauthor{Jianyong Sun}{jy.sun@xjtu.edu.cn}

  \icmlkeywords{Quality-Diversity Optimization, Multi-Objective Optimization}

  \vskip 0.3in
]



\printAffiliationsAndNotice{\icmlEqualContribution}

\begin{abstract}

The Quality-Diversity (QD) optimization aims to discover a collection of high-performing solutions that simultaneously exhibit diverse behaviors within a user-defined behavior space. This paradigm has stimulated significant research interest and demonstrated practical utility in domains including robot control, creative design, and adversarial sample generation. A variety of QD algorithms with distinct design principles have been proposed in recent years. Instead of proposing a new QD algorithm, this work introduces a novel reformulation by casting the QD optimization as a multi-objective optimization (MOO) problem with a huge number of optimization objectives. By establishing this connection, we enable the direct adoption of well-established MOO methods, particularly set-based scalarization techniques, to solve QD problems through a collaborative search process. We further provide a theoretical analysis demonstrating that our approach inherits theoretical guarantees from MOO while providing desirable properties for the QD optimization. Experimental studies across several QD applications confirm that our method achieves performance competitive with state-of-the-art QD algorithms.

\end{abstract}

\section{Introduction}

While classic optimization typically seeks a single best solution, practical problems often require a range of high-performing alternatives with distinct characteristics. To address this need, Quality-Diversity (QD) optimization aims to find a set of high-quality and diverse solutions that collectively cover a user-defined behavior space, thereby representing the full spectrum of possible solution types \cite{pugh2016quality, chatzilygeroudis2021quality}. This optimization paradigm has demonstrated broad applicability across domains such as creative image composition~\cite{secretan2011picbreeder, nguyen2016understanding, tian2022modern}, content generation~\cite{bradley2023quality, ding2024quality}, robot control~\cite{doncieux2015evolutionary, cully2015robots, wan2025diversifying}, game agent design~\cite{ecoffet2019go, ecoffet2021first}, scenario generation~\cite{gonzalez2020finding, bhatt2022deep, fontaine2022evaluating, zhang2023arbitrarily}, adversarial sample generation for images~\cite{nguyen2015deep} or large language models~\cite{samvelyan2024rainbow, wang2025quality}, and automatic discovery for crystal structure~\cite{janmohamed2024multi} or algorithm design~\cite{novikov2025alphaevolve}.

Building upon the core QD optimization objective, numerous methods have been developed in recent years. Pioneering works such as Novelty Search with Local Competition (NSLC)~\cite{lehman2011abandoning, lehman2011evolving} and Multi-dimensional Archive of Phenotypic Elites (MAP-Elites)~\cite{mouret2015illuminating} established the core QD paradigm for balancing quality and diversity within a behavior space. Subsequent work has extended QD to more complex and realistic scenarios, including high-dimensional behavior spaces~\cite{vassiliades2017using}, expensive evaluations~\cite{gaier2017data, zhang2022deep}, multi-task optimization~\cite{mouret2020quality}, unknown behavior descriptions~\cite{hedayatian2025autoqd}, and gradient-aware optimization~\cite{nilsson2021policy, fontaine2021differentiable,fontaine2023covariance,hedayatian2025soft}.

\begin{figure*}[t]
\centering
\subfloat[MAP-Elites]{\includegraphics[width = 0.25\linewidth]{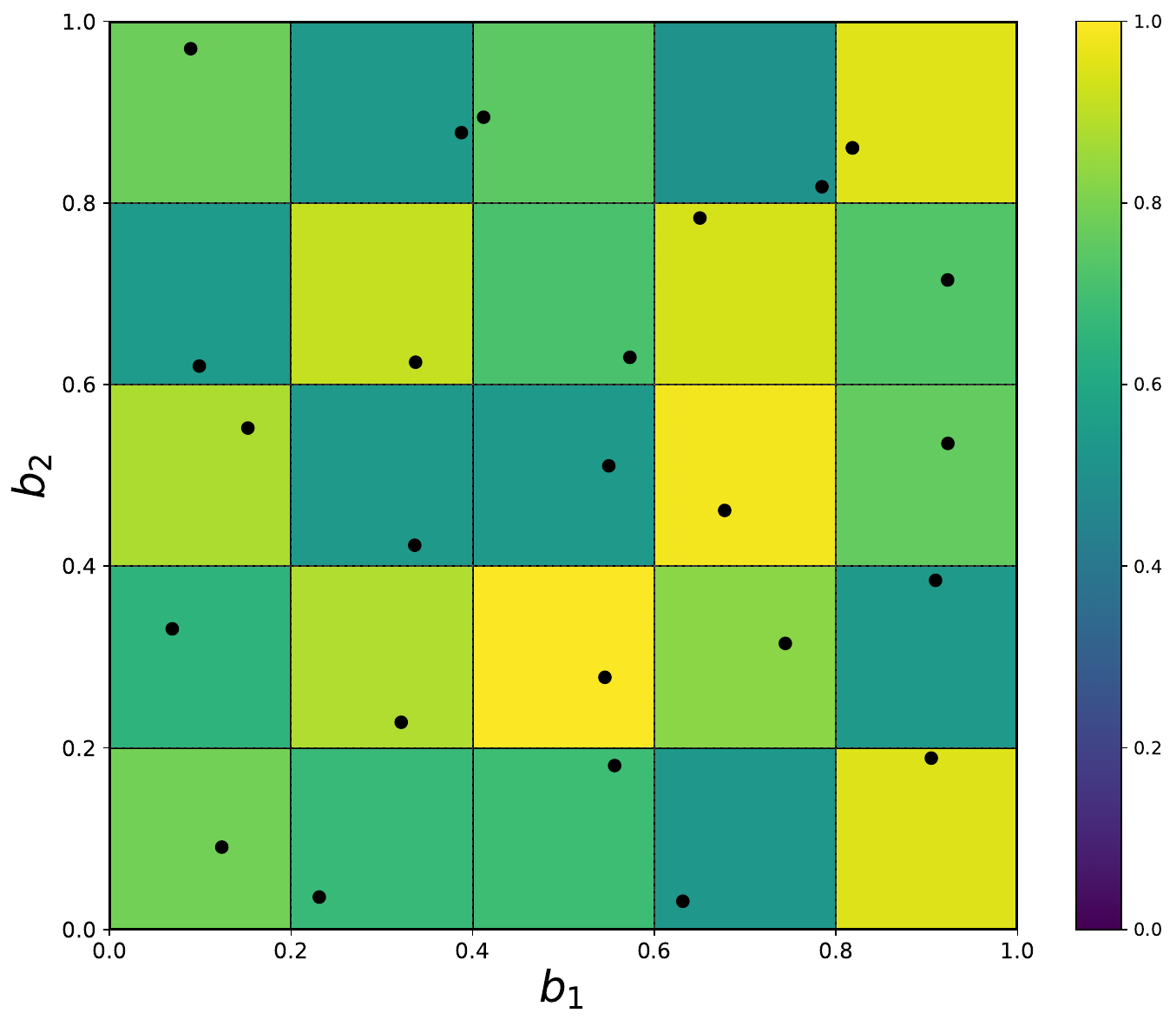}}\hfill
\subfloat[CVT-MAP-Elites]{\includegraphics[width = 0.25\linewidth]{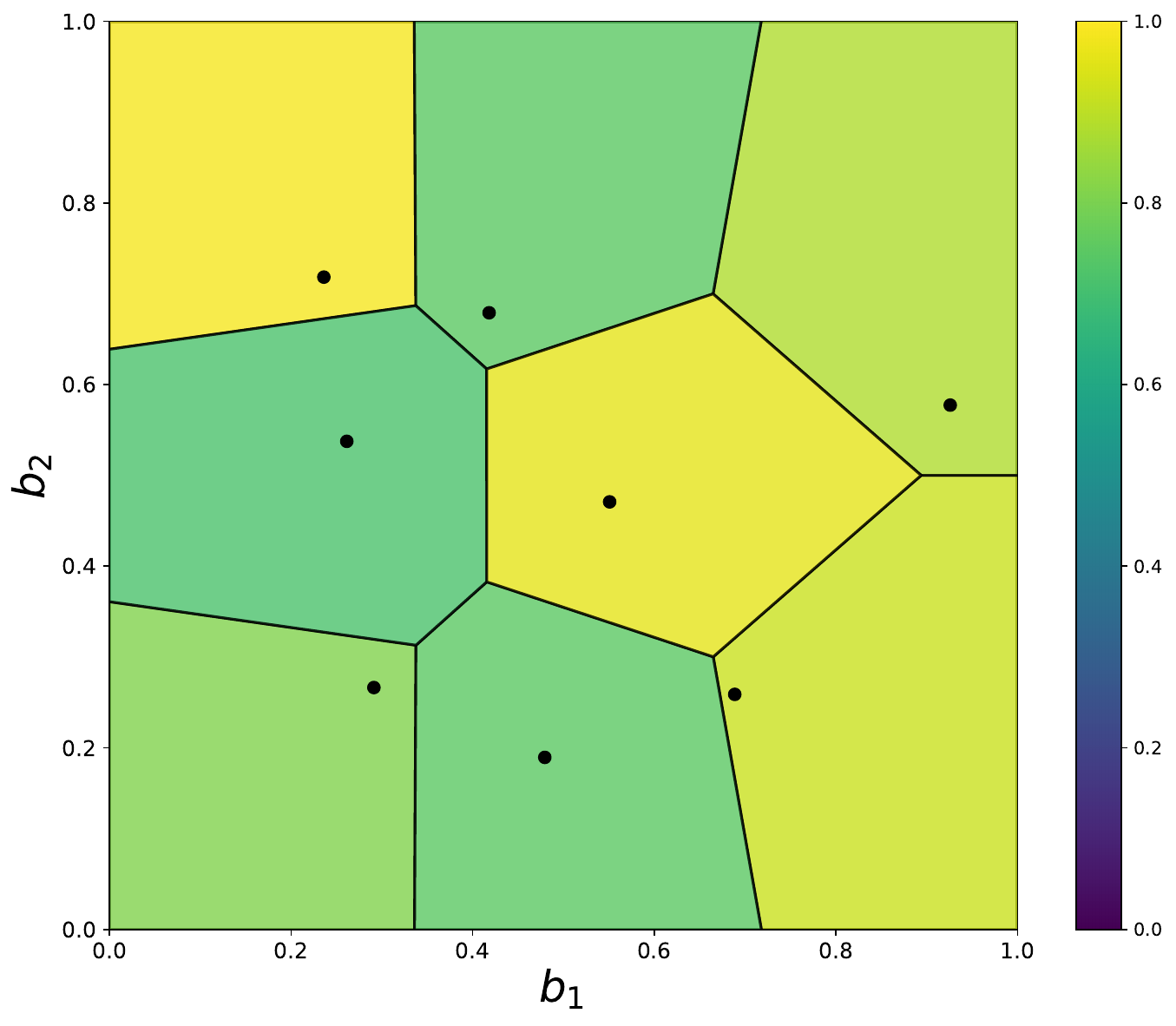}}\hfill
\subfloat[Soft QD]{\includegraphics[width = 0.25\linewidth]{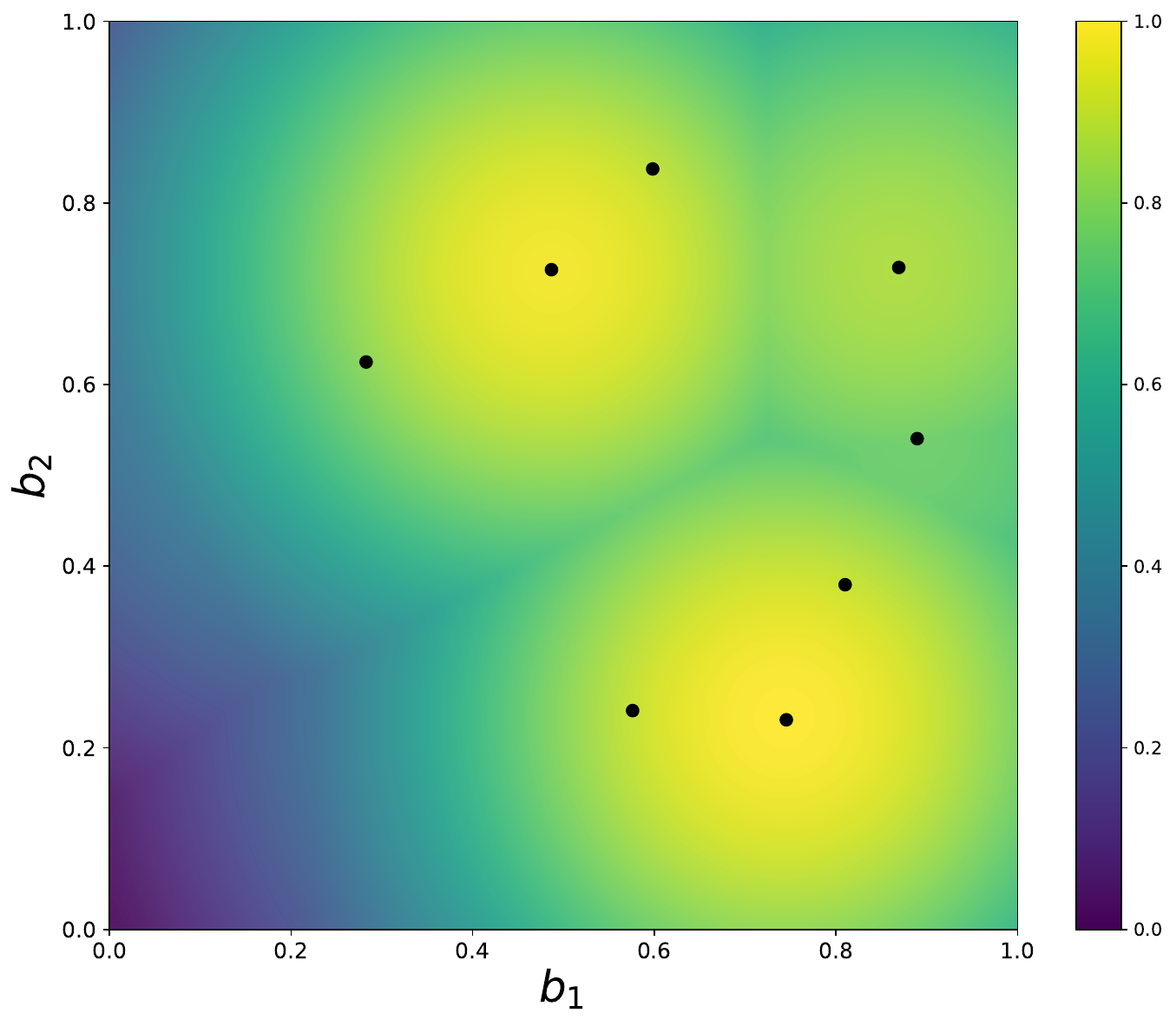}}
\hfill
\subfloat[QD as MOO (Ours)]{\includegraphics[width = 0.25\linewidth]{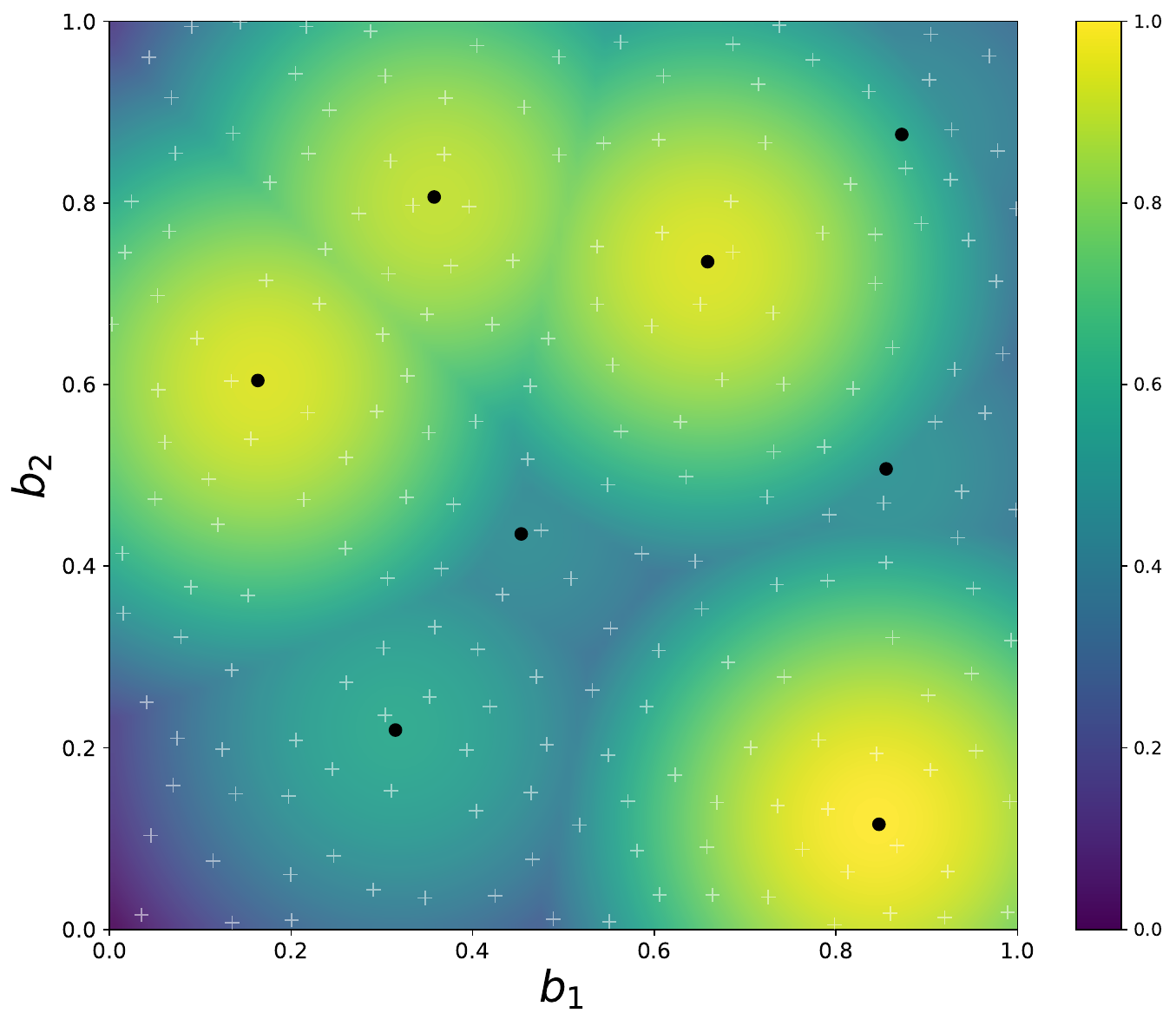}}
\caption{\textbf{Illustration of Different QD Optimization Algorithms.} \textbf{(a) MAP-Elites}~\cite{mouret2015illuminating} discretizes the behavior space into a fixed grid with some cells ($25$ in this case) and finds one local optimal solution per cell. \textbf{(b) CVT-MAP-Elites}~\cite{vassiliades2017using} partitions the behavior space into a user-defined number of convex regions ($8$ in this case) and finds one local optimal solution per region. \textbf{(c) Soft QD}~\cite{hedayatian2025soft} uses several solutions ($8$ black dots in this case) to illuminate the whole behavior space without explicit partition. \textbf{(d) QD as MOO (Ours)} reformulates the QD optimization as an MOO problem with many optimization objectives ($200$ white crosses in this case) spanning the whole behavior space, and finds a smaller set of diverse solutions ($8$ black dots in this case) to tackle all optimization objectives in a collaborative manner.}
\label{fig_motivation}
\end{figure*}

Instead of proposing a new QD algorithm, this work reformulates the QD optimization, particularly the soft QD problem~\cite{hedayatian2025soft}, as a multi-objective optimization (MOO) problem~\cite{miettinen1999nonlinear,ehrgott2005multicriteria, ehrgott2025fifty}. This novel perspective reveals an intrinsic, yet previously unexplored, connection between the two fields. While both aim to obtain a set of diverse solutions, QD explicitly seeks high-performing coverage of a behavior space, whereas MOO focuses on balancing trade-offs among multiple competing objectives. By formally establishing this linkage, we demonstrate how MOO principles and algorithms can be directly leveraged to solve QD problems, offering practical efficiency alongside theoretical guarantees. Consequently, this reformulation not only provides a new lens for developing and analyzing QD algorithms but also opens new research directions by bridging these two well-established fields. 

The contributions of this work are summarized as follows.~\footnote{Our code will be fully open-sourced upon publication.}
\begin{itemize}
    
    \item We reformulate the QD optimization as an MOO problem whereby behavior space coverage is achieved by optimizing a set of uniformly distributed objectives.

    \item We leverage set-based scalarization methods from MOO to solve the QD problem, which facilitates a collaborative search for a diverse set of high-performing solutions with favorable theoretical guarantees.
    
    \item We conduct experiments across multiple QD benchmarks to demonstrate our method's competitive performance, which validates the effectiveness of our proposed reformulation. 
    
\end{itemize}

\section{Preliminaries and Related Work for Quality-Diversity Optimization}

\paragraph{Problem Definition}

For a QD optimization problem, a solution $\vx \in \mathcal{X}$ produces two outputs upon evaluation:
\begin{equation}
\big( f(\vx), \ \vb(\vx) \big) \leftarrow \vF(\vx),
\label{eq_qd_definition}
\end{equation}
where $f(\vx) \in \mathbb{R}$ is a scalar performance measure and $\vb(\vx) \in \mathcal{B} \subset \mathbb{R}^d$ is a $d$-dimensional behavior descriptor in the behavior space $\mathcal{B}$.  
Formally, QD aims to produce a set of high-performing solutions that collectively cover the behavior space. For each potential descriptor $\vb^{(k)} \in \mathcal{B}$, the corresponding optimal solution is defined as:
\begin{align}
\vx^{*(k)} = \arg \max_{\vx \in \mathcal{X}} f(\vx) \quad 
\text{s.t.} \quad \vb(\vx) = \vb^{(k)}.
\label{eq_qd_problem}
\end{align}
The set $\{ \vx^{*(k)} \}_{k=1}^{K}$ is the ideal QD solution set and satisfies two core properties:
\begin{enumerate}
      \item \textbf{Diversity:} The corresponding behavior descriptors $\{\vb(\vx^{*(k)})\}_{k=1}^{K}$ span a broad region of $\mathcal{B}$.
      
    \item \textbf{Quality:} Each $\vx^{*(k)}$ achieves a high objective value $f(\vx^{*(k)})$ with its associated behavior $\vb(\vx^{*(k)}) = \vb^{(k)}$.
    
\end{enumerate}

Since $\mathcal{B}$ is typically continuous, the QD problem~(\ref{eq_qd_problem}) involves infinitely many subproblems. In practice, QD algorithms therefore seek a finite yet representative set of solutions that approximate the ideal coverage of $\mathcal{B}$. It is important to note that diversity is defined in the behavior space $\mathcal{B}$, not in the solution space $\mathcal{X}$. Therefore, generating a set of solutions that are diverse in $\mathcal{X}$ does not guarantee a diverse coverage of $\mathcal{B}$. This inherent discrepancy necessitates the need to develop specialized QD algorithms that directly target diversity in the behavior space.

\paragraph{Grid-based Methods}

A widely adopted strategy in QD optimization is to discretize the continuous behavior space $\mathcal{B}$ into a finite set of $M$ non-overlapping cells $\{c_m\}_{m=1}^{M}$, thereby forming a structured archive $\mathcal{A}$. This discretization effectively reduces the original problem to a collection of $M$ cell-specific optimization subproblems:
\begin{equation}
    \forall \ c_m \in \mathcal{A}, \quad \max_{\vx \in \mathcal{X}} f(\vx) \quad \text{s.t.} \quad \vb(\vx) \in c_m.
    \label{eq_grid_subproblem}
\end{equation}
The quality of a solution set $\{ \vx^{(k)} \}_{k=1}^{K}$ is commonly evaluated by the QD Score~\cite{pugh2016quality}, which aggregates the performance of the best solution within each cell:
\begin{equation}
    \text{QD-Score} = \sum_{c_m \in \mathcal{A}} \max_{\vx \in \{ \vx^{(k)} \}_{k=1}^{K}: \vb(\vx) \in c_m} f(\vx).
    \label{eq_qd_score}
\end{equation}
This metric inherently balances coverage (number of occupied cells) and the quality of solutions within them.

Within this discretization paradigm, \textbf{MAP-Elites}~\cite{mouret2015illuminating} is a seminal and foundational algorithm. It initializes a fixed-grid archive over $\mathcal{B}$ and employs a stochastic population-based algorithm (e.g., evolutionary algorithm) to iteratively discover and retain the highest-performing candidate for each cell. The performance of each cell is solely represented by the best performing solution as shown in Figure~\ref{fig_motivation}(a). Despite its conceptual simplicity, MAP-Elites has proven remarkably powerful and has inspired extensive subsequent research. A fundamental limitation of such fixed-grid methods is the exponential growth in the number of cells $M$ as the dimensionality $d$ of the behavior space $\mathcal{B}$ increases, a classic instance of the \emph{curse of dimensionality}. Consequently, maintaining a fine-grained grid becomes computationally infeasible even with moderate dimensions. 

To mitigate this scalability issue, more advanced discretization schemes have been developed. \textbf{CVT-MAP-Elites}~\cite{vassiliades2017using} replaces the uniform grid with a centroidal Voronoi tessellation (CVT) defined by a fixed number of $M$ centroids. Each cell $c_m$ contains all points in $\mathcal{B}$ closer to its assigned centroid $\vs_m$ than to any other, i.e., $c_m = \{ \vb \in \mathcal{B} \mid  \arg\min_{i} \|\vb - \vs_i\| = m \}$. This construction tends to produce cells of comparable volume and avoids exponential scaling in $d$ as shown in Figure~\ref{fig_motivation}(b). Other strategies include hierarchical behavioral repertoire~\cite{cully2018hierarchical}, dimensionality reduction~\cite{grillotti2022unsupervised}, and fast centroid generation~\cite{mouret2023fast}. Nevertheless, effectively searching within large, high-dimensional cells remains a persistent challenge for grid-based methods.

\paragraph{Novelty Search} 

Novelty Search~\cite{lehman2008exploiting, lehman2011abandoning} offers an alternative QD paradigm that avoids discretizing the behavior space $\mathcal{B}$. Its core insight is that directly seeking behavioral novelty, measured by the distance between behavior descriptors, can sometimes lead to better performance than directly optimizing the objective. \textbf{NSLC}~\cite{lehman2011evolving} extends this idea by introducing local quality competitions among behaviorally similar solutions, thereby jointly improving diversity and performance. A recent extension, Dominated Novelty Search (DNS)~\cite{bahlous2025dominated} implements local competition via dynamic fitness transformation, eliminating the need for predefined bounds or fixed distance thresholds in novelty search.

The term Quality-Diversity Optimization was formally introduced by \citet{pugh2015confronting, pugh2016quality}, accompanied by a systematic analysis of existing paradigms. \citet{cully2017quality} propose a unifying modular framework that encompasses both grid-based and novelty search approaches. A significant subsequent research direction focuses on enhancing search efficiency by integrating advanced evolutionary strategies~\cite{fontaine2020covariance, tjanaka2023training,batra2024proximal}. Many gradient-aware QD algorithms have also been developed~\cite{nilsson2021policy,fontaine2021differentiable,fontaine2023covariance}, which leverage gradient information for achieve faster and more sample-efficient search.

\paragraph{Continuous Quality-Diversity Optimization}

The QD methods with hard-assignment may fail to capture the continuous nature of the behavior space $\mathcal{B}$ and can be restrictive in practice \cite{kent2022discretization}. Surrogate model-based techniques have been explored to enable continuous search for expensive black-box QD problems \cite{cully2015robots, gaier2017data, kent2020bop}, and a continuous QD score has been proposed as a discretization-free evaluation metric \cite{kent2022discretization}.

Recently, \citet{hedayatian2025soft} introduced \textbf{Soft QD}, a kernel-based method for continuous coverage of $\mathcal{B}$. Here, each solution $\vx^{(k)}$ is viewed as illuminating a region around its behavior descriptor $\vb(\vx^{(k)})$ with intensity proportional to its quality $f(\vx^{(k)})$ as illustrated in Figure~\ref{fig_motivation}(c). The influence of a solution decays smoothly with the behavioral distance from its behavior descriptor $\vb(\vx^{(k)})$, modeled by a Gaussian kernel. The aggregate illumination at any point $\vb \in \mathcal{B}$ is defined as  the maximum intensity contributed by any solution in the solution set $\{\vx^{(k)}\}_{k=1}^K$:
\begin{equation}
    v(\vb; \{\vx^{(k)}\}_{k=1}^K) 
    = \max_{ k } \left[ f(\vx^{(k)}) \cdot e^{ - \frac{\| \vb - \vb(\vx^{(k)}) \|^2}{2\sigma^2} } \right],
    \label{eq_behavior_value}
\end{equation}
where $\sigma$ is a kernel width parameter controlling the radius of influence. The overall performance of the solution set $\{\vx^{(k)}\}_{k=1}^K$ is quantified by integrating over the entire behavior space $\mathcal{B}$, resulting in the \emph{Soft QD Score}:
\begin{equation}
    S(\{\vx^{(k)}\}_{k=1}^K) = \int_{\mathcal{B}} v(\vb; \{\vx^{(k)}\}_{k=1}^K) \, d\vb.
    \label{eq_soft_qd_score}
\end{equation}
This score directly encodes the QD objective. To achieve a high value, the population must contain high-quality solutions that are well-spread across $\mathcal{B}$, as closely clustered solutions contribute overlapping illumination.

Direct optimization of the integral in the soft QD score~\eqref{eq_soft_qd_score} is intractable. The \textbf{Soft QD Using Approximated Diversity (SQUAD)}~\cite{hedayatian2025soft} derives a tractable lower bound, leading to the following differentiable objective:
\begin{align}
    &\tilde{S}(\{\vx^{(k)}\}_{k=1}^K) = \sum_{k=1}^{K} f(\vx^{(k)}) -  \nonumber \\
    & \sum_{1 \leq i < j \leq K} \sqrt{f(\vx^{(i)}) f(\vx^{(j)})} \cdot e^{ - \frac{\| \vb(\vx^{(i)}) - \vb(\vx^{(j)}) \|^2}{\gamma^2} },
    \label{eq_squad_objective}
\end{align}
where $\gamma$ is a repulsion bandwidth, the first term attracts solutions toward high quality, while the second acts as a pairwise repulsion that penalizes behavioral similarity, weighted by the geometric mean of qualities. Assuming both $f(\vx)$ and $\vb(\vx)$ are differentiable \cite{fontaine2021differentiable}, the objective function~\eqref{eq_squad_objective} becomes fully differentiable with respect to $\vx$, which enables efficient gradient-based optimization of the entire solution set without any discrete archive.

\section{Quality-Diversity Optimization as Multi-Objective Optimization}

\subsection{Multi-Objective Optimization Reformulation}

This work reformulates the QD optimization as a particular MOO problem. We adopt the differentiable QD formulation of Soft QD~\cite{hedayatian2025soft} to enable efficient gradient-based optimization. Our core reformulation is to directly optimize a large number of objectives with corresponding behaviors $\vb_m$ spanning the entire behavior space $\mathcal{B}$.

Specifically, let $\{\vb_m\}_{m=1}^{M}$ be a large set of target behavior descriptors densely sampled from the continuous behavior space $\mathcal{B}$ (e.g., $M = 10,000$). For each $\vb_m$, we define an objective function for solution $\vx$ based on its quality $f(\vx)$ and the distance from its behavior descriptor $\vb(\vx)$ to $\vb_m$:
\begin{equation}
    \tilde{v}_m(\vx) = - f(\vx) \cdot e^{ - \frac{\| \vb_m - \vb(\vx) \|^2}{\gamma^2} }.
    \label{eq_mop_objective}
\end{equation}
The negative sign is introduced to align with the standard MOO convention of \emph{minimization}. Thus, a lower value of $\tilde{v}_m(\vx)$ is desirable, indicating either a higher quality $f(\vx)$ or a behavior $\vb(\vx)$ closer to the target $\vb_m$ (or both).

Then, we can define an MOO problem with $M$ objectives:
\begin{equation}
    \min_{\vx \in \mathcal{X}} \tilde{\vv}(\vx) = \Big( \tilde{v}_1(\vx),  \dots, \tilde{v}_M(\vx) \Big).
    \label{eq_mop}
\end{equation}
As $M$ becomes large and ${\vb_m}$ densely covers $\mathcal{B}$, this formulation offers a finite approximation to the original continuous QD problem. Crucially, in non-trivial settings, no single solution can minimize all $M$ objectives simultaneously, since improving performance toward one target $\vb_m$ typically degrades it for others. Therefore, we have the definition of dominance and Pareto optimality for MOO~\cite{miettinen1999nonlinear,ehrgott2005multicriteria}:

\begin{definition}[Dominance]
For $\vx^{(a)}, \vx^{(b)} \in \mathcal{X}$,  $\vx^{(a)}$ is said to \emph{dominate} $\vx^{(b)}$, denoted as $\vx^{(a)} \prec \vx^{(b)}$, if and only if $\tilde{v}_m(\vx^{(a)}) \leq \tilde{v}_m(\vx^{(b)}) \quad \forall m \in \{1,\dots,M\}$ and $\tilde{v}_n(\vx^{(a)}) < \tilde{v}_n(\vx^{(b)}) \quad \exists n \in \{1,\dots,M\}$. Furthermore, $\vx^{(a)}$ \emph{strictly dominates} $\vx^{(b)}$, denoted as $\vx^{(a)}\prec_{\text{strict}} \vx^{(b)}$, if $\tilde{v}_m(\vx^{(a)}) < \tilde{v}_m(\vx^{(b)}) \quad \forall m \in \{1,\dots,M\}$.
\end{definition}

\begin{definition}[(Weakly) Pareto Optimal Solution]
A solution $\vx^{*} \in \mathcal{X}$ is \emph{Pareto optimal} if no other $\vx \in \mathcal{X}$ dominates it, i.e., $\nexists  \vx \in \mathcal{X}$ such that $\vx \prec \vx^{*}$. It is weakly Pareto optimal if no other solution strictly dominates it.
\label{def_pareto_optimality}
\end{definition}

\begin{definition}[Pareto Set and Pareto Front]
The set of all Pareto optimal solutions is called the \emph{Pareto set}:
\begin{equation}
    \mathcal{X}^* = \{ \vx \in \mathcal{X} \mid \nexists \, \hat{\vx} \in \mathcal{X} \text{ such that } \hat{\vx} \prec \vx \}.
\end{equation}
Its image in the objective space is called the \emph{Pareto front}:
\begin{equation}
    \tilde{\vv}(\mathcal{X}^*) = \{ \tilde{\vv}(\vx) \in \mathbb{R}^M \mid \vx \in \mathcal{X}^* \}.
\end{equation}
\end{definition}

A single Pareto optimal solution only captures one specific trade-off among the $M$ objectives and typically does not minimize all objectives simultaneously. Conversely, finding $M$ specialized solutions, each dedicated to a single objective $\tilde{v}_m(\vx)$, is computationally infeasible for large $M$ (e.g., $10,000$). We therefore adopt a more practical alternative recently developed for MOO with a vast number of objectives~\cite{ding2024efficient, li2025many, lin2025few, liu2024many, liu2025few, maus2025multi}. The goal here is to find a \emph{small} set of $K$ solutions (where $K \ll M$) that collaboratively and complementarily address all $M$ objectives. Specifically, we aim to ensure that each objective $\tilde{v}_m(\vx)$ is well optimized by at least one solution in the set. 

Formally, let $\vX_K = \{\vx^{(k)}\}_{k=1}^{K} \subset \mathcal{X}$ be a candidate solution set. Its quality is evaluated by taking, for each objective $\tilde{v}_m$, the best (minimum) value attained within the set. This leads to the set-based multi-objective optimization problem:
\begin{align}
    \min_{\vX_K \subset \mathcal{X}}& \; \tilde{\vv}(\vX_K) = \nonumber \\ 
    &\left( \min_{\vx \in \vX_K} \tilde{v}_1(\vx),  \; \dots, \; \min_{\vx \in \vX_K} \tilde{v}_M(\vx) \right).
    \label{eq_set_mop}
\end{align}
When $K=1$, this problem reduces to the standard MOO problem (\ref{eq_mop}), where a single solution must balance all $M$ objectives. When $K \geq M$, it becomes degenerate, as each objective can be assigned a dedicated solution, effectively decoupling the optimization into $M$ independent single-objective problems. Any additional solution is redundant.

This work focuses on the practical and computationally meaningful regime where $1 < K \ll M$. In this setting, $K$ solutions must collaborate to cover the diverse objectives, directly mirroring the core QD objective of covering the behavior space $\mathcal{B}$ with a set of diverse high-quality solutions.

\subsection{Set Scalarization for QD Optimization}

The formulation in (\ref{eq_set_mop}) remains a multi-objective optimization problem. When $K \ll M$, it is generally impossible for a small set $\vX_K$ to simultaneously optimize all $M$ objectives. Recently, some methods have been developed to tackle such set-based multi-objective problems. For example, MosT~\cite{li2025many} combines optimal transport with the Multiple Gradient Descent Algorithm (MGDA)~\cite{sener2018multi} to find a solution set covering diverse regions of the Pareto front. However, MGDA incurs high computational overhead in high-dimensional objective spaces, making it impractical for the QD setting, where $M$ can be very large. To ensure scalability, we turn to decomposition-based multi-objective optimization methods~\cite{zhang2007moea}, which scalarize the multi-objective optimization problem into a single-objective one. Recently, a few gradient-based set scalarization approaches~\cite{ding2024efficient,lin2025few} have been proposed to tackle the set optimization problem~(\ref{eq_set_mop}), which we adapt and extend for the QD optimization in this work.

\paragraph{Sum-of-Minimum (SoM) Scalarization}

A straightforward scalarization approach for the set-based MOO problem is the Sum-of-Minimum (SoM) method~\cite{ding2024efficient}. It aggregates performance by summing the best value achieved for each objective across the solution set:
\begin{equation}
    g^{\text{SoM}}(\vX_K) = \sum_{m=1}^{M} \; \min_{\vx \in \vX_K} \tilde{v}_m(\vx).
    \label{eq_som}
\end{equation}
Minimizing $g^{\text{SoM}}(\vX_K)$ encourages the solution set $\vX_K$ to collectively produce favorable values across all objectives. This scalarization approach has direct conceptual connections to existing QD metrics and algorithms:

\begin{itemize}
    
    \item \textbf{QD Score:} SoM can be viewed as a soft variant of the QD score~(\ref{eq_qd_score}), but with $K \ll M$ solutions that are not restricted to discrete cells.

    \item  \textbf{CVT:} While CVT~\cite{vassiliades2017using} partitions the behavior space $\mathcal{B}$ into cells based on behavior clustering, SoM implicitly groups objectives based on which solution in $\vX_K$ achieves the best value for each.

    \item \textbf{Soft QD Score:} SoM constitutes a discretized approximation of the Soft QD score~\eqref{eq_soft_qd_score}.
    
    \item \textbf{SQUAD:} The SQUAD objective (\ref{eq_squad_objective}) is a lower bound to the Soft QD score, employing local pairwise repulsion for diversity. In contrast, SoM encourages diversity through global attraction to different objectives.

\end{itemize}

When the decision-maker has explicit preferences over the $M$ objectives, a weighted variant can be employed $g^{\text{SoM}}(\vX_K; \vlambda) = \sum_{m=1}^{M} \lambda_m  \min_{\vx \in \vX_K} \tilde{v}_m(\vx)$ where $\vlambda = (\lambda_1, \dots, \lambda_M) \in \mathbb{R}^M$ represents the relative importance of each objective $\tilde{v}_m(\vx)$ (and thus each target behavior $\vb_m$). In this work, we consider the standard QD setting where all behaviors are equally important, and hence simply set $\lambda_m = 1$ for all $M$ objectives.

\paragraph{Tchebycheff Set (TCH-Set) Scalarization}

An alternative approach is the Tchebycheff Set (TCH-Set) scalarization~\cite{lin2025few}: 
\begin{equation}
    g^{\text{TCH-Set}}(\vX_K) = \max_{1 \le m \le M} \; \min_{\vx \in \vX_K} \left( \tilde{v}_m(\vx) - z_m^* \right),
    \label{eq_tch_set_scalarization}
\end{equation}
where $\vz^* = (z_1^*, \dots, z_M^*)$ is a reference point. Ideally, $z_m^*$ represents the optimal value for the objective $m$, i.e., $z_m^* = \min_{\vx \in \mathcal{X}} \tilde{v}_m(\vx)$. In practice, when the ideal values are unknown in advance, they are typically approximated during optimization, for instance, by setting $\hat{z}_m = \min_{\vx \in \vX_K} \tilde{v}_m(\vx) - \varepsilon$ with a small positive $\varepsilon > 0$. 

Minimizing $g^{\text{TCH-Set}}(\vX_K)$ focuses effort on improving the objective currently performing worst. This min-max formulation ensures that no objective is severely neglected, promoting a balanced improvement across all objectives. Consequently, it encourages the solution set $\vX_K$ to achieve comprehensive coverage of the QD behavior space $\mathcal{B}$ by preventing over-specialization on a subset of target behaviors.

\begin{figure*}[t]
    \centering
    \includegraphics[width= 1 \linewidth]{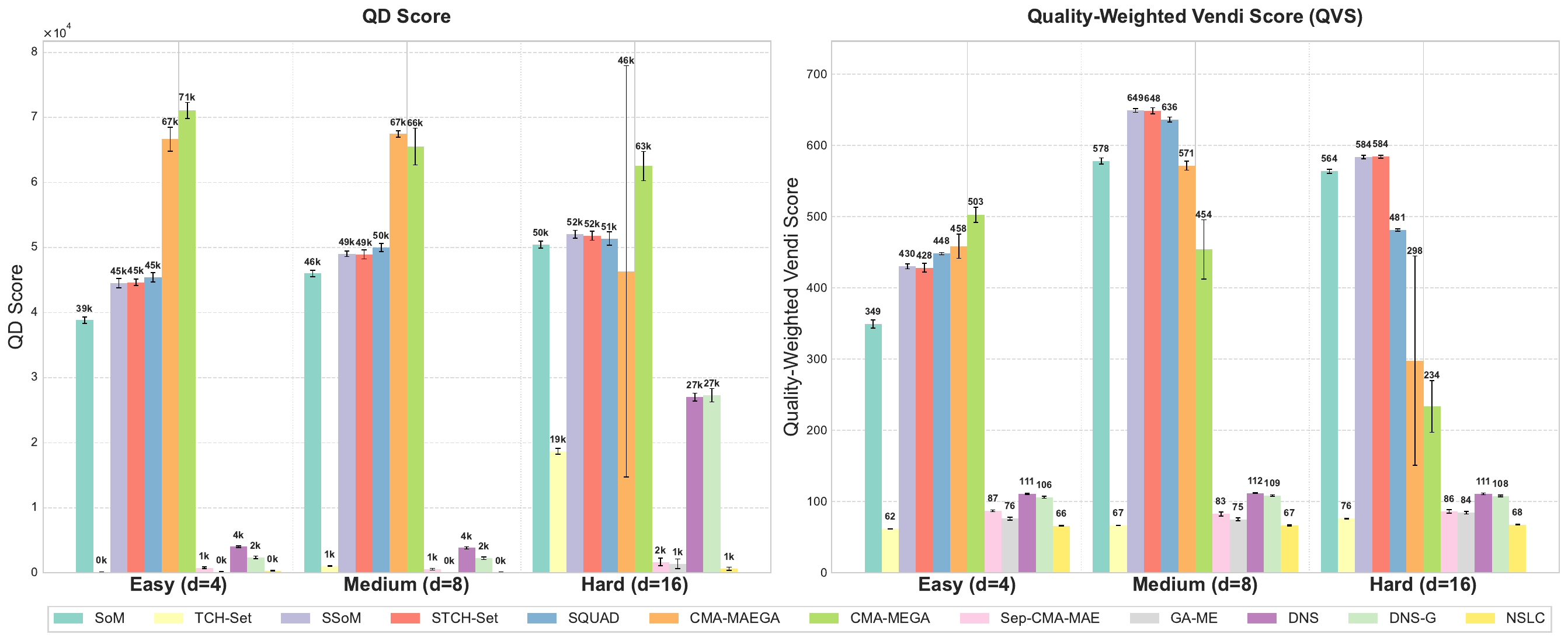}
    \caption{ \textbf{Performance on Linear Projection (LP) across $4$, $8$ and $16$-dimensional behavior space.} Reported values are the mean and standard deviation over 10 independent runs. The smooth MOO-based methods, SSoM and STCH-Set, achieve the best QVS on both the Medium ($d=8$) and Hard ($d=16$) tasks. They maintain stable performance across all other comparisons, on par with SQUAD.}
    \label{fig_lp_results}
\end{figure*}

\begin{table*}[ht]
\centering
\setlength{\tabcolsep}{2pt}
\caption{\textbf{Performance on the Image Composition (IC) Benchmark.} Reported values are the mean and standard deviation over 10 independent runs. For each metric, the best result is highlighted in \textbf{bold} with a gray background, and the second best is \underline{underlined}.}
\label{table_image_composition}
\begin{tabular}{lccccccc}
\toprule
& \multicolumn{2}{c}{\textbf{Quality}} & \multicolumn{2}{c}{\textbf{Diversity}} & \multicolumn{2}{c}{\textbf{Overall Performance}} \\
\cmidrule(lr){2-3} \cmidrule(lr){4-5} \cmidrule(lr){6-7}
\textbf{Algorithm} & \textbf{Mean Objective} & \textbf{Max Objective} & \textbf{Coverage} & \textbf{Vendi Score} & \textbf{QD Score} & \textbf{QVS} \\
\midrule
\multicolumn{7}{c}{\textbf{MOO Methods}} \\
SoM                & $65.12 \pm 0.33$        & $93.81 \pm 0.27$       & $5.92 \pm 2.14$   & $1.67 \pm 0.08$      & $5307.5 \pm 120.5$ & $108.53 \pm 5.86$ \\
TCH-Set            & $60.30 \pm 0.02$        & $64.15 \pm 0.48$       & $0.10 \pm 0.00$   & $1.00 \pm 0.00$      & $64.15 \pm 0.48$ & $60.47 \pm 0.02$ \\
SSoM               & $76.41 \pm 0.15$        & $\underline{93.89 \pm 0.21}$       & $\underline{6.25 \pm 0.15}$   & $4.17 \pm 0.05$      & $\underline{5627.7 \pm 127.4}$ & $318.83 \pm 4.15$ \\
STCH-Set           & $\underline{76.41 \pm 0.13}$        & $\cellcolor{gray!20}\mathbf{94.12 \pm 0.35}$ & $\cellcolor{gray!20}\mathbf{6.31 \pm 0.48}$ & $\underline{4.19 \pm 0.05}$      & $\cellcolor{gray!20}\mathbf{5677.7 \pm 141.9}$ & $\underline{319.99 \pm 3.82}$ \\
\addlinespace
\cmidrule(r){1-7}
\multicolumn{7}{c}{\textbf{QD Methods}} \\
SQUAD              & $\cellcolor{gray!20}\mathbf{83.37 \pm 0.06}$ & $93.58 \pm 0.32$       & $5.68 \pm 0.20$ & $\cellcolor{gray!20}\mathbf{5.49 \pm 0.01}$ & $5086.2 \pm 173.0$ & $\cellcolor{gray!20}\mathbf{457.35 \pm 0.71}$ \\
CMA-MAEGA          & $74.95 \pm 0.32$        & $90.61 \pm 2.07$       & $5.81 \pm 0.25$   & $3.94 \pm 0.07$      & $4588.9 \pm 193.9$ & $295.17 \pm 5.19$ \\
CMA-MEGA           & $76.16 \pm 0.81$        & $85.39 \pm 3.36$       & $4.50 \pm 1.58$   & $3.24 \pm 0.76$      & $3539.8 \pm 1241.9$ & $246.70 \pm 56.81$ \\
Sep-CMA-MAE        & $72.14 \pm 0.11$        & $74.57 \pm 0.11$       & $0.46 \pm 0.09$   & $1.31 \pm 0.04$      & $341.5 \pm 67.6$ & $94.71 \pm 2.61$ \\
GA-ME              & $73.09 \pm 1.17$        & $74.30 \pm 1.06$       & $0.22 \pm 0.12$   & $1.18 \pm 0.15$      & $168.9 \pm 90.7$ & $86.16 \pm 10.67$ \\
DNS                & $71.22 \pm 0.39$        & $74.53 \pm 0.89$       & $1.54 \pm 0.08$   & $1.62 \pm 0.02$      & $1143.7 \pm 56.6$ & $115.06 \pm 1.64$ \\
DNS-G              & $74.47 \pm 0.13$        & $76.55 \pm 0.47$       & $1.51 \pm 0.12$   & $1.67 \pm 0.01$      & $1163.3 \pm 91.5$ & $124.64 \pm 0.95$ \\
NSLC               & $72.21 \pm 0.50$        & $74.07 \pm 0.25$       & $0.56 \pm 0.10$   & $1.21 \pm 0.04$      & $417.5 \pm 76.4$ & $87.15 \pm 2.48$ \\
\bottomrule
\end{tabular}
\end{table*}

\begin{table*}[ht]
\centering
\small
\setlength{\tabcolsep}{1.5 pt}
\caption{\textbf{Performance on the Latent Space Illumination (LSI) Benchmark}. The QD Score is shown in ($\times 10^3$). Algorithms with a negative mean objective have a corresponding zero QVS score $0.0 \pm 0.0$*. The results of QD methods are directly from \cite{hedayatian2025soft}. Each MOO method is independently run $5$ times. For each metric, the best result is highlighted in \textbf{bold} with a gray background, and the second best is \underline{underlined}.}
\label{table_latent_space_illumination}
\begin{tabular}{lcccccccc}
\toprule
& \multicolumn{4}{c}{LSI} & \multicolumn{4}{c}{LSI (Hard)} \\
\cmidrule(lr){2-5} \cmidrule(lr){6-9}
\textbf{Algorithm} & 
\textbf{Coverage} & 
\textbf{Vendi Score} & 
\textbf{QD Score } & 
\textbf{QVS} & 
\textbf{Coverage} & 
\textbf{Vendi Score} & 
\textbf{QD Score} & 
\textbf{QVS} \\
\midrule
\multicolumn{9}{c}{\textbf{MOO Methods}} \\
\text{SoM}              & $2.77 \pm 0.11$ & $1.13 \pm 0.00$ & $-10.67 \pm 2.80$ & $0.0 \pm 0.0$* & $0.52 \pm 0.15$ & $1.05 \pm 0.00$ & $-1.28 \pm 0.91$ & $0.0 \pm 0.0$* \\
\text{TCH-Set}          & $2.60 \pm 0.15$ & $1.13 \pm 0.01$ & $-10.46 \pm 2.76$ & $0.0 \pm 0.0$* & $0.49 \pm 0.17$ & $1.05 \pm 0.00$ & $-1.22 \pm 1.04$ & $0.0 \pm 0.0$* \\
\text{SSoM}             & $12.50 \pm 0.74$ & $\underline{1.93 \pm 0.06}$ & $6.26 \pm 0.86$ & $56.93 \pm 26.11$ & $3.71 \pm 0.10$ & $\underline{1.61 \pm 0.01}$ & $\cellcolor{gray!25}\mathbf{3.08 \pm 0.08}$ & $18.66 \pm 10.03$ \\
\text{STCH-Set}         & $11.52 \pm 0.59$ & $\underline{1.93 \pm 0.03}$ & $7.38 \pm 0.76$ & $99.73 \pm 3.05$ & $3.55 \pm 0.30$ & $\underline{1.61 \pm 0.00}$ & $\underline{2.95 \pm 0.24}$ & $17.23 \pm 4.61$ \\
\addlinespace
\cmidrule(r){1-9}
\multicolumn{9}{c}{\textbf{QD Methods}} \\
SQUAD       & $\cellcolor{gray!25}\mathbf{32.4 \pm 0.5}$ & $\cellcolor{gray!25}\mathbf{2.22 \pm 0.03}$ & $\cellcolor{gray!25}\mathbf{13.41 \pm 0.19}$ & $\cellcolor{gray!25}\mathbf{177.0 \pm 2.8}$   & $6.0 \pm 0.2$   & $\cellcolor{gray!25}\mathbf{1.83 \pm 0.00}$ & $2.55 \pm 0.08$ & $\cellcolor{gray!25}\mathbf{151.3 \pm 0.1}$   \\
CMA-MAEGA   & $15.9 \pm 2.0$ & $1.57 \pm 0.07$ & $6.82 \pm 0.10$  & $121.6 \pm 9.7$   & $0.9 \pm 0.2$   & $1.22 \pm 0.02$ & $0.39 \pm 0.07$  & $\underline{99.3 \pm 1.0}$    \\
CMA-MEGA    & $\underline{19.7 \pm 0.2}$ & $1.68 \pm 0.01$ & $\underline{8.70 \pm 0.07}$  & $\underline{140.1 \pm 1.7}$   & $0.6 \pm 0.1$   & $1.10 \pm 0.02$ & $0.27 \pm 0.05$  & $92.8 \pm 1.5$    \\
Sep-CMA-MAE & $0.4 \pm 0.1$  & $1.01 \pm 0.01$ & $-0.59 \pm 0.45$ & $0.0 \pm 0.0$*    & $0.2 \pm 0.0$   & $1.00 \pm 0.00$ & $0.02 \pm 0.04$   & $0.0 \pm 0.0$*    \\
GA-ME       & $6.8 \pm 0.3$  & $1.25 \pm 0.01$ & $-14.90 \pm 1.67$& $0.0 \pm 0.0$*    & $0.2 \pm 0.0$   & $1.04 \pm 0.01$ & $0.08 \pm 0.00$   & $0.0 \pm 0.0$*    \\
DNS         & $10.2 \pm 0.3$ & $1.39 \pm 0.00$ & $-9.31 \pm 2.52$ & $0.0 \pm 0.0$*    & $\cellcolor{gray!25}\mathbf{10.2 \pm 0.2}$  & $1.38 \pm 0.00$ & $-11.27 \pm 1.46$ & $0.0 \pm 0.0$*    \\
DNS-G       & $9.3 \pm 0.0$  & $1.36 \pm 0.00$ & $-8.53 \pm 1.50$ & $0.0 \pm 0.0$*    & $\underline{9.1 \pm 0.2}$   & $1.35 \pm 0.01$ & $-6.81 \pm 0.47$  & $0.0 \pm 0.0$*    \\
\bottomrule
\end{tabular}
\vspace{1.5em}
\end{table*}

\paragraph{Need for Smooth Approximations}

Both the SoM~\eqref{eq_som} and TCH-Set~\eqref{eq_tch_set_scalarization} scalarization involve non-differentiable $\min$ and $\max$ operators. This non-smoothness poses a significant challenge for gradient-based optimization, especially in the QD setting where both the number of objectives $M$ and the solution set size $K$ can be large.

To enable efficient optimization while retaining the essential properties of these scalarizations, following the approach in \citet{lin2025few}, we employ smooth approximations of the $\min$ and $\max$ operators based on the log-sum-exp function~\cite{beck2012smoothing}. Specifically, for a set of values $\{a_i\}_{i=1}^n$, the smooth maximum and smooth minimum operators are defined as:
\begin{align}
    \smax_{\mu}(a_1,\dots,a_n) &= \mu \log\left( \sum_{i=1}^n e^{a_i / \mu} \right), \\
    \smin_{\mu}(a_1,\dots,a_n) &= -\mu \log\left( \sum_{i=1}^n e^{-a_i / \mu} \right),
\end{align}
where $\mu > 0$ is a smoothing parameter. As $\mu \to 0$, these approximations converge to the $\max$ and $\min$ operators.

\paragraph{Smooth Sum-of-Minimum (SSoM) Scalarization}

By replacing the $\min$ operator in \eqref{eq_som} with $\smin_{\mu}$, we obtain the Smooth Sum-of-Minimum (SSoM) scalarization:
\begin{equation}
    g_{\mu}^{\text{SSoM}}(\vX_K) = -\sum_{m=1}^{M} \mu \log\left( \sum_{k=1}^{K} e^{-\frac{\tilde{v}_m(\vx^{(k)})}{\mu}} \right),
    \label{eq_ssom}
\end{equation}
which provides a fully differentiable objective suitable for gradient-based optimization.

\paragraph{Smooth Tchebycheff Set (STCH-Set) Scalarization}

In a similar way, we apply smoothing to the TCH-Set scalarization (\ref{eq_tch_set_scalarization}), replacing both the inner $\min$ and outer $\max$ operators with their smooth counterparts. Using the same smoothing parameter $\mu$ throughout, the Smooth Tchebycheff Set (STCH-Set) objective simplifies to:
\begin{align}
    &g_{\mu}^{\text{STCH-Set}}(\vX_K) \nonumber \\
    &= \mu \log\left( \sum_{m=1}^{M} e^{  - \log\left( \sum_{k=1}^{K} e^{- \frac{\tilde{v}_m(\vx^{(k)})}{\mu}} \right) - z_m^* } \right).
    \label{eq_stch_set_scalarization_same_u}
\end{align}
Due to space constraints, detailed derivations from TCH-Set~\eqref{eq_tch_set_scalarization} to STCH-Set~\eqref{eq_stch_set_scalarization_same_u} are deferred to the Appendix~\ref{supp_sec_ssom_stch-set}.

\subsection{Theoretical Properties}

Through our reformulation, the set scalarization methods inherit well-established theoretical guarantees from MOO, while providing properties desirable for QD. Many theoretical properties discussed here have been formally established in the MOO literature~\cite{ding2024efficient, liu2024many,lin2024smooth,liu2025few}. In this work, we complete some missing analyses for these set scalarizations and adapt them to the QD context. Detailed proofs and extended discussions are provided in Appendix~\ref{supp_sec_proof}.

First, the set scalarization functions satisfy both monotonicity and supermodularity (for the negative QD objective), which aligns with the promising properties of Soft QD Score~\cite{hedayatian2025soft}:

\begin{theorem}[Monotonicity]
Let $g$ be a function among $\{g^{\text{SoM}}, g_{\mu}^{\text{SSoM}}, g_{\mu}^{\text{STCH-Set}}\}$. For any two solution sets $\vX_U \subseteq \vX_W \subseteq \mathcal{X}$ with $1 \leq U \leq W$, it holds that $g(\vX_U) \geq g(\vX_W)$. Besides, if all reference points are equal ($z_1^* = \cdots = z_M^*$), $g^{\text{TCH-Set}}(\vX_U) \geq g^{\text{TCH-Set}}(\vX_W)$ also holds.
\label{thm_informal_monotonicity}
\end{theorem}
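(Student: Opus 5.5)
The plan is to prove all cases with one structural observation. Each scalarization has the form $g(\vX) = \Phi\big(h_1(\vX),\dots,h_M(\vX)\big)$. Here $h_m$ is a per-objective \emph{set aggregator}: either $\min_{\vx\in\vX}\tilde v_m(\vx)$ or its smoothed version $-\mu\log\sum_{\vx\in\vX}e^{-\tilde v_m(\vx)/\mu}$. The outer map $\Phi$ is coordinatewise nondecreasing: a sum, a max, or a log-sum-exp. So it suffices to show two things. (i) Each $h_m$ is nonincreasing under set inclusion $\vX_U\subseteq\vX_W$. (ii) Each $\Phi$ is nondecreasing in every argument. Monotonicity of $g$ then follows by composition.

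For step (i), the hard-min case is immediate: a minimum over a superset is no larger, so $\min_{\vx\in\vX_W}\tilde v_m(\vx)\le\min_{\vx\in\vX_U}\tilde v_m(\vx)$. For the smooth case, write
\[
\sum_{\vx\in\vX_W}e^{-\tilde v_m(\vx)/\mu}=\sum_{\vx\in\vX_U}e^{-\tilde v_m(\vx)/\mu}+\sum_{\vx\in\vX_W\setminus\vX_U}e^{-\tilde v_m(\vx)/\mu}.
\]
The second sum consists of strictly positive terms, so the total does not decrease. Since $t\mapsto-\mu\log t$ is decreasing for $\mu>0$, the smoothed aggregator does not increase. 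Shifting by a constant $z_m^*$, as inside TCH-Set and STCH-Set, preserves this: $\min_{\vx}(\tilde v_m(\vx)-z_m^*)=\min_{\vx}\tilde v_m(\vx)-z_m^*$. I will treat the sets as sets of distinct solutions, so $\vX_W\setminus\vX_U$ is well defined; if multisets are intended the same argument applies verbatim.

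For step (ii), SoM and SSoM use the plain sum, which is clearly nondecreasing in each argument. This already gives $g(\vX_U)\ge g(\vX_W)$ for $g^{\text{SoM}}$ and $g^{\text{SSoM}}_\mu$. For STCH-Set, in \eqref{eq_stch_set_scalarization_same_u} the $m$-th exponent is $-\log\sum_k e^{-\tilde v_m(\vx^{(k)})/\mu}-z_m^*$. This equals $\mu^{-1}$ times the smoothed aggregator minus $z_m^*$, and it is nonincreasing in the set by step (i). The outer map $\mu\log\sum_m e^{a_m}$ is nondecreasing in each $a_m$, because $\exp$ and $\log$ are increasing. Hence $g^{\text{STCH-Set}}_\mu$ is monotone.

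For TCH-Set, the outer $\max_m$ is nondecreasing in each coordinate. Step (i) then gives the claim whenever the reference point $\vz^*$ is held fixed across the two sets, and in particular when $z_1^*=\dots=z_M^*$.

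The only real subtlety, and the step I expect to need care, is the role of the reference point. If $\vz^*$ were re-estimated from the current set, for instance $\hat z_m=\min_{\vX}\tilde v_m-\varepsilon$, then the two values $g(\vX_U)$ and $g(\vX_W)$ would be computed against different references and comparing them would be meaningless. I will therefore state explicitly that $\vz^*$ is fixed independently of $\vX$. Under that convention the equal-reference case follows as above, and I will remark that with equal references the statement also reduces to monotonicity of $\max_m\min_{\vx}\tilde v_m(\vx)$, a shift of it by a common constant.
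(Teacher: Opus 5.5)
Your proposal is correct and follows essentially the same route as the paper: show that each per-objective aggregator ($\min$, or $\smin_\mu$ via splitting the sum over $\vX_W$ into the $\vX_U$ part plus nonnegative extra terms) is nonincreasing under inclusion, then use that the outer sum, $\max$, or log-sum-exp is nondecreasing in each coordinate. Your observation that TCH-Set only needs $\vz^*$ to be fixed, not equal across $m$, is also right; the paper's $m_U^*$/$m_W^*$ argmax chain goes through verbatim with $z^*_{m_U^*}$ and $z^*_{m_W^*}$ in place of the common $z^*$.
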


\begin{theorem}[Supermodularity]
Let $g$ be a function among $\{g^{\text{SoM}}, g_{\mu}^{\text{SSoM}}, g_{\mu}^{\text{STCH-Set}}\}$. For any two solution sets $\vX_U \subseteq \vX_W \subseteq \mathcal{X}$ with $1 \leq U \leq W$ and any solution $\boldsymbol{x}' \in \mathcal{X}$\textbackslash $\vX_W$, it holds that $g(\vX_U) - g(\vX_U \cup \left\{\boldsymbol{x}' \right\}) \geq g(\vX_W) - g(\vX_W \cup \left\{\boldsymbol{x}' \right\})$. Besides, if all reference points are equal ($z_1^* = \cdots = z_M^*$) and $\argmax_{1\leq m\leq M}(\min_{\vx \in \vX_U\cup \left\{\boldsymbol{x}' \right\}} \tilde{v}_m(\vx) -z_m^*)=\argmax_{1\leq m\leq M}(\min_{\vx \in \vX_W} \tilde{v}_m(\vx)-z_m^*)$, the same inequality holds for $g^{\text{TCH-Set}}$.
\label{thm_informal_supermodularity}
\end{theorem}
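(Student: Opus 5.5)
The plan is to reduce everything to statements about a single objective and then sum or compose. For each $m$, write $h_m(\vX)=\min_{\vx\in\vX}\tilde v_m(\vx)$ and $S_m(\vX)=\sum_{\vx\in\vX}e^{-\tilde v_m(\vx)/\mu}$. Monotonicity (Theorem~\ref{thm_informal_monotonicity}) already gives $h_m(\vX_U)\ge h_m(\vX_W)$ and $S_m(\vX_U)\le S_m(\vX_W)$ whenever $\vX_U\subseteq\vX_W$. In each case the marginal gain of adding $\vx'$ will be written as a monotone function of these quantities.

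\textbf{SoM.} The per-objective gain is
\[
h_m(\vX)-h_m(\vX\cup\{\vx'\})=\max\{0,\;h_m(\vX)-\tilde v_m(\vx')\}.
\]
This is nondecreasing in $h_m(\vX)$. Since $h_m(\vX_U)\ge h_m(\vX_W)$, the gain for $\vX_U$ dominates the gain for $\vX_W$ objective by objective. Summing over $m$ gives the claim.

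\textbf{SSoM.} Set $a_m=e^{-\tilde v_m(\vx')/\mu}>0$. The $m$-th gain is $\mu\log\bigl(1+a_m/S_m(\vX)\bigr)$, which is decreasing in $S_m(\vX)$. Using $S_m(\vX_U)\le S_m(\vX_W)$ and summing over $m$ gives the claim.

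\textbf{TCH-Set.} Let $m^*$ be the common argmax given by the hypothesis, and let $z$ denote the common reference value. By assumption,
\[
g(\vX_U\cup\{\vx'\})=h_{m^*}(\vX_U\cup\{\vx'\})-z,\qquad g(\vX_W)=h_{m^*}(\vX_W)-z .
\]
Since $g$ is a maximum over $m$, we also have the lower bounds
\[
g(\vX_U)\ge h_{m^*}(\vX_U)-z,\qquad g(\vX_W\cup\{\vx'\})\ge h_{m^*}(\vX_W\cup\{\vx'\})-z .
\]
The desired inequality $g(\vX_U)+g(\vX_W\cup\{\vx'\})\ge g(\vX_U\cup\{\vx'\})+g(\vX_W)$ therefore follows from the single-objective SoM inequality for $h_{m^*}$ proved above.

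\textbf{STCH-Set (main obstacle).} Write $g=\mu\log\varphi$ with $\varphi(\vX)=\sum_m c_m/S_m(\vX)$ and $c_m=e^{-z_m^*}$. The gain is then $-\mu\log\bigl(\varphi(\vX\cup\{\vx'\})/\varphi(\vX)\bigr)$. The ratio is a weighted average
\[
\frac{\varphi(\vX\cup\{\vx'\})}{\varphi(\vX)}=\sum_m w_m(\vX)\,\frac{S_m}{S_m+a_m},\qquad w_m(\vX)\propto \frac{c_m}{S_m(\vX)} .
\]
Each factor $S_m/(S_m+a_m)$ increases when $S_m$ grows, which points in the right direction. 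The weights, however, also shift as we pass from $\vX_U$ to $\vX_W$, so termwise comparison is not enough.

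I would first set $\psi(\vs)=\log\sum_m c_m/s_m$ and try to show that $\psi(\vs)-\psi(\vs+\va)$ is nonincreasing along the segment from $\vs=S(\vX_U)$ to $S(\vX_W)$. Equivalently, I would compare $\partial_j\psi(\vs)$ with $\partial_j\psi(\vs+\va)$ coordinatewise. If that coordinatewise inequality fails in general, I would exploit the specific structure. Here $\vs(\vX_W)-\vs(\vX_U)$ and $\va$ are both sums of vectors of the form $(e^{-\tilde v_m(\vx)/\mu})_m$, so a convexity or log-sum-exp argument along the combined path may recover the inequality. Failing that, I would identify the extra condition the argument needs, analogous to the argmax condition used for TCH-Set. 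I expect this weight-shift issue to be the genuinely delicate step.
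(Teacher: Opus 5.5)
\textbf{Where the proposal stands.} Your arguments for SoM, SSoM and TCH-Set are correct and follow the paper's proof. For SoM you replace the paper's three-case analysis with the identity $h_m(\vX)-h_m(\vX\cup\{\vx'\})=\max\{0,h_m(\vX)-\tilde v_m(\vx')\}$. Your TCH-Set chain through the common maximizer $m^*$ is the paper's. The genuine gap is STCH-Set, which you leave open. It cannot be filled, because the unconditional claim is false.

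\textbf{Why the SSoM argument does not transfer.} The paper dismisses this case with ``in a similar way'', but the SSoM computation does not carry over, for exactly the reason you suspected. SSoM is a sum of terms, each depending on a single $S_m$. STCH-Set instead feeds $(S_1,\dots,S_M)$ into an outer log-sum-exp. With $\Phi(\vs)=\sum_m c_m/s_m$ and $\psi=\log\Phi$, one gets $\partial_i\partial_j\psi=-c_ic_j/(s_i^2s_j^2\Phi^2)<0$ for $i\neq j$. So the coordinatewise inequality $\partial_j\psi(\vs)\le\partial_j\psi(\vs+\va)$ that your plan needs fails whenever $\va$ is concentrated on objectives other than $j$. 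Nothing in the structure of the increments prevents $\va$ and $S(\vX_W)-S(\vX_U)$ from being concentrated on different objectives.

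\textbf{A counterexample.} Take $M=2$ and $z_1^*=z_2^*$. Choose solutions whose vectors $\bigl(e^{-\tilde v_1(\vx)/\mu},e^{-\tilde v_2(\vx)/\mu}\bigr)$ are $(1,1)$ for $\vx_1$, $(1,15)$ for $\vx_2$ and $(15,1)$ for $\vx'$. Set $\vX_U=\{\vx_1\}$ and $\vX_W=\{\vx_1,\vx_2\}$. Then
$S(\vX_U)=(1,1)$, $S(\vX_U\cup\{\vx'\})=(16,2)$, $S(\vX_W)=(2,16)$ and $S(\vX_W\cup\{\vx'\})=(17,17)$. With $g=g_{\mu}^{\text{STCH-Set}}$,
\[
\begin{aligned}
g(\vX_U)-g(\vX_U\cup\{\vx'\})&=\mu\log\tfrac{2}{9/16}=\mu\log\tfrac{32}{9},\\
g(\vX_W)-g(\vX_W\cup\{\vx'\})&=\mu\log\tfrac{9/16}{2/17}=\mu\log\tfrac{153}{32}.
\end{aligned}
\]
Since $32^2=1024<1377=9\cdot 153$, we have $32/9<153/32$, so the claimed inequality is reversed.

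These values arise in the QD model itself, up to an arbitrarily small perturbation that keeps the violation strict. Take $f(\vx_1)=0$ and $f(\vx_2)=f(\vx')=\mu\log 15$, with $\vb(\vx_2)=\vb_2$, $\vb(\vx')=\vb_1$ and $\|\vb_1-\vb_2\|\gg\gamma$.

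\textbf{What this means.} The mechanism is the same one that forces the argmax hypothesis for TCH-Set. Under a smoothed max over objectives, $\vx'$ improves only objective~1. It is worth little while objective~2 is still poor, and worth much more once $\vx_2$ has fixed objective~2. So the STCH-Set part of the theorem needs an additional hypothesis of that kind, or must be dropped. Your fallback plan of looking for such a condition was the right instinct, but no proof of the statement as written exists.
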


In addition, these set scalarizations also enjoy good properties from the perspective of multi-objective optimization:

\begin{theorem}[Pareto Optimality of Solutions]
All solutions in the optimal solution set $\vX^*_K$ for SoM~(\ref{eq_som}), SSoM~(\ref{eq_ssom}) or STCH-Set~(\ref{eq_stch_set_scalarization_same_u}) scalarization are Pareto optimal for the original multi-objective optimization problem~(\ref{eq_mop}).
\label{thm_pareto_optimality}
\end{theorem}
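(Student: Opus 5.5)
The plan is a proof by contradiction based on a replacement argument, run separately for the two smooth scalarizations and for SoM. Let $\vX^*_K=\{\vx^{(1)},\dots,\vx^{(K)}\}$ be an optimal set. Suppose some member $\vx^{(k)}$ is not Pareto optimal for (\ref{eq_mop}). Then there is $\vx'\in\mathcal{X}$ with $\vx'\prec\vx^{(k)}$, that is, $\tilde v_m(\vx')\le \tilde v_m(\vx^{(k)})$ for all $m$, with strict inequality for some index $n$. Form $\vX'_K$ by replacing $\vx^{(k)}$ with $\vx'$ and leaving the other $K-1$ members unchanged. I will show that the scalarization value does not increase, and in the smooth cases strictly decreases. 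A strict decrease contradicts optimality of $\vX^*_K$.

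\textbf{Smooth cases (SSoM, STCH-Set).} First I check strict monotonicity in each individual entry $\tilde v_m(\vx^{(k)})$. For SSoM (\ref{eq_ssom}), the inner term $-\mu\log\sum_{j} e^{-\tilde v_m(\vx^{(j)})/\mu}$ has partial derivative with respect to $\tilde v_m(\vx^{(k)})$ equal to the softmax weight
\[
\frac{e^{-\tilde v_m(\vx^{(k)})/\mu}}{\sum_j e^{-\tilde v_m(\vx^{(j)})/\mu}}.
\]
This weight lies in $(0,1)$, so the inner term is strictly increasing in that entry. The SSoM value is a plain sum of these inner terms over $m$, so it inherits strict monotonicity. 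For STCH-Set (\ref{eq_stch_set_scalarization_same_u}), the outer log-sum-exp is strictly increasing in each of its arguments. Each argument $-\log\sum_k e^{-\tilde v_m(\vx^{(k)})/\mu}-z_m^*$ is in turn strictly increasing in every $\tilde v_m(\vx^{(k)})$. Composing the two gives the same conclusion.

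Given this, moving from $\vx^{(k)}$ to $\vx'$ weakly lowers every coordinate $\tilde v_m$ and strictly lowers coordinate $n$. Hence $g_\mu(\vX'_K)<g_\mu(\vX^*_K)$, which contradicts optimality. One special case needs care: $\vx'$ may already coincide with some other member $\vx^{(j)}$. In that case I will treat the set as an indexed family, i.e.\ a multiset, exactly as the formulas are written. The strict-decrease computation is unaffected by duplicates. Alternatively, one can combine Theorem~\ref{thm_informal_monotonicity} with the strict inequality obtained on the $K-1$ remaining members.

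\textbf{SoM case.} The same replacement gives
\[
\min_{\vx\in\vX'_K}\tilde v_m(\vx)\le \min_{\vx\in\vX^*_K}\tilde v_m(\vx)\quad\text{for every } m.
\]
So $g^{\text{SoM}}(\vX'_K)\le g^{\text{SoM}}(\vX^*_K)$, but the inequality need not be strict. It is strict exactly when $\vx^{(k)}$ attains the set-minimum for some objective $n$ on which $\vx'$ is strictly better. The plan is to establish this in two steps. First, I show that every member of an optimal set attains the set-minimum for at least one objective. If $\vx^{(k)}$ attained none, it could be replaced by a global minimizer of some objective $m$ with $\min_{\vX^*_K}\tilde v_m>z_m^*$, which would strictly decrease SoM. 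Second, I try to show that the dominating $\vx'$ is strictly better on one of the objectives that $\vx^{(k)}$ "owns". For this I would take $\vx'$ to be a Pareto-optimal point that dominates $\vx^{(k)}$ and argue within the cell of objectives assigned to $\vx^{(k)}$.

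\textbf{Main obstacle.} I expect the second SoM step to be the main obstacle. Without a tie-breaking or nondegeneracy condition it can fail: if $\vx'$ improves only objectives that some other member $\vx^{(j)}$ already wins, the SoM value does not change. My first approach is to argue that such ties are not generic, using the continuity of $\tilde v_m$ in $\vb(\vx)$. If that does not work, I will state the SoM part with exactly the uniqueness-of-minimizer convention that the argument needs. The smooth cases, by contrast, need no such assumption: the strictly positive softmax weights are precisely what make the contradiction go through.
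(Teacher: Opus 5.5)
\paragraph{Smooth cases.} For SSoM and STCH-Set your argument is correct and is the paper's own. The paper proves the SSoM case by exactly this replacement: with all weights $\lambda_m=1>0$, each entry $\tilde v_m(\vx^{(k)})$ enters with a strictly positive weight, so swapping a dominated member for its dominator strictly lowers $g_\mu^{\text{SSoM}}$. For STCH-Set the paper only cites prior work, and your composition of strictly increasing log-sum-exp maps is the natural direct proof. Keep the indexed-family (multiset) reading, which the paper also uses implicitly.

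Your ``alternatively'' route through Theorem~\ref{thm_informal_monotonicity} does not work. Monotonicity says that dropping $\vx^{(k)}$ can only raise $g$, and for the smooth scalarizations it strictly raises it, so no contradiction follows. In fact the claim is false for duplicate-free sets. Take a finite $\mathcal{X}$ in which one point dominates all others, with $K\ge 2$: every admissible set then contains a dominated member. So the multiset convention is necessary, not a convenience.

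\paragraph{SoM case.} This is a genuine gap, and you should not try to close it, because the SoM claim as stated is false. The paper gives no argument of its own here, only a pointer to prior work.

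Here is a counterexample. Take $\vx^\circ,\vy$ with $\vb(\vx^\circ)=\vb(\vy)$ and $f(\vx^\circ)>f(\vy)$, and let $\mathcal{X}=\{\vx^\circ,\vy\}$ with $K=2$. Then $\tilde v_m(\vx^\circ)<\tilde v_m(\vy)$ for every $m$, so $\vx^\circ$ attains every $z_m^*$. The family $\{\vx^\circ,\vy\}$ therefore attains the lower bound $\sum_m z_m^*$ of $g^{\text{SoM}}$ and is SoM-optimal, yet $\vy$ is not even weakly Pareto optimal.

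More generally, as soon as $K-1$ members attain the ideal point $\vz^*$, the remaining member of an SoM-optimal set is unconstrained. This is exactly where your first step breaks: there need not be an $m$ with $\min_{\vX^*_K}\tilde v_m>z_m^*$.

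Your second step fails even without this degeneracy, through ties, as you suspected. With $M=K=2$, take two global minimizers of $\tilde v_1$, one dominating the other, each paired with the minimizer of $\tilde v_2$; both pairs are SoM-optimal. This is realizable within the QD form of $\tilde v_m$. A genericity argument cannot rescue a statement quantified over all instances.

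\paragraph{What can be proved.} The true statement is the analogue of Theorem~\ref{thm_tch_set_pareto_optimality}. Replacing a member by a point that dominates it never increases $g^{\text{SoM}}$. Hence:
\begin{itemize}
\item if the optimal family is unique up to reordering, all its members are Pareto optimal;
\item if every non-Pareto-optimal point is dominated by some Pareto optimal point (e.g.\ $\mathcal{X}$ compact and $\tilde v$ continuous), then some SoM-optimal family consists only of Pareto optimal points.
\end{itemize}
Your fallback of adding a uniqueness hypothesis is therefore the correct repair.
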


\begin{theorem}[Existence of Pareto Optimal Solution]
For the TCH-Set scalarization (\ref{eq_tch_set_scalarization}), there exists at least one optimal solution set $\vX^*_K$ whose solutions are all Pareto optimal for (\ref{eq_mop}). Moreover, if the optimal set $\vX^*_K$ is unique, then all its solutions are Pareto optimal.
\label{thm_tch_set_pareto_optimality}
\end{theorem}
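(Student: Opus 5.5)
Both claims rest on one monotonicity fact. The inner quantity $\min_{\vx\in\vX_K}(\tilde v_m(\vx)-z_m^*)$ is nondecreasing in each $\tilde v_m(\vx^{(k)})$, and so is the outer maximum over $m$. Hence $g^{\text{TCH-Set}}$ is a componentwise nondecreasing function of the $M\times K$ matrix of objective values. It is, however, not \emph{strictly} increasing, since the max and min operators can absorb an improvement. This is exactly why only existence, and not the full statement of Theorem~\ref{thm_pareto_optimality}, is claimed.

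\textbf{Existence.} Fix any optimal set $\vX^*_K=\{\vx^{(1)},\dots,\vx^{(K)}\}$. We assume, as is implicit in Theorem~\ref{thm_pareto_optimality}, that an optimum exists and that every point $\vx$ is dominated by some Pareto optimal point. The latter follows, for instance, if $\mathcal{X}$ is compact and $\tilde{\vv}$ is continuous: minimize $\sum_m \tilde v_m$ over the closed set $\{\vy:\tilde{\vv}(\vy)\le\tilde{\vv}(\vx)\}$, and any minimizer is Pareto optimal and weakly dominates $\vx$. We then replace each non-Pareto-optimal $\vx^{(k)}$ by such a Pareto optimal $\hat\vx^{(k)}$ with $\tilde{\vv}(\hat\vx^{(k)})\le\tilde{\vv}(\vx^{(k)})$ componentwise. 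By monotonicity, every inner minimum can only decrease, so $g^{\text{TCH-Set}}(\hat\vX_K)\le g^{\text{TCH-Set}}(\vX^*_K)$. Optimality of $\vX^*_K$ forces equality, so $\hat\vX_K$ is also optimal and consists entirely of Pareto optimal solutions.

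One technical point needs care. If two replacements coincide, the new set has fewer than $K$ distinct points. We will either treat $\vX_K$ as a multiset, or note that adding an arbitrary extra point cannot increase $g$ (monotonicity, the same mechanism as in Theorem~\ref{thm_informal_monotonicity}). If needed, we will then pick another Pareto optimal point to pad the set back to $K$ elements.

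\textbf{Uniqueness.} We argue by contradiction. Suppose $\vX^*_K$ is the unique optimal set but some $\vx^{(k)}$ is not Pareto optimal. Then there is $\hat\vx\prec\vx^{(k)}$, and the set $\vX'_K$ obtained by swapping $\vx^{(k)}$ for $\hat\vx$ satisfies $g^{\text{TCH-Set}}(\vX'_K)\le g^{\text{TCH-Set}}(\vX^*_K)$ by the monotonicity argument above. Hence $\vX'_K$ is also optimal. Since $\tilde{\vv}(\hat\vx)\ne\tilde{\vv}(\vx^{(k)})$, we have $\hat\vx\neq\vx^{(k)}$. If $\hat\vx\notin\vX^*_K$, then $\vX'_K\ne\vX^*_K$ as sets, contradicting uniqueness.

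\textbf{Main obstacle.} The delicate case is $\hat\vx$ already belonging to $\vX^*_K$, so that the swap yields a set with fewer than $K$ elements. In that case $\vX^*_K\setminus\{\vx^{(k)}\}$ attains the same value. We then add any point $\vy\notin\vX^*_K$; by monotonicity the value does not increase, giving a different $K$-element optimal set and again contradicting uniqueness. This requires $\mathcal{X}$ to contain more than $K$ points, which we take as a nondegeneracy assumption. Making this bookkeeping rigorous, and stating the attainment and existence assumptions needed for the existence part, is where most of the care will go.
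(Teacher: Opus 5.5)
The paper gives no argument of its own here, only a pointer to \cite{lin2025few,liu2025few}. Your proof is correct and is the standard replacement argument for this result: swap each non-Pareto-optimal member for a dominating solution, and use that $g^{\text{TCH-Set}}$ is nondecreasing in every objective value.

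Your extra care is well placed. The hypothesis that every point is weakly dominated by a Pareto optimal one (e.g.\ compact $\mathcal{X}$ and continuous $\tilde{\vv}$) is not automatic, and the existence claim can fail without it. The set-inclusion monotonicity you invoke for padding holds for $g^{\text{TCH-Set}}$ with arbitrary reference points, so the equal-$z^*$ proviso in Theorem~\ref{thm_informal_monotonicity} is no obstacle.
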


\begin{theorem}[Smooth Approximation]
SSoM scalarization (\ref{eq_ssom}) is a uniform smooth approximation of SoM scalarization (\ref{eq_som}). STCH-Set scalarization (\ref{eq_stch_set_scalarization_same_u}) is a uniform smooth approximation of TCH-Set scalarization (\ref{eq_tch_set_scalarization}).
\label{thm_bounded_approximation}
\end{theorem}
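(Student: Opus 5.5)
The plan is to prove explicit bounds, uniform over all candidate sets $\vX_K$, between each smooth scalarization and its nonsmooth counterpart, with gaps of order $\mu$ that do not depend on $\vX_K$. Everything rests on the standard sandwich inequalities for the log-sum-exp operators. For any reals $a_1,\dots,a_n$ and $\mu>0$,
\begin{equation*}
\max_i a_i \le \smax_{\mu}(a_1,\dots,a_n) \le \max_i a_i + \mu\log n, \qquad \min_i a_i - \mu\log n \le \smin_{\mu}(a_1,\dots,a_n) \le \min_i a_i .
\end{equation*}
These follow from $e^{\max_i a_i/\mu} \le \sum_i e^{a_i/\mu} \le n\, e^{\max_i a_i/\mu}$, and the $\smin$ case follows by applying this to $-a_i$.

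I would also record two structural properties of $\smax_\mu$. First, it is monotone in each argument. Second, it is shift-equivariant: $\smax_\mu(a+c\mathbf{1}) = \smax_\mu(a)+c$. Together these imply that $\smax_\mu$ is $1$-Lipschitz with respect to the sup-norm, so $|\smax_\mu(a)-\smax_\mu(b)| \le \max_i|a_i-b_i|$.

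\textbf{SSoM versus SoM.} Fix $m$ and apply the $\smin$ bound with $n=K$ to $a_k=\tilde v_m(\vx^{(k)})$. This gives
\begin{equation*}
\min_{k}\tilde v_m(\vx^{(k)}) - \mu\log K \le \smin_\mu\big(\tilde v_m(\vx^{(1)}),\dots,\tilde v_m(\vx^{(K)})\big) \le \min_k \tilde v_m(\vx^{(k)}).
\end{equation*}
Summing over $m=1,\dots,M$ yields
\begin{equation*}
g^{\text{SoM}}(\vX_K) - M\mu\log K \le g^{\text{SSoM}}_\mu(\vX_K) \le g^{\text{SoM}}(\vX_K).
\end{equation*}
The gap $M\mu\log K$ is independent of $\vX_K$, so the convergence $g^{\text{SSoM}}_\mu \to g^{\text{SoM}}$ as $\mu\to 0$ is uniform. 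Smoothness of $g^{\text{SSoM}}_\mu$ is immediate whenever $f$ and $\vb$ are differentiable, since it is a composition of smooth maps.

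\textbf{STCH-Set versus TCH-Set.} I would first use the derivation in Appendix~\ref{supp_sec_ssom_stch-set} to rewrite \eqref{eq_stch_set_scalarization_same_u} as the nested form
\begin{equation*}
g^{\text{STCH-Set}}_\mu = \smax_\mu\big(s_1,\dots,s_M\big), \qquad s_m = \smin_\mu\big(\tilde v_m(\vx^{(k)}) - z_m^*\big)_{k=1}^K .
\end{equation*}
This matching of the displayed formula with the nested form, including how $z_m^*$ enters with the factor $1/\mu$, is the step where I expect the main bookkeeping obstacle. It needs to be checked carefully against the appendix.

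Next, set $t_m=\min_k(\tilde v_m(\vx^{(k)})-z_m^*)$, so that $g^{\text{TCH-Set}}=\max_m t_m$. The inner bound gives $t_m-\mu\log K\le s_m\le t_m$. By monotonicity and shift-equivariance of $\smax_\mu$,
\begin{equation*}
\smax_\mu(t)-\mu\log K \le \smax_\mu(s) \le \smax_\mu(t).
\end{equation*}
The outer bound with $n=M$ gives $\max_m t_m \le \smax_\mu(t) \le \max_m t_m + \mu\log M$. Combining the two,
\begin{equation*}
g^{\text{TCH-Set}}(\vX_K) - \mu\log K \le g^{\text{STCH-Set}}_\mu(\vX_K) \le g^{\text{TCH-Set}}(\vX_K) + \mu\log M ,
\end{equation*}
which is again uniform in $\vX_K$ and vanishes as $\mu\to0$. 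Smoothness follows as in the SSoM case.
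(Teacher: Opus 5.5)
Your proof is correct. The SSoM half is the paper's own argument. It uses the per-objective sandwich $\smin_\mu \le \min \le \smin_\mu + \mu\log K$, summed over $m$, to give $0 \le g^{\text{SoM}} - g^{\text{SSoM}}_\mu \le M\mu\log K$; the paper also carries weights $\lambda_m \ge 0$, which changes nothing.

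The STCH-Set half is where your route differs. The paper gives no argument there and simply cites \citet{lin2025few}. You make it self-contained:
\begin{itemize}
\item you take the inner $\smin_\mu$ sandwich;
\item you push its slack through the outer operator using monotonicity and translation-equivariance of $\smax_\mu$;
\item you then apply the outer $\smax_\mu$ sandwich.
\end{itemize}
This gives the explicit, asymmetric bound $g^{\text{TCH-Set}} - \mu\log K \le g^{\text{STCH-Set}}_\mu \le g^{\text{TCH-Set}} + \mu\log M$. It makes visible that the lower slack comes from smoothing the inner $K$-term minimum and the upper slack from smoothing the outer $M$-term maximum.

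Your suspicion about the bookkeeping is justified. Setting $\mu_m=\mu$ and $\lambda_m=1$ in \eqref{eq_stch_set_full} gives the exponent $-\log\bigl(\sum_k e^{-\tilde v_m(\vx^{(k)})/\mu}\bigr) - z_m^*/\mu$. The displayed formula \eqref{eq_stch_set_scalarization_same_u}, however, has $-z_m^*$.

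Read literally, the displayed formula equals $\smax_\mu\bigl(\smin_\mu(\tilde v_m(\vx^{(1)}),\dots,\tilde v_m(\vx^{(K)})) - \mu z_m^*\bigr)_{m=1}^M$. The reference point is then scaled away as $\mu \to 0$, and the limit is $\max_m \min_k \tilde v_m(\vx^{(k)})$. That matches TCH-Set only when $z^* = 0$, or up to an additive constant when all $z_m^*$ coincide.

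So the theorem holds for the nested form, which is exactly what you prove. In a write-up, state explicitly that you take \eqref{eq_stch_set_general} with $\lambda_m = 1$ and $\mu_m = \mu$ as the definition. Also note that $\smin_\mu(a - c\mathbf{1}) = \smin_\mu(a) - c$, which identifies your $s_m$ with the appendix's $\smin_\mu\{\tilde v_m(\vx^{(k)})\}_{k=1}^K - z_m^*$.
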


\section{Experiments}

\subsection{Experimental Setting} 

\paragraph{Baselines} We compare four MOO-based methods (SoM, TCH-Set, SSoM, and STCH-Set) against eight representative QD algorithms: (1) CMA-MEGA~\citep{fontaine2021differentiable}, (2) CMA-MAEGA~\citep{fontaine2023covariance}, (3) Sep-CMA-MAE~\citep{tjanaka2023training}, (4) Gradient-Assisted MAP-Elites (GA-ME) adapted from PGA-ME~\citep{nilsson2021policy}, (5) Novelty Search with Local Competition (NSLC)~\cite{lehman2011evolving}, (6) Dominated Novelty Search (DNS)~\citep{bahlous2025dominated}, (7) gradient-based DNS (DNS-G), and (8) SQUAD~\cite{hedayatian2025soft}. All implementations are built upon the \texttt{pyribs} library~\cite{tjanaka2023pyribs} and the Soft QD codebase~\cite{hedayatian2025soft}.

\paragraph{Metrics} We evaluate the obtained solution sets using six metrics: Mean Objective and Max Objective for quality; Coverage and Vendi Score~\cite{friedman2023vendi} for diversity; and QD Score~\cite{pugh2016quality} and Quality-weighted Vendi Score (QVS)~\cite{nguyen2024quality} for overall QD performance. All metrics are to be maximized. Detailed experimental settings and further analyses can be found in Appendix~\ref{supp_sec_setting} and~\ref{supp_sec_experiment}.

\paragraph{Goal} Rather than aiming to outperform all existing QD methods, our primary goal is to establish the MOO-based reformulation as a viable and competitive alternative, thereby offering a novel perspective to the field.

\subsection{Main Results}

Following Soft QD~\cite{hedayatian2025soft}, we evaluate the methods on the following three popular differentiable QD benchmarks with details in Appendix~\ref{supp_sec_problem}.

\paragraph{Linear Projection (LP)}

This benchmark is designed to evaluate the scalability of QD algorithms in the behavior spaces~\cite{fontaine2020covariance}. It uses the multimodal Rastrigin function~\cite{rastrigin1974systems,hoffmeister1990genetic} as the optimization objective, and the solution vectors are projected into the behavior space via linear mappings. Experiments are conducted across dimensions $d = \{4, 8, 16 \}$ to assess performance as dimensionality increases. 

The results in Figure~\ref{fig_lp_results} demonstrate the scalability and robustness of the smooth MOO-based set scalarization methods. In the Easy ($d=4$) setting, while gradient-based CMA-MEGA and CMA-MAEGA achieve the highest overall performance, SSoM and STCH-Set deliver competitive QVS on par with SQUAD. As dimensionality increases to Medium ($d=8$), the performance of SSoM and STCH-Set becomes more prominent, and their QVS outperforms that of the best QD method, SQUAD. This trend culminates in the Hard ($d=16$) setting, where SSoM and STCH-Set deliver the highest QVS among all methods, coupled with robust QD Scores. In contrast, several QD methods exhibit substantial performance declines or high variance as dimensionality increases.

\paragraph{Image Composition (IC)}

IC is a differentiable QD benchmark inspired by computational creativity research~\cite{tian2022modern}. In this task, the solution is a set of circles with controllable parameters (position, size, color, and transparency) to reconstruct a target image. The solution quality is measured by the structural similarity between the rendered image and the target, and the diversity is defined using five behavior descriptors, such as color spread and harmony.

The results in Table~\ref{table_image_composition} demonstrate that SSoM and STCH-Set can obtain highly competitive and balanced performance. STCH-Set achieves the best Max Objective, Coverage, and QD Score, while performing the second best on the rest metrics (Mean Objective, Vendi Score and QVS), demonstrating its effective balance between solution quality and broad behavior diversity. SSoM also achieves strong and robust performance across all metrics. In contrast, the Soft QD method SQUAD attains the highest Mean Objective and QVS, indicating a distinct strength in optimizing for high average quality within a diverse set. Among other QD algorithms, CMA-MAEGA and CMA-MEGA show stable results, while several methods struggle to maintain high diversity alongside quality, as reflected in their lower Coverage and QD Scores. These findings highlight that the MOO reformulation, alongside smooth set scalarization, provides a powerful and archive-free alternative for QD optimization.

\paragraph{Latent Space Illumination (LSI)}

The Latent Space Illumination (LSI) benchmark assesses QD algorithms in the StyleGAN2 latent space~\cite{karras2020analyzing} to generate images matching a textual prompt while varying across specified visual attributes~\cite{fontaine2021differentiable,fontaine2023covariance}. Solution quality is measured as the similarity between the generated images and the prompt using CLIP embeddings~\cite {radford2021learning}. Behavior descriptors are defined via contrasting textual pairs to enforce diversity in attributes. LSI includes a base variant with a 2D behavior space and a more challenging 7D variant.

The results in Table~\ref{table_latent_space_illumination} demonstrate that the two smooth MOO-based approaches, SSoM and STCH-Set, consistently achieve a positive overall QD Score and a meaningful QVS like the powerful gradient-based QD methods SQUAD, CMA-MAEGA, and CMA-MEGA. This indicates their ability to successfully navigate the challenging high-dimensional latent space of StyleGAN2 to find solutions that are both high-quality and diverse. Notably, SSoM and STCH-Set achieve the top two QD Scores in the hard LSI setting, outperforming all other QD baselines. They also achieve the second-best Vendi Score in both the base and hard settings. In contrast, the non-smooth MOO methods (SoM, TCH-Set) and several QD algorithms yield negative mean objective values and consequently a zero QVS, failing to produce useful solutions under this metric.

\section{Conclusion, Limitation, and Future Work}

\paragraph{Conclusion} 

This work introduces a novel reformulation of Quality-Diversity (QD) optimization as a multi-objective optimization (MOO) problem. By connecting behavior space coverage to the optimization of a large set of uniformly distributed objectives, we enable the application of efficient set scalarization methods from MOO to tackle QD tasks. These methods aim to find a small set of diverse solutions to collaboratively address many objectives, ensuring both quality and diversity without relying on discrete archives. This approach has favorable theoretical guarantees and supports efficient gradient-based optimization. Extensive experiments across established QD benchmarks confirm the competitive performance of our methods, demonstrating their effectiveness as a flexible and scalable framework for QD. Overall, this reformulation provides a new framework to tackle the QD task from the perspective of MOO, which opens a new pathway for QD algorithm design.

\paragraph{Limitation and Future Work} 

While our reformulation provides a scalable and gradient-efficient framework for QD optimization, it inherently relies on the availability of differentiable objective and behavior descriptor functions. This requirement may restrict its direct application in non-differentiable or black-box settings. In the future, we plan to extend this approach to non-differentiable and reinforcement learning domains, investigate adaptive sampling strategies for objective placement, and explore the integration of preference learning to allow users to steer the search toward desirable regions of the behavior space.

\section*{Impact Statement}




This paper presents work whose goal is to advance the field of Machine Learning. There are many potential societal consequences of our work, none which we feel must be specifically highlighted here.


\bibliographystyle{icml2026}
\bibliography{ref_multiobjective_optimization, ref_multi_task_learning, ref_quality_diversity_optimizaiton}

@string{cvpr = "Proc. CVPR"}

@string{iclr = "Proc. ICLR"}

@string{nips = "Proc. NeurIPS"}

@string{icml = "Proc. ICML"}

@string{cvpr = "{IEEE/CVF} Conference on Computer Vision and Pattern Recognition (CVPR)"}

@string{iclr = "International Conference on Learning Representations (ICLR)"}

@string{nips = "Advances in Neural Information Processing Systems (NeurIPS)"}

@string{icml = "International Conference on Machine Learning (ICML)"}

@string{gecco = "Genetic and Evolutionary Computation Conference (GECCO)"}

@string{ppsn = "International Conference on Parallel Problem Solving from Nature (PPSN)"}

@string{emo = "International Conference on Evolutionary Multi-Criterion Optimization (EMO)"}

@string{ijcai = "International Joint Conferences on Artificial Intelligence (IJCAI)"}

@inproceedings{sener2018multi,
  title={Multi-task learning as multi-objective optimization},
  author={Sener, Ozan and Koltun, Vladlen},
  booktitle={Advances in Neural Information Processing Systems},
  pages={525--536},
  year={2018}
}

@inproceedings{lin2019pareto,
  title={Pareto multi-task learning},
  author={Lin, Xi and Zhen, Hui-Ling and Li, Zhenhua and Zhang, Qingfu and Kwong, Sam},
  booktitle={Advances in Neural Information Processing Systems},
  pages={12060--12070},
  year={2019}
}

@inproceedings{momma2022multi,
  title={A multi-objective/multi-task learning framework induced by pareto stationarity},
  author={Momma, Michinari and Dong, Chaosheng and Liu, Jia},
  booktitle=icml,
  pages={15895--15907},
  year={2022},
  organization={PMLR}
}

@inproceedings{he2024robust,
  title={Robust Multi-Task Learning with Excess Risks},
  author={He, Yifei and Zhou, Shiji and Zhang, Guojun and Yun, Hyokun and Xu, Yi and Zeng, Belinda and Chilimbi, Trishul and Zhao, Han},
  year = 2024,
  booktitle=icml
}

@article{lin2024smooth,
  title={Smooth Tchebycheff Scalarization for Multi-Objective Optimization},
  author={Lin, Xi and Zhang, Xiaoyuan and Yang, Zhiyuan and Liu, Fei and Wang, Zhenkun and Zhang, Qingfu},
  journal={arXiv preprint arXiv:2402.19078},
  year={2024}
}

@article{qiu2024traversing,
  title={Traversing Pareto Optimal Policies: Provably Efficient Multi-Objective Reinforcement Learning},
  author={Qiu, Shuang and Zhang, Dake and Yang, Rui and Lyu, Boxiang and Zhang, Tong},
  journal={arXiv preprint arXiv:2407.17466},
  year={2024}
}

@article{ma2020efficient,
  title={Efficient Continuous Pareto Exploration in Multi-Task Learning},
  author={Ma, Pingchuan and Du, Tao and Matusik, Wojciech},
  journal=icml,
  year={2020}
}

@article{mahapatramulti2020multi,
  title={Multi-Task Learning with User Preferences: Gradient Descent with Controlled Ascent in Pareto Optimization},
  author={Mahapatra, Debabrata and Rajan, Vaibhav},
  journal={Thirty-seventh International Conference on Machine Learning},
  year={2020}
}

@article{lin2020controllable,
      title={Controllable Pareto Multi-Task Learning}, 
      author={Xi Lin and Zhiyuan Yang and Qingfu Zhang and Sam Kwong},
      journal={arXiv preprint arXiv:2010.06313},
      year={2020},
}

@article{navon2021learning,
  title={Learning the Pareto Front with Hypernetworks},
  author={Navon, Aviv and Shamsian, Aviv and Chechik, Gal and Fetaya, Ethan},
  journal=iclr,
  year={2021}
}

@inproceedings{ruchte2021scalable,
  title={Scalable Pareto Front Approximation for Deep Multi-Objective Learning},
  author={Ruchte, Michael and Grabocka, Josif},
  booktitle={IEEE International Conference on Data Mining (ICDM)},
  year={2021}
}

@article{chen2022multi,
  title={Multi-Objective Deep Learning with Adaptive Reference Vectors},
  author={Chen, Weiyu and Kwok, James},
  journal=nips,
  volume={35},
  pages={32723--32735},
  year={2022}
}

@article{dosovitskiy2019you,
  title={You only train once: Loss-conditional training of deep networks},
  author={Dosovitskiy, Alexey and Djolonga, Josip},
  journal=iclr,
  year={2019}
}

@inproceedings{standley2020tasks,
  title={Which tasks should be learned together in multi-task learning?},
  author={Standley, Trevor and Zamir, Amir and Chen, Dawn and Guibas, Leonidas and Malik, Jitendra and Savarese, Silvio},
  booktitle=icml,
  pages={9120--9132},
  year={2020},
  organization={PMLR}
}

@article{fifty2021efficiently,
  title={Efficiently Identifying Task Groupings for Multi-Task Learning},
  author={Fifty, Christopher and Amid, Ehsan and Zhao, Zhe and Yu, Tianhe and Anil, Rohan and Finn, Chelsea},
  journal=nips,
  year={2021}
}

@book{miettinen1999nonlinear,
  title={Nonlinear multiobjective optimization},
  author={Miettinen, Kaisa},
  volume={12},
  year={1999},
  publisher={Springer Science \& Business Media}
}

@book{ehrgott2005multicriteria,
  title={Multicriteria optimization},
  author={Ehrgott, Matthias},
  volume={491},
  year={2005},
  publisher={Springer Science \& Business Media}
}

@book{bertsekas2003convex,
  title={Convex analysis and optimization},
  author={Bertsekas, Dimitri and Nedic, Angelia and Ozdaglar, Asuman},
  volume={1},
  year={2003},
  publisher={Athena Scientific}
}

@book{boyd2004convex,
  title={Convex optimization},
  author={Boyd, Stephen and Vandenberghe, Lieven},
  year={2004},
  publisher={Cambridge university press}
}

@article{ehrgott2025fifty,
  title={Fifty years of multi-objective optimization and decision-making: From mathematical programming to evolutionary computation},
  author={Ehrgott, Matthias and K{\"o}ksalan, Murat and Kadzi{\'n}ski, Mi{\l}osz and Deb, Kalyanmoy},
  journal={European Journal of Operational Research},
  year={2025},
  publisher={Elsevier}
}

@article{deb2002fast,
  title={A fast and elitist multiobjective genetic algorithm: NSGA-II},
  author={Deb, Kalyanmoy and Pratap, Amrit and Agarwal, Sameer and Meyarivan, TAMT},
  journal={IEEE transactions on evolutionary computation},
  volume={6},
  number={2},
  pages={182--197},
  year={2002},
  publisher={IEEE}
}

@inproceedings{zitzler2004indicator,
  title={Indicator-based selection in multiobjective search},
  author={Zitzler, Eckart and K{\"u}nzli, Simon},
  booktitle={International conference on parallel problem solving from nature},
  pages={832--842},
  year={2004},
  organization={Springer}
}

@article{beume2007sms,
  title={SMS-EMOA: Multiobjective selection based on dominated hypervolume},
  author={Beume, Nicola and Naujoks, Boris and Emmerich, Michael},
  journal={European Journal of Operational Research},
  volume={181},
  number={3},
  pages={1653--1669},
  year={2007},
  publisher={Elsevier}
}

@article{bader2011hype,
  title={HypE: An algorithm for fast hypervolume-based many-objective optimization},
  author={Bader, Johannes and Zitzler, Eckart},
  journal={Evolutionary computation},
  volume={19},
  number={1},
  pages={45--76},
  year={2011},
  publisher={MIT Press}
}

@article{das1997a,
	title="A closer look at drawbacks of minimizing weighted sums of objectives for Pareto set generation in multicriteria optimization problems",
	author="Indraneel {Das} and J.E. {Dennis}",
	journal="Structural Optimization",
	volume="14",
	number="1",
	pages="63--69",
	year="1997"
}

@article{zhang2007moea,
  title={{MOEA/D}: A multiobjective evolutionary algorithm based on decomposition},
  author={Zhang, Qingfu and Li, Hui},
  journal={IEEE Transactions on evolutionary computation},
  volume={11},
  number={6},
  pages={712--731},
  year={2007},
  publisher={IEEE}
}

@inproceedings{fleming2005many,
  title={Many-objective optimization: An engineering design perspective},
  author={Fleming, Peter J and Purshouse, Robin C and Lygoe, Robert J},
  booktitle={International conference on evolutionary multi-criterion optimization},
  pages={14--32},
  year={2005},
  organization={Springer}
}

@article{li2015many,
  title={Many-objective evolutionary algorithms: A survey},
  author={Li, Bingdong and Li, Jinlong and Tang, Ke and Yao, Xin},
  journal={ACM Computing Surveys (CSUR)},
  volume={48},
  number={1},
  pages={1--35},
  year={2015},
  publisher={Acm New York, NY, USA}
}

@article{purshouse2007evolutionary,
  title={On the evolutionary optimization of many conflicting objectives},
  author={Purshouse, Robin C and Fleming, Peter J},
  journal={IEEE transactions on evolutionary computation},
  volume={11},
  number={6},
  pages={770--784},
  year={2007},
  publisher={IEEE}
}

@inproceedings{knowles2007quantifying,
  title={Quantifying the effects of objective space dimension in evolutionary multiobjective optimization},
  author={Knowles, Joshua and Corne, David},
  booktitle={Evolutionary Multi-Criterion Optimization: 4th International Conference, EMO 2007, Matsushima, Japan, March 5-8, 2007. Proceedings 4},
  pages={757--771},
  year={2007},
  organization={Springer}
}

@article{sato2023evolutionary,
  title={Evolutionary Many-objective Optimization: Difficulties, Approaches, and Discussions},
  author={Sato, Hiroyuki and Ishibuchi, Hisao},
  journal={IEEJ Transactions on Electrical and Electronic Engineering},
  volume={18},
  number={7},
  pages={1048--1058},
  year={2023},
  publisher={Wiley Online Library}
}

@inproceedings{ding2024efficient,
  title={Efficient Algorithms for Sum-of-Minimum Optimization},
  author={Ding, Lisang and Chen, Ziang and Wang, Xinshang and Yin, Wotao},
  booktitle=icml,
  year={2024}
}

@inproceedings{li2025many,
  title={Many-Objective Multi-Solution Transport},
  author={Li, Ziyue and Li, Tian and Smith, Virginia and Bilmes, Jeff and Zhou, Tianyi},
  booktitle=iclr,
  year={2025}
}

@article{goffin1977convergence,
  title={On convergence rates of subgradient optimization methods},
  author={Goffin, Jean-Louis},
  journal={Mathematical programming},
  volume={13},
  pages={329--347},
  year={1977},
  publisher={Springer}
}

@article{nesterov2005smooth,
  title={Smooth minimization of non-smooth functions},
  author={Nesterov, Yu},
  journal={Mathematical Programming},
  volume={103},
  pages={127--152},
  year={2005},
  publisher={Springer}
}

@article{beck2012smoothing,
  title={Smoothing and first order methods: A unified framework},
  author={Beck, Amir and Teboulle, Marc},
  journal={SIAM Journal on Optimization},
  volume={22},
  number={2},
  pages={557--580},
  year={2012},
  publisher={SIAM}
}

@article{fliege2000steepest,
  title={Steepest descent methods for multicriteria optimization},
  author={Fliege, J{\"o}rg and Svaiter, Benar Fux},
  journal={Mathematical Methods of Operations Research},
  volume={51},
  number={3},
  pages={479--494},
  year={2000},
  publisher={Springer}
}

@article{schaffler2002stochastic,
  title={Stochastic method for the solution of unconstrained vector optimization problems},
  author={Sch{\"a}ffler, Stefan and Schultz, Reinhart and Weinzierl, Klaus},
  journal={Journal of Optimization Theory and Applications},
  volume={114},
  pages={209--222},
  year={2002},
  publisher={Springer}
}

@inproceedings{desideri2012mutiple,
  title={Mutiple-gradient descent algorithm for multiobjective optimization},
  author={D{\'e}sid{\'e}ri, Jean-Antoine},
  booktitle={European Congress on Computational Methods in Applied Sciences and Engineering (ECCOMAS 2012)},
  year={2012}
}

@article{geoffrion1967,
title = {Solving Bicriterion Mathematical Programs},
author = {Geoffrion, Arthur M.},
volume = {15},
number = {1},
journal = {Operations Research},
pages = {39–54},
year = {1967},
publisher = {INFORMS}
}

@incollection{bowman1976relationship,
  title={On the relationship of the Tchebycheff norm and the efficient frontier of multiple-criteria objectives},
  author={Bowman, V Joseph},
  booktitle={Multiple criteria decision making},
  pages={76--86},
  year={1976},
  publisher={Springer}
}

@article{choo1983proper,
  title={Proper efficiency in nonconvex multicriteria programming},
  author={Choo, Eng Ung and Atkins, DR},
  journal={Mathematics of Operations Research},
  volume={8},
  number={3},
  pages={467--470},
  year={1983},
  publisher={INFORMS}
}

@article{hillermeier2001generalized,
  title={Generalized homotopy approach to multiobjective optimization},
  author={Hillermeier, Claus},
  journal={Journal of Optimization Theory and Applications},
  volume={110},
  number={3},
  pages={557--583},
  year={2001},
  publisher={Springer}
}

@inproceedings{parisi2014policy,
  title={Policy gradient approaches for multi-objective sequential deb},
  author={ Parisi, Simone  and  Pirotta, Matteo  and  Smacchia, Nicola  and  Bascetta, Luca  and  Restelli, Marcello },
  booktitle={International Joint Conference on Neural Networks},
  year={2014},
}

@inproceedings{xu2020prediction,
  title={Prediction-Guided Multi-Objective Reinforcement Learning for Continuous Robot Control},
  author={Xu, Jie and Tian, Yunsheng and Ma, Pingchuan and Rus, Daniela and Sueda, Shinjiro and Matusik, Wojciech},
  booktitle={International Conference on Machine Learning},
  pages={10607--10616},
  year={2020},
  organization={PMLR}
}

@inproceedings{martinez2020minimax,
  title={Minimax pareto fairness: A multi objective perspective},
  author={Martinez, Natalia and Bertran, Martin and Sapiro, Guillermo},
  booktitle=icml,
  pages={6755--6764},
  year={2020},
  organization={PMLR}
}

@inproceedings{xie2020mars,
  title={MARS: Markov Molecular Sampling for Multi-objective Drug Discovery},
  author={Xie, Yutong and Shi, Chence and Zhou, Hao and Yang, Yuwei and Zhang, Weinan and Yu, Yong and Li, Lei},
  booktitle=iclr,
  year={2021}
}

@inproceedings{kingma2015adam,
  title={Adam: A Method for Stochastic Optimization},
  author={Kingma, Diederik P and Ba, Jimmy},
  booktitle=iclr,
  year={2015}
}

@article{matz2017psychological,
  title={Psychological targeting as an effective approach to digital mass persuasion},
  author={Matz, Sandra C and Kosinski, Michal and Nave, Gideon and Stillwell, David J},
  journal={Proceedings of the national academy of sciences},
  volume={114},
  number={48},
  pages={12714--12719},
  year={2017},
  publisher={National Acad Sciences}
}

@article{deb2006searching,
  title={Searching for Pareto-optimal solutions through dimensionality reduction for certain large-dimensional multi-objective optimization problems},
  author={ Deb, Kalyanmoy  and  Saxena, Dhish Kumar },
  journal={Computational Intelligence},
  pages={KanGAL Report Number 2005011},
  year={2005},
}

@article{brockhoff2006are,
  title={Are All Objectives Necessary? On Dimensionality Reduction in Evolutionary Multiobjective Optimization},
  author={ Brockhoff, Dimo  and  Zitzler, Eckart },
  journal={Lecture Notes in Computer Science},
  volume={4193},
  number={1},
  pages={533-542},
  year={2006},
}

@article{singh2011a,
  title={A Pareto Corner Search Evolutionary Algorithm and Dimensionality Reduction in Many-Objective Optimization Problems},
  author={ Singh, Hemant Kumar  and  Isaacs, Amitay  and  Ray, Tapabrata },
  journal={IEEE Transactions on Evolutionary Computation},
  volume={15},
  number={4},
  pages={539-556},
  year={2011},
}

@inproceedings{lin2025few,
  title={Few for Many: Tchebycheff Set Scalarization for Many-Objective Optimization},
  author={Lin, Xi and Liu, Yilu and Zhang, Xiaoyuan and Liu, Fei and Wang, Zhenkun and Zhang, Qingfu},
  booktitle=iclr,
  year={2025}
}

@article{liu2025few,
  title={Few for Many: Towards Efficient and Flexible Many-Objective Optimization},
  author={Liu, Yilu and Lin, Xi and Zhao, Liang and Zhang, Qingfu},
  journal={IEEE Transactions on Evolutionary Computation},
  year={2025},
  publisher={IEEE}
}

@inproceedings{liu2024many,
  title={Many-objective cover problem: Discovering few solutions to cover many objectives},
  author={Liu, Yilu and Lu, Chengyu and Lin, Xi and Zhang, Qingfu},
  booktitle=ppsn,
  pages={68--82},
  year={2024},
  organization={Springer}
}

@inproceedings{maus2025multi,
  title={Multi-Objective Coverage Bayesian Optimization (MOCOBO)},
  author={Maus, Natalie and Kim, Kyurae and Zeng, Yimeng and Jones, Haydn Thomas and Wan, Fangping and Torres, MDT and de la Fuente-Nunez, C and Gardner, JR},
  booktitle=nips,
  year={2025}
}

@inproceedings{lin2022pareto_moco,
  title={Pareto Set Learning for Neural Multi-objective Combinatorial Optimization},
  author={Lin, Xi and Yang, Zhiyuan and Zhang, Qingfu},
  booktitle=iclr,
  year={2022}
}

@inproceedings{lin2022pareto_expensive,
  title={Pareto set learning for expensive multiobjective optimization},
  author={Lin, Xi and Yang, Zhiyuan and Zhang, Xiaoyuan and Zhang, Qingfu},
  booktitle=nips,
  year={2022},
}

@inproceedings{ye2024evolutionary,
  title={Evolutionary Preference Sampling for {Pareto} Set Learning},
  author={Ye, Rongguang and Chen, Longcan and Zhang, Jinyuan and Ishibuchi, Hisao},
  booktitle=gecco,
  year={2024}
}

@article{secretan2011picbreeder,
  title={Picbreeder: A case study in collaborative evolutionary exploration of design space},
  author={Secretan, Jimmy and Beato, Nicholas and D'Ambrosio, David B and Rodriguez, Adelein and Campbell, Adam and Folsom-Kovarik, Jeremiah T and Stanley, Kenneth O},
  journal={Evolutionary computation},
  volume={19},
  number={3},
  pages={373--403},
  year={2011},
  publisher={MIT Press}
}

@article{doncieux2015evolutionary,
  title={Evolutionary robotics: what, why, and where to},
  author={Doncieux, Stephane and Bredeche, Nicolas and Mouret, Jean-Baptiste and Eiben, Agoston E},
  journal={Frontiers in Robotics and AI},
  volume={2},
  pages={4},
  year={2015},
  publisher={Frontiers Media SA}
}

@article{cully2015robots,
  title={Robots that can adapt like animals},
  author={Cully, Antoine and Clune, Jeff and Tarapore, Danesh and Mouret, Jean-Baptiste},
  journal={Nature},
  volume={521},
  number={7553},
  pages={503--507},
  year={2015},
  publisher={Nature Publishing Group UK London}
}

@article{ecoffet2019go,
  title={Go-explore: a new approach for hard-exploration problems},
  author={Ecoffet, Adrien and Huizinga, Joost and Lehman, Joel and Stanley, Kenneth O and Clune, Jeff},
  journal={arXiv preprint arXiv:1901.10995},
  year={2019}
}

@article{ecoffet2021first,
  title={First return, then explore},
  author={Ecoffet, Adrien and Huizinga, Joost and Lehman, Joel and Stanley, Kenneth O and Clune, Jeff},
  journal={Nature},
  volume={590},
  number={7847},
  pages={580--586},
  year={2021},
  publisher={Nature Publishing Group UK London}
}

@article{nguyen2016understanding,
  title={Understanding innovation engines: Automated creativity and improved stochastic optimization via deep learning},
  author={Nguyen, Anh and Yosinski, Jason and Clune, Jeff},
  journal={Evolutionary computation},
  volume={24},
  number={3},
  pages={545--572},
  year={2016},
  publisher={MIT Press One Rogers Street, Cambridge, MA 02142-1209, USA journals-info~…}
}

@inproceedings{tian2022modern,
  title={Modern evolution strategies for creativity: Fitting concrete images and abstract concepts},
  author={Tian, Yingtao and Ha, David},
  booktitle={International conference on computational intelligence in music, sound, art and design (part of evostar)},
  pages={275--291},
  year={2022},
  organization={Springer}
}

@inproceedings{bradley2023quality,
  title={Quality-diversity through AI feedback},
  author={Bradley, Herbie and Dai, Andrew and Teufel, Hannah and Zhang, Jenny and Oostermeijer, Koen and Bellagente, Marco and Clune, Jeff and Stanley, Kenneth and Schott, Gr{\'e}gory and Lehman, Joel},
  booktitle=iclr,
  year={2023}
}

@inproceedings{ding2024quality,
  title={Quality Diversity through Human Feedback: Towards Open-Ended Diversity-Driven Optimization},
  author={Ding, Li and Zhang, Jenny and Clune, Jeff and Spector, Lee and Lehman, Joel},
  booktitle=icml,
  year={2024}
}

@inproceedings{gonzalez2020finding,
  title={Finding game levels with the right difficulty in a few trials through intelligent trial-and-error},
  author={Gonz{\'a}lez-Duque, Miguel and Palm, Rasmus Berg and Ha, David and Risi, Sebastian},
  booktitle={2020 IEEE Conference on Games (CoG)},
  pages={503--510},
  year={2020},
  organization={IEEE}
}

@article{bhatt2022deep,
  title={Deep surrogate assisted generation of environments},
  author={Bhatt, Varun and Tjanaka, Bryon and Fontaine, Matthew and Nikolaidis, Stefanos},
  journal=nips,
  year={2022}
}

@article{fontaine2022evaluating,
  title={Evaluating human--robot interaction algorithms in shared autonomy via quality diversity scenario generation},
  author={Fontaine, Matthew C and Nikolaidis, Stefanos},
  journal={ACM Transactions on Human-Robot Interaction (THRI)},
  volume={11},
  number={3},
  pages={1--30},
  year={2022},
  publisher={ACM New York, NY}
}

@article{zhang2023arbitrarily,
  title={Arbitrarily scalable environment generators via neural cellular automata},
  author={Zhang, Yulun and Fontaine, Matthew and Bhatt, Varun and Nikolaidis, Stefanos and Li, Jiaoyang},
  journal=nips,
  year={2023}
}

@inproceedings{nguyen2015deep,
  title={Deep neural networks are easily fooled: High confidence predictions for unrecognizable images},
  author={Nguyen, Anh and Yosinski, Jason and Clune, Jeff},
  booktitle={Proceedings of the IEEE conference on computer vision and pattern recognition},
  pages={427--436},
  year={2015}
}

@article{samvelyan2024rainbow,
  title={Rainbow teaming: Open-ended generation of diverse adversarial prompts},
  author={Mikayel Samvelyan and Sharath Chandra Raparthy and Andrei Lupu and Eric Hambro and Aram H. Markosyan and Manish Bhatt and Yuning Mao and Minqi Jiang and Jack Parker-Holder and Jakob Foerster and Tim Rocktäschel and Roberta Raileanu},
  journal={Advances in Neural Information Processing Systems},
  volume={37},
  pages={69747--69786},
  year={2024}
}

@article{wang2025quality,
  title={Quality-Diversity Red-Teaming: Automated Generation of High-Quality and Diverse Attackers for Large Language Models},
  author={Wang, Ren-Jian and Xue, Ke and Qin, Zeyu and Li, Ziniu and Tang, Sheng and Li, Hao-Tian and Liu, Shengcai and Qian, Chao},
  journal={arXiv preprint arXiv:2506.07121},
  year={2025}
}

@misc{novikov2025alphaevolve,
      title={AlphaEvolve: A coding agent for scientific and algorithmic discovery}, 
      author={Alexander Novikov and Ngân Vũ and Marvin Eisenberger and Emilien Dupont and Po-Sen Huang and Adam Zsolt Wagner and Sergey Shirobokov and Borislav Kozlovskii and Francisco J. R. Ruiz and Abbas Mehrabian and M. Pawan Kumar and Abigail See and Swarat Chaudhuri and George Holland and Alex Davies and Sebastian Nowozin and Pushmeet Kohli and Matej Balog},
      year={2025},
      eprint={2506.13131},
      archivePrefix={arXiv},
      primaryClass={cs.AI},
      url={https://arxiv.org/abs/2506.13131}, 
}

@article{cully2017quality,
  title={Quality and diversity optimization: A unifying modular framework},
  author={Cully, Antoine and Demiris, Yiannis},
  journal={IEEE Transactions on Evolutionary Computation},
  volume={22},
  number={2},
  pages={245--259},
  year={2017},
  publisher={IEEE}
}

@incollection{chatzilygeroudis2021quality,
  title={Quality-diversity optimization: a novel branch of stochastic optimization},
  author={Chatzilygeroudis, Konstantinos and Cully, Antoine and Vassiliades, Vassilis and Mouret, Jean-Baptiste},
  booktitle={Black box optimization, machine learning, and no-free lunch theorems},
  pages={109--135},
  year={2021},
  publisher={Springer}
}

@article{pugh2016quality,
  title={Quality diversity: A new frontier for evolutionary computation},
  author={Pugh, Justin K and Soros, Lisa B and Stanley, Kenneth O},
  journal={Frontiers in Robotics and AI},
  volume={3},
  pages={40},
  year={2016},
  publisher={Frontiers Media SA}
}

@article{mouret2015illuminating,
  title={Illuminating search spaces by mapping elites},
  author={Mouret, Jean-Baptiste and Clune, Jeff},
  journal={arXiv preprint arXiv:1504.04909},
  year={2015}
}

@article{vassiliades2017using,
  title={Using centroidal voronoi tessellations to scale up the multidimensional archive of phenotypic elites algorithm},
  author={Vassiliades, Vassilis and Chatzilygeroudis, Konstantinos and Mouret, Jean-Baptiste},
  journal={IEEE Transactions on Evolutionary Computation},
  volume={22},
  number={4},
  pages={623--630},
  year={2017},
  publisher={IEEE}
}

@inproceedings{gaier2017data,
  title={Data-efficient exploration, optimization, and modeling of diverse designs through surrogate-assisted illumination},
  author={Gaier, Adam and Asteroth, Alexander and Mouret, Jean-Baptiste},
  booktitle={Proceedings of the Genetic and Evolutionary Computation Conference},
  pages={99--106},
  year={2017}
}

@article{kent2020bop,
  title={Bop-elites, a bayesian optimisation algorithm for quality-diversity search},
  author={Kent, Paul and Branke, Juergen},
  journal={arXiv preprint arXiv:2005.04320},
  year={2020}
}

@inproceedings{zhang2022deep,
  title={Deep surrogate assisted map-elites for automated hearthstone deckbuilding},
  author={Zhang, Yulun and Fontaine, Matthew C and Hoover, Amy K and Nikolaidis, Stefanos},
  booktitle={Proceedings of the Genetic and Evolutionary Computation Conference},
  pages={158--167},
  year={2022}
}

@inproceedings{mouret2020quality,
  title={Quality diversity for multi-task optimization},
  author={Mouret, Jean-Baptiste and Maguire, Glenn},
  booktitle={Proceedings of the 2020 Genetic and Evolutionary Computation Conference},
  pages={121--129},
  year={2020}
}

@inproceedings{pugh2015confronting,
  title={Confronting the challenge of quality diversity},
  author={Pugh, Justin K and Soros, Lisa B and Szerlip, Paul A and Stanley, Kenneth O},
  booktitle={Proceedings of the 2015 annual conference on genetic and evolutionary computation},
  pages={967--974},
  year={2015}
}

@inproceedings{fontaine2020covariance,
  title={Covariance matrix adaptation for the rapid illumination of behavior space},
  author={Fontaine, Matthew C and Togelius, Julian and Nikolaidis, Stefanos and Hoover, Amy K},
  booktitle=gecco,
  pages={94--102},
  year={2020}
}

@inproceedings{cully2018hierarchical,
  title={Hierarchical behavioral repertoires with unsupervised descriptors},
  author={Cully, Antoine and Demiris, Yiannis},
  booktitle={Proceedings of the Genetic and Evolutionary Computation Conference},
  pages={69--76},
  year={2018}
}

@article{grillotti2022unsupervised,
  title={Unsupervised behavior discovery with quality-diversity optimization},
  author={Grillotti, Luca and Cully, Antoine},
  journal={IEEE Transactions on Evolutionary Computation},
  volume={26},
  number={6},
  pages={1539--1552},
  year={2022},
  publisher={IEEE}
}

@inproceedings{mouret2023fast,
  title={Fast generation of centroids for MAP-Elites},
  author={Mouret, Jean-Baptiste},
  booktitle={Proceedings of the Companion Conference on Genetic and Evolutionary Computation},
  pages={155--158},
  year={2023}
}

@inproceedings{lehman2008exploiting,
  title={Exploiting open-endedness to solve problems through the search for novelty.},
  author={Lehman, Joel and Stanley, Kenneth O},
  booktitle={ALIFE},
  pages={329--336},
  year={2008}
}

@article{lehman2011abandoning,
  title={Abandoning objectives: Evolution through the search for novelty alone},
  author={Lehman, Joel and Stanley, Kenneth O},
  journal={Evolutionary computation},
  volume={19},
  number={2},
  pages={189--223},
  year={2011},
  publisher={MIT Press}
}

@inproceedings{lehman2011evolving,
  title={Evolving a diversity of virtual creatures through novelty search and local competition},
  author={Lehman, Joel and Stanley, Kenneth O},
  booktitle={Proceedings of the 13th annual conference on Genetic and evolutionary computation},
  pages={211--218},
  year={2011}
}

@inproceedings{kent2022discretization,
  title={A discretization-free metric for assessing quality diversity algorithms},
  author={Kent, Paul and Branke, Juergen and Gaier, Adam and Mouret, Jean-Baptiste},
  booktitle={Proceedings of the Genetic and Evolutionary Computation Conference Companion},
  pages={2131--2135},
  year={2022}
}

@inproceedings{nilsson2021policy,
  title={Policy gradient assisted map-elites},
  author={Nilsson, Olle and Cully, Antoine},
  booktitle={Proceedings of the Genetic and Evolutionary Computation Conference},
  pages={866--875},
  year={2021}
}

@article{fontaine2021differentiable,
  title={Differentiable quality diversity},
  author={Fontaine, Matthew and Nikolaidis, Stefanos},
  journal={Advances in Neural Information Processing Systems},
  volume={34},
  pages={10040--10052},
  year={2021}
}

@inproceedings{fontaine2023covariance,
  title={Covariance matrix adaptation map-annealing},
  author={Fontaine, Matthew and Nikolaidis, Stefanos},
  booktitle={Proceedings of the genetic and evolutionary computation conference},
  pages={456--465},
  year={2023}
}

@article{tjanaka2023training,
  title={Training diverse high-dimensional controllers by scaling covariance matrix adaptation map-annealing},
  author={Tjanaka, Bryon and Fontaine, Matthew C and Lee, David H and Kalkar, Aniruddha and Nikolaidis, Stefanos},
  journal={IEEE Robotics and Automation Letters},
  volume={8},
  number={10},
  pages={6771--6778},
  year={2023},
  publisher={IEEE}
}

@inproceedings{batra2024proximal,
  title={Proximal Policy Gradient Arborescence for Quality Diversity Reinforcement Learning},
  author={Batra, Sumeet and Tjanaka, Bryon and Fontaine, Matthew Christopher and Petrenko, Aleksei and Nikolaidis, Stefanos and Sukhatme, Gaurav S},
  booktitle=iclr,
  year = {2024}
}

@inproceedings{wan2025diversifying,
  title={Diversifying Robot Locomotion Behaviors with Extrinsic Behavioral Curiosity},
  author={Wan, Zhenglin and Yu, Xingrui and Bossens, David Mark and Lyu, Yueming and Guo, Qing and Fan, Flint Xiaofeng and Ong, Yew-Soon and Tsang, Ivor},
  booktitle=icml,
  year = 2025
}

@inproceedings{bahlous2025dominated,
  title={Dominated novelty search: Rethinking local competition in quality-diversity},
  author={Bahlous-Boldi, Ryan and Faldor, Maxence and Grillotti, Luca and Janmohamed, Hannah and Coiffard, Lisa and Spector, Lee and Cully, Antoine},
  booktitle={Proceedings of the Genetic and Evolutionary Computation Conference},
  pages={104--112},
  year={2025}
}

@article{friedman2023vendi,
  title={The Vendi Score: A Diversity Evaluation Metric for Machine Learning},
  author={Friedman, Dan and Dieng, Adji Bousso},
  journal={Transactions on Machine Learning Research},
  year={2023}
}

@inproceedings{nguyen2024quality,
  title={Quality-weighted vendi scores and their application to diverse experimental design},
  author={Nguyen, Quan and Dieng, Adji Bousso},
  booktitle=icml,
  year={2024}
}

@inproceedings{mouret2011novelty,
  title={Novelty-based multiobjectivization},
  author={Mouret, Jean-Baptiste},
  booktitle={New Horizons in Evolutionary Robotics: Extended Contributions from the 2009 EvoDeRob Workshop},
  pages={139--154},
  year={2011},
  organization={Springer}
}

@inproceedings{pierrot2022multi,
  title={Multi-objective quality diversity optimization},
  author={Pierrot, Thomas and Richard, Guillaume and Beguir, Karim and Cully, Antoine},
  booktitle={Proceedings of the genetic and evolutionary computation conference},
  pages={139--147},
  year={2022}
}

@inproceedings{janmohamed2023improving,
  title={Improving the data efficiency of multi-objective quality-diversity through gradient assistance and crowding exploration},
  author={Janmohamed, Hannah and Pierrot, Thomas and Cully, Antoine},
  booktitle={Proceedings of the Genetic and Evolutionary Computation Conference},
  pages={165--173},
  year={2023}
}

@inproceedings{janmohamed2025multi,
  title={Multi-objective quality-diversity in unstructured and unbounded spaces},
  author={Janmohamed, Hannah and Cully, Antoine},
  booktitle={Proceedings of the Genetic and Evolutionary Computation Conference},
  pages={149--157},
  year={2025}
}

@inproceedings{zhao2025multi,
  title={Multi-Objective Covariance Matrix Adaptation MAP-Annealing},
  author={Zhao, Shihan and Nikolaidis, Stefanos},
  booktitle={Proceedings of the Genetic and Evolutionary Computation Conference},
  pages={673--682},
  year={2025}
}

@inproceedings{janmohamed2024multi,
  title={Multi-Objective Quality-Diversity for Crystal Structure Prediction},
  author={Janmohamed, Hannah and Wolinska, Marta and Surana, Shikha and Pierrot, Thomas and Walsh, Aron and Cully, Antoine},
  booktitle={Proceedings of the Genetic and Evolutionary Computation Conference},
  pages={1273--1281},
  year={2024}
}

@inproceedings{shen2020generating,
  title={Generating behavior-diverse game ais with evolutionary multi-objective deep reinforcement learning},
  author={Shen, Ruimin and Zheng, Yan and Hao, Jianye and Meng, Zhaopeng and Chen, Yingfeng and Fan, Changjie and Liu, Yang},
  booktitle={Proceedings of the Twenty-Ninth International Conference on International Joint Conferences on Artificial Intelligence},
  pages={3371--3377},
  year={2020}
}

@inproceedings{villin2021effects,
  title={Effects of different optimization formulations in evolutionary reinforcement learning on diverse behavior generation},
  author={Villin, Victor and Masuyama, Naoki and Nojima, Yusuke},
  booktitle={2021 IEEE Symposium Series on Computational Intelligence (SSCI)},
  year={2021},
  organization={IEEE}
}

@inproceedings{wang2023multi,
  title={Multi-objective Optimization-based Selection for Quality-Diversity by Non-surrounded-dominated Sorting.},
  author={Wang, Ren-Jian and Xue, Ke and Shang, Haopu and Qian, Chao and Fu, Haobo and Fu, Qiang},
  booktitle={IJCAI},
  pages={4335--4343},
  year={2023}
}

@article{hedayatian2025autoqd,
  title={AutoQD: Automatic Discovery of Diverse Behaviors with Quality-Diversity Optimization},
  author={Hedayatian, Saeed and Nikolaidis, Stefanos},
  journal={arXiv preprint arXiv:2506.05634},
  year={2025}
}

@article{hedayatian2025soft,
  title={Soft Quality-Diversity Optimization},
  author={Hedayatian, Saeed and Nikolaidis, Stefanos},
  journal={arXiv preprint arXiv:2512.00810},
  year={2025}
}

@inproceedings{tjanaka2023pyribs,
  title={pyribs: A bare-bones python library for quality diversity optimization},
  author={Tjanaka, Bryon and Fontaine, Matthew C and Lee, David H and Zhang, Yulun and Balam, Nivedit Reddy and Dennler, Nathaniel and Garlanka, Sujay S and Klapsis, Nikitas Dimitri and Nikolaidis, Stefanos},
  booktitle={Proceedings of the Genetic and Evolutionary Computation Conference},
  pages={220--229},
  year={2023}
}

@article{rastrigin1974systems,
  title={Systems of extremal control},
  author={Rastrigin, Leonard Andreevi{\v{c}}},
  journal={Nauka},
  year={1974}
}

@inproceedings{hoffmeister1990genetic,
  title={Genetic algorithms and evolution strategies: Similarities and differences},
  author={Hoffmeister, Frank and B{\"a}ck, Thomas},
  booktitle={International conference on parallel problem solving from nature},
  pages={455--469},
  year={1990},
  organization={Springer}
}

@article{wang2024image,
  title={Image quality assessment: from error visibility to structural similarity}, 
  author={Zhou Wang and Bovik, A.C. and Sheikh, H.R. and Simoncelli, E.P.},
  journal={IEEE Transactions on Image Processing}, 
  year={2004},
  volume={13},
  number={4},
  pages={600-612}
  }

@inproceedings{karras2020analyzing,
  title={Analyzing and improving the image quality of stylegan},
  author={Karras, Tero and Laine, Samuli and Aittala, Miika and Hellsten, Janne and Lehtinen, Jaakko and Aila, Timo},
  booktitle=cvpr,
  pages={8110--8119},
  year={2020}
}

@inproceedings{radford2021learning,
  title={Learning transferable visual models from natural language supervision},
  author={Alec Radford and Jong Wook Kim and Chris Hallacy and Aditya Ramesh and Gabriel Goh and Sandhini Agarwal and Girish Sastry and Amanda Askell and Pamela Mishkin and Jack Clark and Gretchen Krueger and Ilya Sutskever},
  booktitle=icml,
  pages={8748--8763},
  year={2021}
}

\newpage
\appendix
\onecolumn




In this appendix, we mainly provide:
\begin{itemize}

    \item \textbf{Background of Multi-Objective Optimization} can be found in Section~\ref{supp_sec_background}.
    
    \item \textbf{Detailed Proofs} for the theoretical analysis are provide in Section~\ref{supp_sec_proof}. 

    \item \textbf{Derivation of STCH-Set} can be found in Section~\ref{supp_sec_ssom_stch-set}.  

    \item \textbf{Problem Details} can be found in Section~\ref{supp_sec_problem}.
    
    \item \textbf{Experimental Settings} can be found in Section~\ref{supp_sec_setting}.

    \item \textbf{More Experimental Results and Analyses} are provided in Section~\ref{supp_sec_experiment}.
    
\end{itemize}

\section{Background of Multi-Objective Optimization}
\label{supp_sec_background}

\paragraph{Multi-Objective Optimization Problem}

Practical Optimization problems usually involving multiple, often competing, objectives to optimize. These include training machine learning models that balance accuracy with fairness~\cite{martinez2020minimax}, developing artificial agents that maximize performance while minimizing energy consumption~\cite{xu2020prediction}, and discovering molecular structures that satisfy multiple pharmaceutical criteria simultaneously~\cite{xie2020mars}. Mathematically, such problems are formulated as multi-objective optimization problems (MOPs), where the goal is to optimize several objective functions concurrently. Since these objectives typically conflict, no single solution can optimize all of them simultaneously. Instead, the solution set consists of Pareto optimal points, each representing a different compromise among objectives, collectively forming the Pareto set and its image, the Pareto front in the objective space~\cite{miettinen1999nonlinear, ehrgott2005multicriteria, ehrgott2025fifty}.

\paragraph{Multi-Objective Evolutionary Algorithm}

For black-box optimization problems where gradient information are unavailable, evolutionary algorithms have been a dominant methodology. Multi-objective evolutionary algorithms (MOEAs) maintain a population of solutions, using mechanisms inspired by natural selection (crossover and mutation) to evolve them toward the Pareto set. Classic algorithms like NSGA-II~\cite{deb2002fast}, IBEA~\cite{zitzler2004indicator}, SMS-EMOA~\cite{beume2007sms},  MOEA/D~\cite{zhang2007moea} and most of their variants are effective for problems with two or three objectives. However, their performance degrades as the number of objectives increases, a challenge known as many-objective optimization~\cite{fleming2005many}. The exponential growth in the portion of non-dominated solutions complicates selection and diversity preservation~\citep{purshouse2007evolutionary,knowles2007quantifying}. While specialized many-objective evolutionary algorithms~\cite{bader2011hype, li2015many, sato2023evolutionary} and objective reduction techniques~\cite{deb2006searching, brockhoff2006are, singh2011a} have been proposed, they often remain computationally expensive and struggle with high-dimensional objective spaces.

\paragraph{Gradient-based Multi-Objective Optimization}

When the objective functions are differentiable, gradient-based methods offer a more efficient optimization approach. These methods are broadly categorized into scalarization approaches and adaptive gradient algorithms. Scalarization methods aggregate multiple objectives into a single function using a preference vector. The simplest linear scalarization~\cite{geoffrion1967} is efficient but is theoretically incapable of finding Pareto optimal solutions that lie on non-convex regions of the front~\cite{das1997a, boyd2004convex}. In contrast, the Tchebycheff (TCH) scalarization method~\cite{bowman1976relationship} is a desirable alternative. Under mild conditions, any (weakly) Pareto optimal solution can be obtained by solving a TCH problem with an appropriate preference vector~\cite{choo1983proper}. However, its non-smooth $\max$ operator leads to challenges in gradient-based optimization, often resulting in slow convergence~\cite{goffin1977convergence}. Recent work has introduced a smooth approximation to the TCH scalarization (STCH)~\cite{lin2024smooth}, which enables faster gradient descent while maintaining strong theoretical guarantees for multi-objective optimization. This approach is also found beneficial in robust optimization~\cite{he2024robust} and reinforcement learning settings~\cite{qiu2024traversing}.

Adaptive gradient methods, such as the Multiple Gradient Descent Algorithm (MGDA) \cite{fliege2000steepest, schaffler2002stochastic, desideri2012mutiple}, take a different approach. At each gradient-based optimization step, they compute a search direction that is a descent direction for all objectives simultaneously. In recent years, these methods have been adopted to tackle different multi-objective machine learning problems~\cite{sener2018multi,lin2019pareto,mahapatramulti2020multi,ma2020efficient, momma2022multi}. However, these methods incur significant computational overhead per iteration, as they require separate gradient computations for each objective and solving a quadratic programming (QP) problem. This complexity becomes a major bottleneck for problems with many objectives.

\paragraph{Pareto Set Learning}

A distinct line of research seeks to model the entire Pareto set as a continuous manifold rather than finding a discrete set of points~\cite{parisi2014policy,dosovitskiy2019you, lin2020controllable,navon2021learning}. Inspired by the observation that the Pareto set often forms a low-dimensional structure~\cite{hillermeier2001generalized}, Pareto Set Learning (PSL) methods train a parametric model (e.g., a neural network) that maps a preference vector directly to a corresponding Pareto optimal solution. This framework allows for efficient, continuous traversal of the trade-off space after training. PSL has been successfully applied to domains such as multi-task learning~\cite{lin2020controllable, ruchte2021scalable}, neural combinatorial optimization~\cite{lin2022pareto_moco}, and expensive black-box optimization~\cite{lin2022pareto_expensive}. Recent advances focus on improving training efficiency through bi-level optimization~\cite{chen2022multi}, evolutionary sampling~\cite{ye2024evolutionary}, and integrating smooth scalarization techniques~\cite{lin2024smooth}. A fundamental limitation of standard PSL, however, is its assumption that a dense sampling of preferences is needed to learn the model, which becomes infeasible when the number of objectives is very large.

\paragraph{Few Solutions for Many Optimization Objectives}

The practical scenario where a very large number of objectives must be addressed by a relatively small, fixed set of solutions has received limited attention in the multi-objective optimization literature. This setting is highly relevant in applications like product design with numerous performance metrics~\cite{fleming2005many}, creating a handful of advertising variants for a diverse audience~\cite{matz2017psychological}, or training a compact ensemble of models to handle many tasks~\cite{standley2020tasks, fifty2021efficiently}. The core challenge is to ensure that the small solution set collectively provides good coverage for all objectives. Recently, a few work has begun to address this gap. \citet{ding2024efficient} study a related sum-of-minimums formulation for problems like mixed linear regression, though not framed as a multi-objective problem. \citet{li2025many} propose a bi-level optimal transport framework to partition the objective space and assign representative subproblems to different solutions. \citet{lin2025few} propose a novel and efficient set scalarization approach that directly optimizes the collective performance of a small solution set over many objectives. These setting has also been generalized to tackle black-box optimization~\cite{liu2024many,liu2025few,maus2025multi}.

\paragraph{Multi-Objective Quality-Diversity Optimization} The concept of multi-objective optimization has also been integrated into quality-diversity (QD) optimization. For instance, a pioneering work~\cite{mouret2011novelty} treats the balance between fitness (quality) and novelty (diversity) as a two-objective problem, addressing it with multi-objective evolutionary algorithms. The Multi-Objective Quality-Diversity (MOQD) paradigm~\cite{pierrot2022multi} was subsequently proposed to find Pareto solutions among multiple objectives within each cell of a MAP-Elites archive~\cite{pierrot2022multi, janmohamed2023improving, janmohamed2024multi, zhao2025multi} or within each local region of an unstructured archive~\cite{janmohamed2025multi}. Other studies have employed multi-objective techniques to generate behaviorally diverse solutions for single-objective QD problems~\cite{shen2020generating, villin2021effects, wang2023multi}. In contrast to these approaches, our work reformulates QD optimization itself as a multi-objective problem defined by a large set of objectives that collectively cover the entire behavior space.

\clearpage
\section{Detailed Proof and Discussion}
\label{supp_sec_proof}

\subsection{Proof of Theorem~\ref{thm_informal_monotonicity}}

The monotonicity of the SoM and TCH-Set scalarizations has been established and proved by \citet{liu2025few}. This work completes the theoretical picture by proving that monotonicity is preserved under their smooth approximations, i.e., for the SSoM and STCH-Set scalarizations. An important corollary of our analysis is that the STCH-Set scalarization satisfies monotonicity under weaker conditions, no longer requiring the reference points $z_m^*$ to be equal as for TCH-Set scalarization.

\textbf{Theorem~\ref{thm_informal_monotonicity} }(Monotonicity).
\textit{Let $g$ be a function among $\{g^{\text{SoM}}, g_{\mu}^{\text{SSoM}}, g_{\mu}^{\text{STCH-Set}}\}$. For any two solution sets $\vX_U \subseteq \vX_W \subseteq \mathcal{X}$ with $1 \leq U \leq W$, it holds that $g(\vX_U) \geq g(\vX_W)$. Besides, if all reference points are equal ($z_1^* = \cdots = z_M^*$), $g^{\text{TCH-Set}}(\vX_U) \geq g^{\text{TCH-Set}}(\vX_W)$ also holds.} 
\vspace{-0.1in}
\begin{proof}

We first demonstrate the monotonicity of $g^{\text{SoM}}(\vX_K)$ and $g^{\text{TCH-Set}}(\vX_K)$. 

For simplicity, we use $\tilde{V}_{K,m}^*$ to denote $\min_{\vx \in \vX_K} \tilde{v}_m(\vx)$. 
We first show that $\tilde{V}_{U,m}^* \geq \tilde{V}_{W,m}^*$ holds for all indexes $m\in\left\{1,2,\dots,M\right\}$. For simplicity, we use $j^*$ to denote the index of the best solution for $\tilde{v}_m(\vx)$ within $\vX_W$, i.e., $j^*=\argmin_{1 \leq j \leq W}\tilde{v}_m(\vx_j)$. Therefore, we only need to discuss the following two cases: 
\begin{enumerate}
\item $1 \leq j^* \leq U$. This case means that the best solution for $\tilde{v}_m(\vx)$ is in $\vX_U$. As $\vX_U \subseteq \vX_W$, we can derive $\tilde{V}_{U,m}^* = \tilde{V}_{W,m}^*$.
\item $U < j^* \leq W$. This case means that the best solution for $\tilde{v}_m(\vx)$ is in $\vX_W$\textbackslash $\vX_U$, showing that $\tilde{V}_{U,m}^* \geq \tilde{V}_{W,m}^*$.
\end{enumerate}
According to these two cases, we have $\tilde{V}_{U,m}^* \geq \tilde{V}_{W,m}^*$. With this conclusion, we can easily derive the following two facts:
\begin{enumerate}
    \item As $g_{\mu}^{\text{SoM}}(\vX_K)$ is a convex combination of $\tilde{V}_{K,m}^*$, we have $g_{\mu}^{\text{SoM}}(\vX_U) \geq g_{\mu}^{\text{SoM}}(\vX_W)$.
    \item Let $z^*=z_1^*=z_2^*=\cdots=z_m^*$, $m_U^*=\argmax_{1\leq m\leq M}(\tilde{V}_{U,m}^*-z^*)$, and $m_W^*=\argmax_{1\leq m\leq M}(\tilde{V}_{W,m}^*-z^*)$. We have 
    \begin{equation}
        \tilde{V}_{U,m_U^*}^*-z^*\geq\tilde{V}_{U,m_W^*}^*-z^*.
    \end{equation}
    Since $\tilde{V}_{U,m}^* \geq \tilde{V}_{W,m}^*$, we also have 
    \begin{equation}
        \tilde{V}_{U,m_W^*}^*-z^*\geq\tilde{V}_{W,m_W^*}^*-z^*.
    \end{equation}
    The above two relations yield
    \begin{equation}
        \tilde{V}_{U,m_U^*}^*-z^*\geq\tilde{V}_{W,m_W^*}^*-z^*,
    \end{equation}
    which shows that $g^{\text{TCH-Set}}(\vX_U) \geq g^{\text{TCH-Set}}(\vX_W)$.
\end{enumerate}
With these two facts, the monotonicity of $g_{\mu}^{\text{SoM}}(\vX_K)$ and $g_{\mu}^{\text{TCH-Set}}(\vX_K)$ has been confirmed. Next, we demonstrate the monotonicity of $g_{\mu}^{\text{SSoM}}(\vX_K)$ and $g_{\mu}^{\text{STCH-Set}}(\vX_K)$.

First, we have
\begin{align}
    g_{\mu}^{\text{SSoM}}(\vX_W) &= -\sum_{m=1}^{M} \mu \log\left[ \left(\sum_{k=1}^{U} e^{-\frac{\tilde{v}_m(\vx^{(k)})}{\mu}} \right) + \left(\sum_{k=U+1}^{W} e^{-\frac{\tilde{v}_m(\vx^{(k)})}{\mu}}\right)\right] \nonumber \\
    &\leq -\sum_{m=1}^{M} \mu \log\left( \sum_{k=1}^{U} e^{-\frac{\tilde{v}_m(\vx^{(k)})}{\mu}} \right) =g_{\mu}^{\text{SSoM}}(\vX_U).
\end{align}
Similarly, we have 
\begin{align}
        g_{\mu}^{\text{STCH-Set}}(\vX_W) 
    &= \mu \log\left( \sum_{m=1}^{M} e^{  - \log\left[ \left(\sum_{k=1}^{U} e^{- \frac{\tilde{v}_m(\vx^{(k)})}{\mu}}\right) + \left(\sum_{k=U+1}^{W} e^{- \frac{\tilde{v}_m(\vx^{(k)})}{\mu}}\right) \right]    - z_m^*} \right) \nonumber \\
    &\leq \mu \log\left( \sum_{m=1}^{M} e^{  - \log \left(\sum_{k=1}^{U} e^{- \frac{\tilde{v}_m(\vx^{(k)})}{\mu}}\right)     - z_m^*} \right)=g_{\mu}^{\text{STCH-Set}}(\vX_U).
\end{align}
Therefore, Theorem~\ref{thm_informal_monotonicity} holds.
\end{proof}

\subsection{Proof of Theorem~\ref{thm_informal_supermodularity}}

Similarly, the supermodularity of the SoM and TCH-Set scalarizations has also been analyzed by \citet{liu2025few}. We extend this analysis to their smooth approximations, proving that the SSoM and STCH-Set scalarizations also satisfy supermodularity. An important finding is that the STCH-Set scalarization retains this property under a relaxed condition. Specifically, it does not require the reference points $z_m^*$ to be equal, nor does it require the alignment condition on the $\argmax$ operator that is needed for the non-smooth TCH-Set scalarization.

\textbf{Theorem~\ref{thm_informal_supermodularity} }(Supermodularity). 
\textit{Let $g$ be a function among $\{g^{\text{SoM}}, g_{\mu}^{\text{SSoM}}, g_{\mu}^{\text{STCH-Set}}\}$. For any two solution sets $\vX_U \subseteq \vX_W \subseteq \mathcal{X}$ with $1 \leq U \leq W$ and any solution $\boldsymbol{x}' \in \mathcal{X}$\textbackslash $\vX_W$, it holds that $g(\vX_U) - g(\vX_U \cup \left\{\boldsymbol{x}' \right\}) \geq g(\vX_W) - g(\vX_W \cup \left\{\boldsymbol{x}' \right\})$. Besides, if all reference points are equal ($z_1^* = \cdots = z_M^*$) and $\argmax_{1\leq m\leq M}(\min_{\vx \in \vX_U\cup \left\{\boldsymbol{x}' \right\}} \tilde{v}_m(\vx) -z_m^*)=\argmax_{1\leq m\leq M}(\min_{\vx \in \vX_W} \tilde{v}_m(\vx)-z_m^*)$, the same inequality holds for $g^{\text{TCH-Set}}$.} 
\begin{proof}
Let $\tilde{V}_{K,m}^*=\min_{\vx \in \vX_K} \tilde{v}_m(\vx)$, $\vX_U'=\vX_U \cup \left\{\boldsymbol{x}'\right\}$, and $U'=|\vX_U \cup \left\{\boldsymbol{x}' \right\}|$. We first show that $\tilde{V}_{U,m}^*-\tilde{V}_{U',m}^*\geq \tilde{V}_{W,m}^*-\tilde{V}_{W',m}^*$. For simplicity, we use $j^*$ to denote the index of the best solution for $\tilde{v}_m(\vx)$ within $\vX_W'$, i.e., $j^*=\argmin_{1 \leq j \leq W+1}\tilde{v}_m(\boldsymbol{x}_j)$. Therefore, we only need to discuss the following three cases: 
\begin{enumerate}
\item $1 \leq j^* \leq U$. This case means that the best solution for $\tilde{v}_m(\boldsymbol{x})$ is in $\vX_U$. Thus, we can derive $\tilde{V}_{U,m}^* = \tilde{V}_{U',m}^* = \tilde{V}_{W,m}^* = \tilde{V}_{W',m}^*$, showing that $\tilde{V}_{U,m}^*-\tilde{V}_{U',m}^*\geq \tilde{V}_{W,m}^*-\tilde{V}_{W',m}^*$.
\item $U < j^* \leq W$. This case means that the best solution for $\tilde{v}_m(\boldsymbol{x})$ is in $\vX_W$\textbackslash $\vX_U$, showing that $\tilde{V}_{W,m}^* = \tilde{V}_{W',m}^*$. Since $\tilde{V}_{U,m}^*\geq \tilde{V}_{U',m}^*$, we have $\tilde{V}_{U,m}^*-\tilde{V}_{U',m}^*\geq \tilde{V}_{W,m}^*-\tilde{V}_{W',m}^*$.
\item $j^* = W+1$. This case means that the best solution for $\tilde{v}_m(\boldsymbol{x})$ is $\boldsymbol{x}'$, showing that $\tilde{V}_{U',m}^* = \tilde{V}_{W',m}^*$. Since $\tilde{V}_{U,m}^*\geq \tilde{V}_{W,m}^*$, we have $\tilde{V}_{U,m}^*-\tilde{V}_{U',m}^*\geq \tilde{V}_{W,m}^*-\tilde{V}_{W',m}^*$.
\end{enumerate}
These three cases confirms the modularity of $\tilde{V}_{K,m}^*$. With this conclusion, we can easily derive the following two facts:
\begin{enumerate}
    \item As $g^{\text{SoM}}(\vX_K)$ is a convex combination of $\tilde{V}_{K,m}^*$, we have $g^{\text{SoM}}(\vX_U) - g^{\text{SoM}}(\vX_{U'}) \geq g^{\text{SoM}}(\vX_V) - g^{\text{SoM}}(\vX_{V'})$.
    \item Let $z^*=z_1^*=z_2^*=\cdots=z_m^*$ and $m_{U'}^*=\argmax_{1\leq m\leq M}(\tilde{V}_{U',m}^* -z_m^*)=\argmax_{1\leq m\leq M}(\tilde{V}_{W,m}^*-z_m^*)=m_W^*$. We have
    \begin{align}
        (\tilde{V}_{U,m_U^*}^*-z^*)-(\tilde{V}_{U',m_{U'}^*}^*-z^*)
        =(\tilde{V}_{U,m_U^*}^*-z^*)-(\tilde{V}_{U',m_{W}^*}^*-z^*)
        \geq (\tilde{V}_{U,m_{W}^*}^*-z^*)-(\tilde{V}_{U',m_{W}^*}^*-z^*).
    \end{align}
    Since $\tilde{V}_{K,m}^*$ has modularity, we have
    \begin{align}
        (\tilde{V}_{U,m_{W}^*}^*-z^*)-(\tilde{V}_{U',m_{W}^*}^*-z^*)
        \geq (\tilde{V}_{W,m_{W}^*}^*-z^*)-(\tilde{V}_{W',m_{W}^*}^*-z^*)
        \geq (\tilde{V}_{W,m_{W}^*}^*-z^*)-(\tilde{V}_{W',m_{W'}^*}^*-z^*).
    \end{align}
    The above two relations yield
    \begin{align}
        (\tilde{V}_{U,m_U^*}^*-z^*)-(\tilde{V}_{U',m_{U'}^*}^*-z^*)
        \geq (\tilde{V}_{W,m_{W}^*}^*-z^*)-(\tilde{V}_{W',m_{W'}^*}^*-z^*),
    \end{align}
    which shows that $g^{\text{TCH-Set}}(\vX_U) - g^{\text{TCH-Set}}(\vX_{U'}) \geq g^{\text{TCH-Set}}(\vX_V) - g^{\text{TCH-Set}}(\vX_{V'})$.
\end{enumerate}
Then, we demonstrate the modularity of $g_{\mu}^{\text{SSoM}}$ and $g_{\mu}^{\text{STCH-Set}}$. 

First, we have 
\begin{align}
    g_{\mu}^{\text{SSoM}}(\vX_U) - g_{\mu}^{\text{SSoM}}(\vX_{U'}) &= \sum_{m=1}^{M} \mu \log\left( 1+ \frac{e^{-\frac{\tilde{v}_m(\vx')}{\mu}}}{\sum_{k=1}^{U} e^{-\frac{\tilde{v}_m(\vx^{(k)})}{\mu}}} \right), \\
     g_{\mu}^{\text{SSoM}}(\vX_W) - g_{\mu}^{\text{SSoM}}(\vX_{W'}) &= \sum_{m=1}^{M} \mu \log\left( 1+ \frac{e^{-\frac{\tilde{v}_m(\vx')}{\mu}}}{\sum_{k=1}^{W} e^{-\frac{\tilde{v}_m(\vx^{(k)})}{\mu}}} \right).
\end{align}
Since $\sum_{k=1}^{U} e^{-\frac{\tilde{v}_m(\vx^{(k)})}{\mu}}\leq\sum_{k=1}^{W} e^{-\frac{\tilde{v}_m(\vx^{(k)})}{\mu}}$, we can derive $g_{\mu}^{\text{SSoM}}(\vX_U) - g_{\mu}^{\text{SSoM}}(\vX_{U'}) \geq g_{\mu}^{\text{SSoM}}(\vX_W) - g_{\mu}^{\text{SSoM}}(\vX_{W'})$.   
In a similar way, we could prove $g_{\mu}^{\text{STCH-Set}}(\vX_U) - g_{\mu}^{\text{STCH-Set}}(\vX_{U'}) \geq g_{\mu}^{\text{STCH-Set}}(\vX_W) - g_{\mu}^{\text{STCH-Set}}(\vX_{W'})$. Therefore, Theorem~\ref{thm_informal_supermodularity} holds.
\end{proof}

\subsection{Proof of Theorem~\ref{thm_pareto_optimality}}

\textbf{Theorem~\ref{thm_pareto_optimality} }(Pareto Optimality of Solutions). 
\textit{All solutions in the optimal solution set $\vX^*_K$ for SoM~(\ref{eq_som}), SSoM~(\ref{eq_ssom}) or STCH-Set~(\ref{eq_stch_set_scalarization_same_u}) scalarization are Pareto optimal for the original multi-objective optimization problem~(\ref{eq_mop}).}  
\begin{proof}

The Pareto optimality of the solutions found by SoM and STCH-Set have been proved in previous work \cite{lin2025few,liu2025few}. Here, we extend the analysis to SSoM. 

The proof employs contradiction based on the Definition~\ref{def_pareto_optimality} for (weakly) Pareto optimality and the general form of the weighted Smooth Sum-of-Minimum (Weighted SSoM) scalarization:
\begin{equation}
g_\mu^{\text{SSoM}}(\vX_K; \vlambda) = -\mu \sum_{m=1}^{M} \lambda_m \log\left( \sum_{k=1}^{K} e^{-\tilde{v}_m(\vx^{(k)}) / \mu} \right),
\label{eq_supp_ssom_optimality_general}
\end{equation}
where $\vlambda = (\lambda_1, \dots, \lambda_M)$ with $\lambda_m \geq 0$ for all $m$.

\paragraph{Weakly Pareto Optimality}

Let $\vX^*_K = \{\vx^{(1)}, \dots, \vx^{(K)} \}$ be an optimal solution set minimizing $g_\mu^{\text{SSoM}}(\cdot; \vlambda)$. Assume, for the sake of contradiction, that a specific solution $\vx^{(p)} \in \vX^*_K$ is not weakly Pareto optimal for problem (\ref{eq_mop}). By definition, there must exist another solution $\hat{\vx} \in \mathcal{X}$ that strictly dominates it:
\begin{equation}
\tilde{v}_m(\hat{\vx}) < \tilde{v}_m(\vx^{(p)}) \quad \text{for all } m = 1,\dots,M.
\label{eq:strict_domination}
\end{equation}

Now, construct an alternative solution set $\hat{\vX}_K$ by substituting $\hat{\vx}$ for $\vx^{(p)}$ in $\vX^*_K$:
\begin{equation}
\hat{\vX}_K = \{\vx^{(1)}, \dots, \vx^{(p-1)}, \hat{\vx}, \vx^{*(p+1)}, \dots, \vx^{*(K)}\}.
\label{eq_solution_set_x_hat}
\end{equation}

For each objective $m$, inequality (\ref{eq:strict_domination}) implies $e^{-\tilde{v}_m(\hat{\vx})/\mu} > e^{-\tilde{v}_m(\vx^{*(p)})/\mu}$. Since all other solutions remain unchanged, we have:
\begin{equation}
\sum_{k=1}^{K} e^{-\tilde{v}_m(\hat{\vx}^{(k)})/\mu} > \sum_{k=1}^{K} e^{-\tilde{v}_m(\vx^{*(k)})/\mu} \quad \forall m,
\end{equation}
where $\hat{\vx}^{(k)}$ denotes the $k$-th solution in $\hat{\vX}_K$. The logarithm function is strictly increasing, and $\lambda_m \geq 0$, therefore:
\begin{equation}
-\lambda_m \mu \log\left( \sum_{k=1}^{K} e^{-\tilde{v}_m(\hat{\vx}^{(k)})/\mu} \right) < -\lambda_m \mu \log\left( \sum_{k=1}^{K} e^{-\tilde{v}_m(\vx^{*(k)})/\mu} \right) \quad \forall m.
\end{equation}

Summing over all objectives yields:
\begin{equation}
g_{\mu}^{\text{SSoM}}(\hat{\vX}_K; \vlambda) < g_{\mu}^{\text{SSoM}}(\vX^*_K; \vlambda).
\end{equation}
This inequality contradicts the initial assumption that $\vX^*_K$ is a minimizer of $g_\mu^{\text{SSoM}}(\cdot; \vlambda)$. Hence, the assumption was false, and every solution in $\vX^*_K$ must be weakly Pareto optimal for the original multi-objective optimization probelm (\ref{eq_mop}).

\paragraph{Pareto Optimality}

We now establish two sufficient conditions under which the solutions are Pareto optimal, not merely weakly Pareto optimal.

\textbf{Condition 1: Uniqueness of the Optimal Set.}
Suppose $\vX^*_K$ is the unique global minimizer of $g_\mu^{\text{SSoM}}(\cdot; \vlambda)$. Assume a solution $\vx^{(p)} \in \vX^*_K$ is not Pareto optimal, then there exists $\hat{\vx} \in \mathcal{X}$ such that:
\begin{equation}
\tilde{v}_m(\hat{\vx}) \leq \tilde{v}_m(\vx^{(p)}) ; \forall m, \quad \text{and} \quad \tilde{v}_{m'}(\hat{\vx}) < \tilde{v}_{m'}(\vx^{(p)}) \text{ for some } m'.
\label{eq:domination}
\end{equation}

Construct $\hat{\vX}_K$ as in (\ref{eq_solution_set_x_hat}). From (\ref{eq:domination}), it follows that for objective $m'$, the inner sum in $g_\mu^{\text{SSoM}}(\hat{\vX}_K; \vlambda)$ strictly increases when replacing $\vx^{(p)}$ with $\hat{\vx}$, while for other objectives ($m \neq m'$), the sum does not decrease. Since the logarithm is increasing (and hence the negative logarithm is decreasing) and $\lambda_m \geq 0$, we obtain:
\begin{equation}
g_\mu^{\text{SSoM}}(\hat{\vX}_K; \vlambda) \leq g_\mu^{\text{SSoM}}(\vX^*_K; \vlambda).
\label{eq:no_worse}
\end{equation}
However, the uniqueness of $\vX^*_K$ implies that any distinct set $\hat{\vX}_K$ must satisfy $g_\mu^{\text{SSoM}}(\hat{\vX}_K; \vlambda) > g_\mu^{\text{SSoM}}(\vX^*_K; \vlambda)$. This contradicts (\ref{eq:no_worse}). Therefore, under the uniqueness condition, all solutions in $\vX^*_K$ are Pareto optimal.

\textbf{Condition 2: All Positive Weights.}
Now suppose all weights are strictly positive: $\lambda_m > 0$ for $m=1,\dots,M$. Again, assume a solution $\vx^{(p)} \in \vX^*_K$ is not Pareto optimal, so a dominating $\hat{\vx}$ satisfying (\ref{eq:domination}) exists. For the objective $m'$ where strict improvement occurs, the positivity of $\lambda_{m'}$ guarantees:
\begin{equation}
-\lambda_{m'} \mu \log\left( \sum_{k=1}^K e^{-\tilde{v}_{m'}(\hat{\vx}^{(k)})/\mu} \right) < -\lambda_{m'} \mu \log\left( \sum_{k=1}^K e^{-\tilde{v}_{m'}(\vx^{(k)})/\mu} \right).
\end{equation}

For objectives $m \neq m'$, the corresponding terms are less than or equal. Summing over all $M$ objectives, the strict inequality for $m'$ ensures the total sum is strictly smaller:
\begin{equation}
g_\mu^{\text{SSoM}}(\hat{\vX}_K; \vlambda) < g_\mu^{\text{SSoM}}(\vX^*_K; \vlambda).
\end{equation}
This result contradicts the optimality of $\vX^*_K$. Consequently, when all preferences are positive, every solution in the optimal set $\vX^*_K$ is Pareto optimal.

In this work, we use the SSoM with uniform weights for the QD optimization ($\lambda_m = 1$ for all $M$ objectives), which satisfies condition 2 of all positive weights. Therefore, all solutions in the optimal solution set $\vX^*_K$ for SSoM scalarization (\ref{eq_ssom}) are Pareto optimal for the original multi-objective optimization problem~(\ref{eq_mop}). 

\end{proof}

\subsection{Proof of Theorem~\ref{thm_tch_set_pareto_optimality}}

\textbf{Theorem~\ref{thm_tch_set_pareto_optimality} }(Existence of Pareto Optimal Solution). 
\textit{For the TCH-Set scalarization (\ref{eq_tch_set_scalarization}), there exists at least one optimal solution set $\vX^*_K$ whose solutions are all Pareto optimal for (\ref{eq_mop}). Moreover, if the optimal set $\vX^*_K$ is unique, then all its solutions are Pareto optimal.}  
\begin{proof}

The proof of Theorem~\ref{thm_tch_set_pareto_optimality} can be found in \cite{lin2025few,liu2025few}.

\end{proof}

\subsection{Proof of Theorem~\ref{thm_bounded_approximation}}

\textbf{Theorem~\ref{thm_bounded_approximation} }(Smooth Approximation). 
\textit{SSoM scalarization (\ref{eq_ssom}) is a uniform smooth approximation of SoM scalarization (\ref{eq_som}). STCH-Set scalarization (\ref{eq_stch_set_scalarization_same_u}) is a uniform smooth approximation of TCH-Set scalarization (\ref{eq_tch_set_scalarization}).}  
\begin{proof}

The previous work~\cite{lin2025few} has proved that the STCH-Set scalarization (\ref{eq_stch_set_scalarization_same_u}) is a uniform smooth approximation of TCH-Set scalarization (\ref{eq_tch_set_scalarization}). Here, we extend the analysis to SSoM. 

Formally, we prove that the Smooth Sum-of-Minimum (SSoM) scalarization is a uniform smooth approximation of the (non-smooth) Sum-of-Minimum (SoM) scalarization:
\begin{align}
&g^{\text{SoM}}(\vX_K; \vlambda) = \sum_{m=1}^{M} \lambda_m \min_{\vx \in \vX_K} \tilde{v}_m(\vx), \\
&g_\mu^{\text{SSoM}}(\vX_K; \vlambda) = -\mu \sum_{m=1}^{M} \lambda_m \log\left( \sum_{k=1}^{K} e^{-\tilde{v}_m(\vx^{(k)}) / \mu} \right),
\end{align}
where $\vlambda = (\lambda_1, \dots, \lambda_M)$ with $\lambda_m \geq 0$ for all $m$, and $\mu > 0$ is a smooth parameter. 

The proof follows the standard technique of establishing upper and lower bounds for the approximation and showing their tight convergence \cite{nesterov2005smooth}.

\paragraph{Bounds for the Smooth Minimum Operator}

The core of the approximation lies in the smooth minimum operator $\smin_\mu$. For any set of values $\{a_k\}_{k=1}^K = \{a_1,\dots,a_K \} $, it is known \cite{bertsekas2003convex, boyd2004convex} that the following inequalities hold:
\begin{equation}
\smin_\mu(\{a_k\}_{k=1}^K) \leq \min(\{a_k\}_{k=1}^K) \leq \smin_\mu(\{a_k\}_{k=1}^K) + \mu \log K,
\label{eq:smin_bounds_general}
\end{equation}
where $\smin_\mu(\{a_k\}_{k=1}^K) = -\mu \log \left( \sum_{k=1}^K e^{-a_k / \mu} \right)$. As $\mu \downarrow 0$, both bounds converge to $\min(\{a_k\}_{k=1}^K)$, making the approximation tight.

\paragraph{Bounding the SoM Scalarization}

Applying the bounds (\ref{eq:smin_bounds_general}) to each term in the SoM scalarization, we obtain for each objective $m$:
\begin{equation}
-\mu \log\left( \sum_{k=1}^{K} e^{-\tilde{v}_m(\vx^{(k)}) / \mu} \right) \leq \min_{\vx \in \vX_K} \tilde{v}_m(\vx) \leq -\mu \log\left( \sum_{k=1}^{K} e^{-\tilde{v}_m(\vx^{(k)}) / \mu} \right) + \mu \log K.
\end{equation}

Multiplying each inequality by $\lambda_m \geq 0$ and summing over all $M$ objectives yields the desired bounds for the full scalarization:
\begin{equation}
g_\mu^{\text{SSoM}}(\vX_K; \vlambda) \leq 
g^{\text{SoM}}(\vX_K; \vlambda) \leq g_\mu^{\text{SSoM}}(\vX_K; \vlambda) + \mu \sum_{m=1}^{M} \lambda_m \log K. 
\label{eq_ssom_bounds}
\end{equation}

\paragraph{Uniform Convergence}

From inequalities (\ref{eq_ssom_bounds}), it is evident that for any fixed solution set $\vX_K$ and weight vector $\vlambda$, the approximation error is bounded by:
\begin{equation}
0 \leq g^{\text{SoM}}(\vX_K; \vlambda) - g_\mu^{\text{SSoM}}(\vX_K; \vlambda) \leq  \mu \sum_{m=1}^{M} \lambda_m \log K.
\end{equation}
This error bound is linear in $\mu$ and independent of $\vX_K$ for fixed $K$ and $\vlambda$. Therefore, as the smoothing parameter $\mu$ approaches zero, the bound tightens uniformly:
\begin{equation}
\lim_{\mu \downarrow 0} g_\mu^{\text{SSoM}}(\vX_K; \vlambda) = g^{\text{SoM}}(\vX_K; \vlambda) \quad \text{for all } \vX_K \subset \mathcal{X}.
\end{equation}
The existence of such uniform error bounds that vanish with $\mu$ is the defining characteristic of a uniform smooth approximation \cite{nesterov2005smooth}. Hence, $g_\mu^{\text{SSoM}}(\cdot; \vlambda)$ is a uniform smooth approximation of $g^{\text{SoM}}(\cdot; \vlambda)$.

\end{proof}

\clearpage
\section{Derivation of STCH-Set}
\label{supp_sec_ssom_stch-set}

The Tchebycheff Set scalarization (TCH-Set) (\ref{eq_tch_set_scalarization}) involves non-differentiable $\max$ and $\min$ operators, which hinder efficient gradient-based optimization even when the underlying objective functions are smooth. To address this, \citet{lin2025few} adopt the standard log-sum-exp smoothing technique \cite{beck2012smoothing} to develop the smooth Tchebycheff Set scalarization (STCH-Set) for multi-objective optimization. 

For a set of values $\{a_i\}_{i=1}^n$, the smooth maximum and smooth minimum approximations with parameter $\mu > 0$ are defined as:
\begin{align}
&\smax_{\mu}(a_1, \dots, a_n) = \mu \log\left( \sum_{i=1}^n e^{a_i / \mu} \right), \\
&\smin_{\mu}(a_1, \dots, a_n) = -\mu \log\left( \sum_{i=1}^n e^{-a_i / \mu} \right).
\end{align}
As $\mu \to 0$, these operators recover the standard $\max$ and $\min$.

The STCH-Set scalarization applies this smoothing to the TCH-Set scalarization $g^{\text{TCH-Set}}(\vX_K; \vlambda)$ from (\ref{eq_tch_set_scalarization}). The outer $\max$  and the inner $\min$ are replaced by their smooth counterparts $\smax_{\mu}$ and $\smin_{\mu_m}$, respectively, with separate smoothing parameters $\mu$ (for the $\max$) and $\{\mu_m \}_{m=1}^M$ (for the $\min$ ). For generality, we also include a weight vector $\vlambda$ and a reference point $\vz^*$. This yields the general smooth formulation:
\begin{align}
g_{\mu, {\mu_m}}^{\text{(STCH-Set)}}(\vX_K; \vlambda) &= \smax_{\mu} \left\{ \lambda_m \left( \smin_{\mu_m} \left\{ \tilde{v}_m(\vx^{(k)}) \right\}_{k=1}^K - z_m^* \right) \right\}_{m=1}^M \nonumber \\
&= \mu \log \left( \sum_{m=1}^{M} \exp\Big( \frac{ \lambda_m \big( \smin_{\mu_m} { \tilde{v}_m(\vx^{(k)}) } - z_m^* \big) }{\mu} \Big) \right).
\label{eq_stch_set_general}
\end{align}

Substituting the definition of $\smin_{\mu_m}$ into the above equation gives the full expression:
\begin{align}
g_{\mu, {\mu_m}}^{\text{(STCH-Set)}}(\vX_K; \vlambda) &= \mu \log \Bigg( \sum_{m=1}^{M} \exp\Bigg( \frac{ \lambda_m \Big( -\mu_m \log\big( \sum_{k=1}^{K} e^{ -\tilde{v}_m(\vx^{(k)}) / \mu_m } \big) - z_m^* \Big) }{\mu} \Bigg) \Bigg). 
\label{eq_stch_set_full}
\end{align}

In this work, following \citet{lin2025few}, we use a single shared smoothing parameter $\mu$ for all operators (i.e., setting $\mu_m = \mu$ for all $m$) to simplify the smooth approximation. Applying this to the full expression~(\ref{eq_stch_set_full}) and assuming uniform preferences ($\lambda_m = 1$ for all $m$) as in the standard QD setting, we obtain the simplified STCH-Set scalarization~(\ref{eq_stch_set_scalarization_same_u}):
\begin{equation}
g_{\mu}^{\text{STCH-Set}}(\vX_K) = \mu \log\left( \sum_{m=1}^{M} e^{ - \log\left( \sum_{k=1}^{K} e^{- \frac{\tilde{v}_m(\vx^{(k)})}{\mu}} \right) - z_m^* } \right).
\end{equation}
This formulation is fully differentiable with respect to the solutions $\vx^{(k)}$, enabling efficient gradient-based optimization of the entire solution set $\vX_K$.

\clearpage
\section{Problem Details}
\label{supp_sec_problem}

\subsection{Linear Projection}

\begin{figure*}[h]
\centering
\subfloat[2D Heatmap]{\includegraphics[width = 0.3\linewidth]{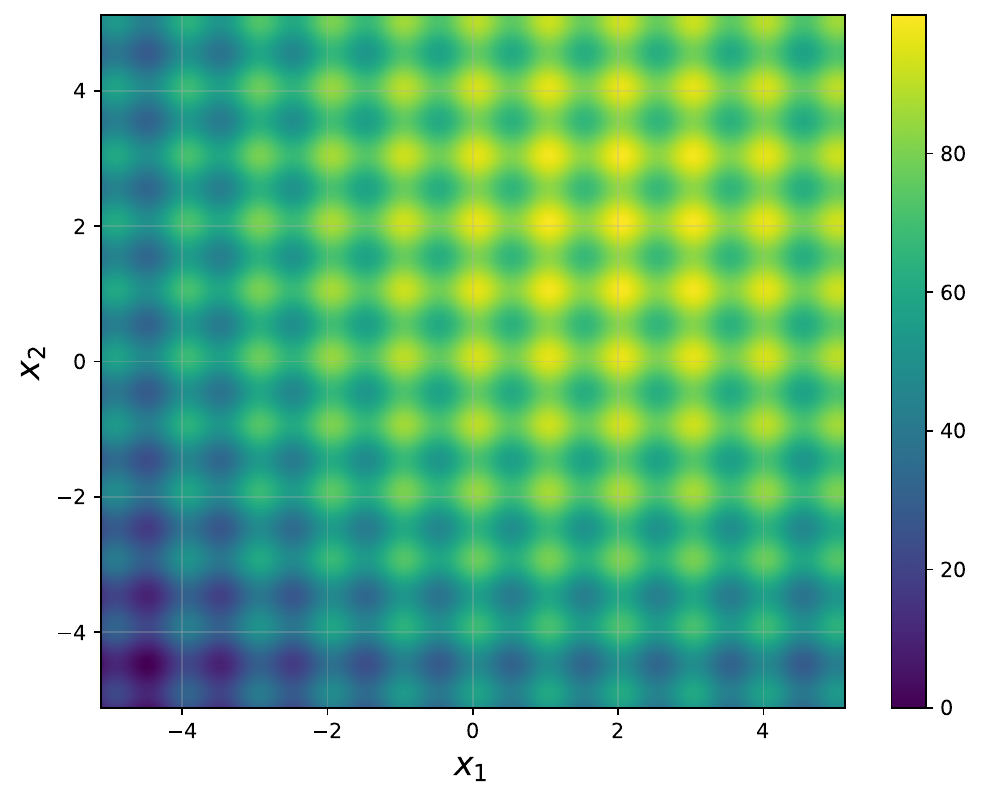}}
\subfloat[3D Surface Plot]{\includegraphics[width = 0.3\linewidth]{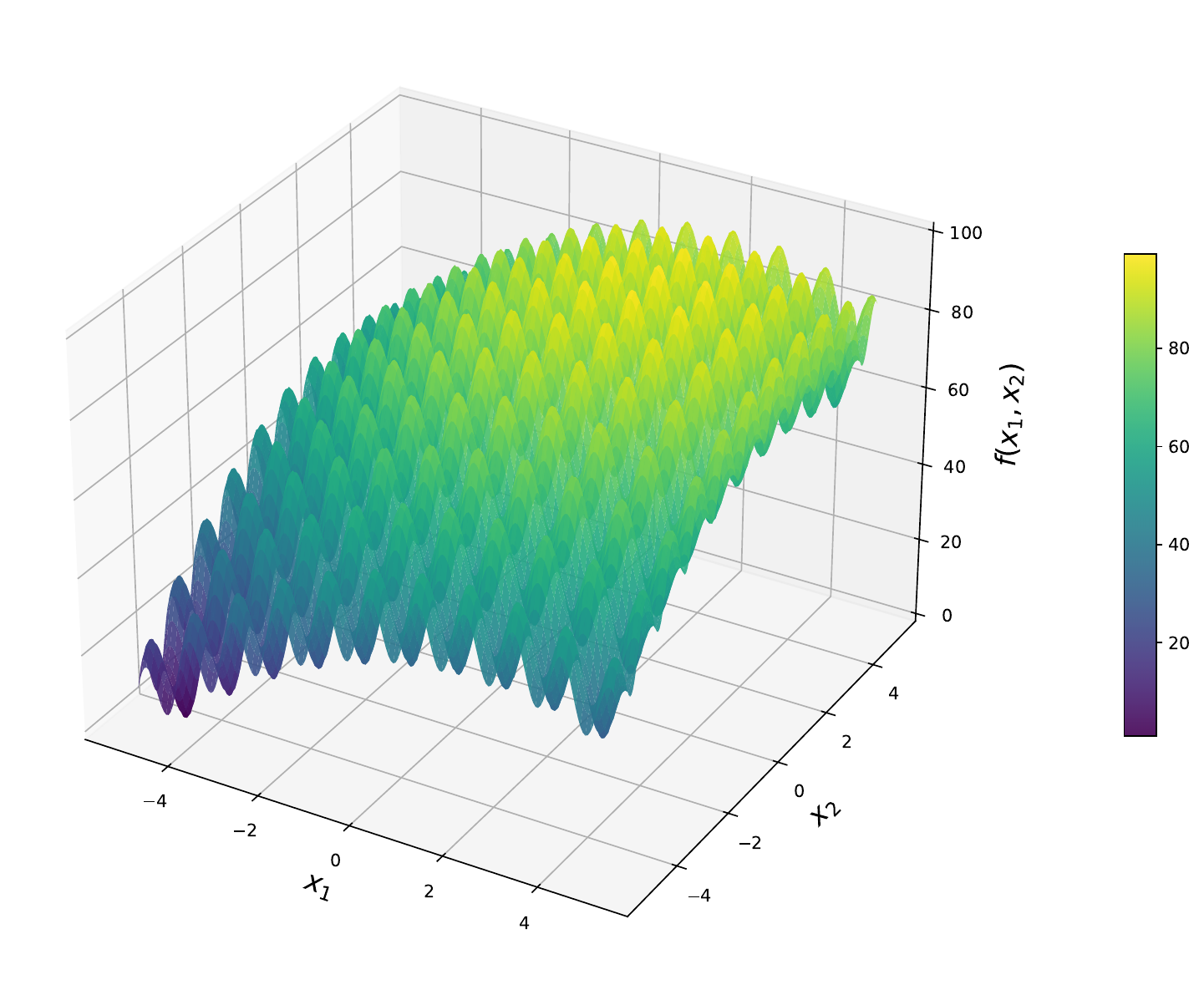}}\hfill
\caption{Rastrigin Function}
\label{fig_rastrigin}
\end{figure*}

Linear Projection (LP) is a benchmark designed to evaluate the scalability of quality-diversity (QD) algorithms in high-dimensional behavior space~\cite{fontaine2020covariance}. It is built upon the multimodal Rastrigin function~\cite{rastrigin1974systems,hoffmeister1990genetic}, coupled with a controllable projection into a behavioral space of predefined dimensionality.

Following the setting in \citet{hedayatian2025soft}, this problem is formulated as follows:

\begin{enumerate}

    \item \textbf{Solution Space:} Algorithms optimize a $1024$-dimensional real-valued vector $\mathbf{x} \in \mathbb{R}^{1024}$, with each variable bounded within $[-5.12, 5.12]$.

    \item \textbf{Objective Function:} The goal is to maximize a transformed version of the Rastrigin function, which is defined as:
    \begin{equation}
    f_{\text{Rastrigin}}(\vx) = 10n + \sum_{i=1}^{n} \left[ x_i^2 - 10 \cos(2\pi x_i) \right],
    \end{equation}
    where $n=1024$. To increase difficulty, its global optimum is shifted from the origin to $[2.048, \dots, 2.048]^\top \in \mathbb{R}^{1024}$. The problem is then converted to maximization via a linear transformation that normalizes the objective to the range $[0, 100]$:
    \begin{equation}
    f(\mathbf{x}) = 100 \times \frac{ M_{\text{Rastrigin}} - f_{\text{Rastrigin}}(\mathbf{x})}{M_{\text{Rastrigin}}},
    \end{equation}
    where $M_{\text{Rastrigin}}$ is the maximum value of the Rastrigin function in the search domain. The 2D and 3D plots of the transformed Rastrigin function can be found in Figure~\ref{fig_rastrigin}. 

    \item \textbf{Behavior Descriptor:} To define a $d$-dimensional behavior space, the $1024$-dimensional solution vector $\vx$ is partitioned into $d$ equally sized blocks. For each block, the mean of its components is computed and clipped to $[-5.12, 5.12]$. Formally, the $k$-th component of the behavior descriptor is:
    \begin{equation}
    \vb_k(\vx) = \frac{1}{d} \sum_{i=\frac{(k-1)n}{d} + 1}^{\frac{kn}{d}} \mathrm{clip}(x_i), \quad k = 1, \dots, d,
    \end{equation}
    where the clipping function is defined as:
    \begin{equation}
    \mathrm{clip}(x_i) = \begin{cases} 
        x_i & \text{if } -5.12 \leq x_i \leq 5.12, \\
        \frac{5.12}{x_i} & \text{otherwise}.
    \end{cases}
    \end{equation}
    By varying $d$, we can systematically adjust the dimensionality of the behavioral space. In this work, we consider the cases $d \in \{ 4, 8, 16 \}$.
\end{enumerate}

Algorithms must simultaneously achieve high objective values while ensuring their solutions are diverse in this projected $d$-dimensional behavior space. As $d$ increases, the challenge of maintaining behavioral diversity grows substantially, allowing a rigorous assessment of how different QD methods scale with the dimensionality of the behavior space. 

\subsection{Image Composition}

\begin{figure*}[h]
    \centering
    \includegraphics[width= 0.25 \linewidth]{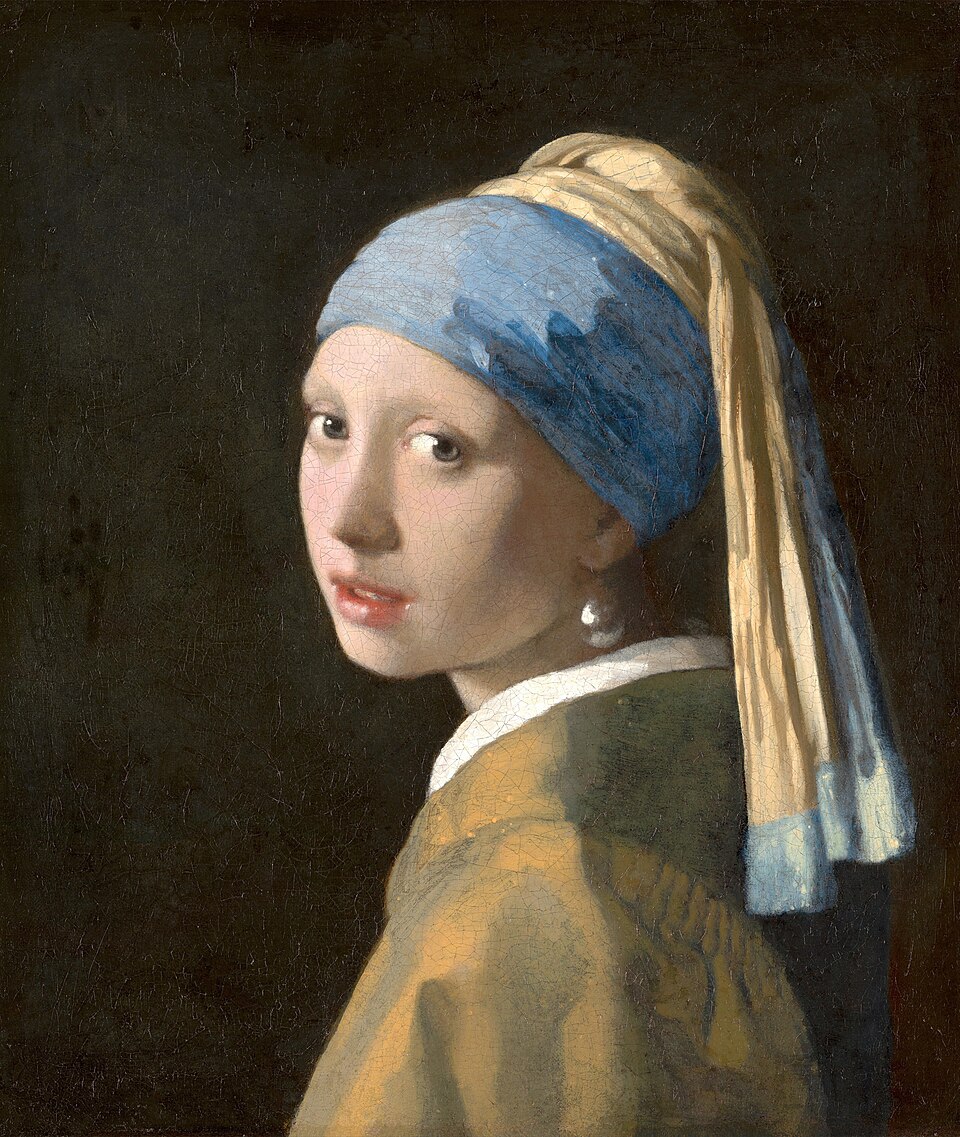}
    \caption{Girl with a Pearl Earring}
    \label{fig_girl_pearl_earing}
\end{figure*}

Image Composition (IC) is a new differentiable quality-diversity benchmark recently introduced in \citet{hedayatian2025soft}, which is inspired by computational creativity applications~\cite{tian2022modern}. It requires QD algorithms to reconstruct a target image by composing multiple parameterized circles while exploring diverse visual styles in a continuous behavior space. The benchmark is constructed as follows:

\begin{enumerate}
    \item \textbf{Solution Space:} Each solution is represented by a parameter vector that defines a set of $1024$ overlapping circles. Each circle is parameterized by 7 values: the $(x, y)$ center coordinates, radius, three RGB color channels, and an opacity coefficient. These parameters are initially represented as unconstrained logits, which are passed through sigmoidal transformations and rescaled to their respective valid ranges before rendering. Therefore, the solution dimension is $n = 1024 \times 7 = 7168$.

    \item \textbf{Objective Function:} The quality of a solution is determined by its similarity to a fixed target image. A fully differentiable renderer composites the parameterized circles, with soft edges controlled by a smoothness hyperparameter, onto a $64 \times 64$ black canvas using alpha-compositing. The similarity between the rendered image and the target is measured using the Structural Similarity Index Measure (SSIM)~\cite{wang2024image}, which is subsequently normalized to the range $[0, 100]$. Following \citet{hedayatian2025soft}.

    \item \textbf{Behavior Descriptor:} A 5-dimensional behavior space is defined to capture diverse visual styles:
    \begin{itemize}
        \item \textbf{Mean Radius of Circles},
        
        \item \textbf{Variance of Radii},
        
        \item \textbf{Color Spread}: variety of RGB values in the palette,
        
        \item \textbf{Color Harmony}: coherence of circle hues in HSV space,

        \item \textbf{Degree of Spatial Clustering} based on average 5-nearest-neighbor distances.
    \end{itemize}
    Each descriptor is normalized to the range $[0, 1]$, where higher values correspond to larger mean radius, larger variance of radii, greater diversity, more harmony, or tighter clustering, respectively.

\end{enumerate}

The entire evaluation pipeline, from solution parameters to the final objective score and behavior descriptor vector, is fully differentiable. This property enables the direct application of gradient-based optimizers throughout the quality-diversity search process. Following \citet{hedayatian2025soft}, in our experiments, the target image is Johannes Vermeer's painting ``Girl with a Pearl Earring'' from Wikimedia Commons as shown in Figure~\ref{fig_girl_pearl_earing}.

\subsection{Latent Space Illumination}

Latent Space Illumination (LSI)~\cite{fontaine2021differentiable} is a challenging quality-diversity benchmark that requires algorithms to explore the high-dimensional latent space of a pre-trained generative model, such as StyleGAN2~\cite{karras2020analyzing}. The core challenge lies in discovering a diverse set of high-quality latent vectors that, when decoded by the generator, produce images that are both visually faithful to a target concept and varied along specified semantic attributes. 

Following the setup of previous work~\cite{ hedayatian2025soft}, this benchmark is constructed as follows with two variants (base version and hard version):

\begin{enumerate}
\item \textbf{Solution Space:} Algorithms search within the latent space of a pre-trained StyleGAN2~\cite{karras2020analyzing}. Each solution corresponds to a point in this $9216$-dimensional latent space, which serves as input to the generator to produce an image.

\item \textbf{Objective Function:} The quality of a solution is determined by how well the generated image matches a specified target concept. In this work, we adopt the standard LSI formulation~\cite{fontaine2023covariance, hedayatian2025soft} where the objective is to generate high-quality images of a specific person (e.g., “A photo of Tom Cruise” in base version) or a thematic concept (e.g., “A photo of a detective from a noir film” in hard version). The quality score is computed by first generating an image from the latent vector using StyleGAN2, then comparing it to the target concept using CLIP~\cite{radford2021learning}, which quantifies the semantic similarity between the generated image and a text prompt describing the target. 

\item \textbf{Behavior Descriptor:} The behavior diversity of a population of generated images is quantified along a set of interpretable, pre-specified attributes. These attributes are computed by using CLIP to compare each generated image against two descriptive text sentences, of which one that positively represents the attribute and one that negatively represents it.

\paragraph{Base Version} 

In the base version, the behavior space is 2-dimensional, capturing attributes such as age and hair length:

\begin{itemize}
    \item (“Photo of Tom Cruise as a small child”, “Photo of Tom Cruise as an elderly person”)

    \item (“Photo of Tom Cruise with long hair”, “Photo of Tom Cruise with short hair”)
\end{itemize}

\paragraph{Hard Version}

To create a more challenging optimization landscape, QD algorithms are also evaluated on a hard version, which extends the behavior space to 7 dimensions:

\begin{itemize}
    \item (“Photo of a young kid”, “Photo of an elderly person”)

    \item (“Photo of a person with long hair”, “Photo of a person with short hair”)

    \item (“Photo of a person with dark hair”, “Photo of a person with white hair”)

    \item (“Photo of a person smiling”, “Photo of a person frowning”)

    \item (“Photo of a person with a round face”, “Photo of a person with an oval face”)

    \item (“Photo of a person with thin, sparse hair”, “Photo of a person with thick, full hair”)

    \item (“Photo of a person looking directly into the camera”, “Photo of a person looking
sideways”)
    
\end{itemize}

\end{enumerate}

\clearpage
\section{Experiment Setting}
\label{supp_sec_setting}

\subsection{Metrics}

To comprehensively evaluate the performance of QD algorithms, we employ a set of metrics that assess solution quality, population diversity, and their combined overall QD performance.

\subsection*{1. Quality Metrics}
These metrics measure the quality of the found solutions.
\begin{itemize}
    \item \textbf{Mean Objective:} The mean of the objective values $f_i$ for all solutions in the solution set. 
    
    \item \textbf{Max Objective:} The highest objective value found in the solution set.
    
\end{itemize}

\subsection*{2. Diversity Metrics}
These metrics quantify the behavioral diversity of solutions.
\begin{itemize}
    \item \textbf{Coverage:} This metric relies on a discretization of the behavior space using a fixed-size CVT archive~\cite{vassiliades2017using}. Following \citet{hedayatian2025soft}, we use $512$ cells for LP and LSI and $1024$ cells for IC. Coverage is defined as the percentage of cells in this archive that are occupied by at least one solution. It directly measures the breadth of exploration in a discretized behavioral grid.
    
    \item \textbf{Vendi Score (VS)~\cite{friedman2023vendi}:} A discretization-free measure of diversity that calculates the effective number of unique behaviors in a solution set. Given a set of solutions with behavior descriptors $\{\mathbf{b}_i\}_{i=1}^n$, we construct a similarity matrix $\mathbf{K} \in \mathbb{R}^{n\times n}$ using a Gaussian kernel:
    \begin{equation}
    \mathbf{K}_{ij} = \exp\left( -\frac{\| \mathbf{b}_i - \mathbf{b}_j \|^2}{\sigma_v^2} \right),
    \end{equation}
    where $\sigma_v^2$ is set to $d/6$ for a $d$-dimensional behavior space as in \citet{hedayatian2025soft}. This is a heuristic to ensure a similarity of $e^{-1}$ for two uniformly random vectors in the behavior space. 
    The Vendi Score is then defined as:
    \begin{equation}
    \mathrm{VS}(\mathbf{K}) = \exp\left( -\sum_{i=1}^{n} \lambda_i \log \lambda_i \right),
    \end{equation}
    where $\lambda_i$ are the eigenvalues of the normalized matrix $\frac{1}{n}\mathbf{K}$. A higher VS indicates greater diversity.
\end{itemize}

\subsection*{3. Overall Performance Metrics}
These metrics jointly consider both quality and diversity for QD optimization.
\begin{itemize}
    \item \textbf{QD Score~\cite{pugh2016quality}:} The sum of the objective values of the best-performing solution in each occupied cell of a fixed CVT archive. It is the traditional performance measure in QD optimization, rewarding both high quality and broad coverage.
    
    \item \textbf{Quality-weighted Vendi Score (QVS)~\cite{nguyen2024quality}:} An extension of the Vendi Score that incorporates solution quality. It is computed by multiplying the Vendi Score by the mean objective of the solution set:
    \begin{equation}
    \mathrm{QVS}(\mathbf{K}, \{f_k\}_{k=1}^{K}) = \mathrm{VS}(\mathbf{K}) \times \frac{1}{n}\sum_{k=1}^{K} f_k.
    \end{equation}
    This metric emphasizes that a high-performing population must exhibit both high diversity and high quality. If the mean objective is negative, we report QVS as $0.0$.
\end{itemize}

\clearpage
\subsection{Hyperparameters}

We follow the hyperparameter configurations as reported in \citet{hedayatian2025soft} and the Soft QD codebase.

\paragraph{Baseline Algorithms}

All MAP-Elites based QD methods use CVT archives for discretizing the behavior space, with $10^4$ cells for the Linear Projection (LP) and Image Composition (IC) benchmarks, and a finer archive of $4 \times 10^4$ cells for the Latent Space Illumination (LSI) benchmark. For LP and IC, a grid search is conducted for each algorithm's most critical hyperparameters and selected the configuration yielding the highest QD Score. Key tuned parameters include:
\begin{itemize}
    \item \textbf{CMA-MEGA:} initial evolution strategy step size ($\sigma_0$) and the optimizer learning rate.
    \item \textbf{CMA-MAEGA:} initial evolution strategy step size ($\sigma_0$), optimizer learning rate, and archive learning rate.
    \item \textbf{Sep-CMA-MAE:} initial evolution strategy step size ($\sigma_0$) and archive learning rate.
    \item \textbf{GA-ME:} iso and line sigma parameters, the gradient step size.
    \item \textbf{DNS:} iso and line sigma parameters, number of nearest neighbors, and learning rate (for the DNS-G variant).
\end{itemize}
For LSI,the default hyperparameters follow the setting in \citet{fontaine2023covariance} that used in the \texttt{pyribs} implementation \cite{tjanaka2023pyribs}. The only modification is the reduced batch size of 16 (instead of 32) due to computational constraints.

\paragraph{SQUAD and MOO-based Approach}

All our introduced MOO-based approaches (SoM, TCH-Set, SSoM, and STCH-Set) follow the same hyperparameter setting for SQUAD~\cite{hedayatian2025soft}. These algorithms uses Adam~\cite{kingma2015adam} as their optimizer, and the default hyperparameter values are listed in Table~\ref{table_squad_params}, where the MOO-based methods do not need the number of neighbors.

\begin{table}[htbp]
\centering
\caption{Default hyperparameters for SQUAD and MOO-based Methods.}
\label{table_squad_params}
\begin{tabular}{lc}
\toprule
\textbf{Parameter} & \textbf{Default Value} \\
\midrule
Population Size  & 1024 \\
Batch Size  & 64 \\
Number of Neighbors  & 16 \\
Learning Rate & 0.05 \\
\bottomrule
\end{tabular}
\end{table}

All the MOO-based approaches and SQUAD are trained for $1000$ iterations in LP and IC, and for $175$ iterations in LSI. The number of iterations for all baseline algorithms are set such that they use at least as many evaluations as the MOO-based approaches and SQUAD in all domains.

\clearpage
\section{Additional Experiment Results and Analysis}
\label{supp_sec_experiment}

\subsection{Effect of Bandwidth $\gamma^2$}

\begin{figure*}[h]
    \centering
    \includegraphics[width= 0.5 \linewidth]{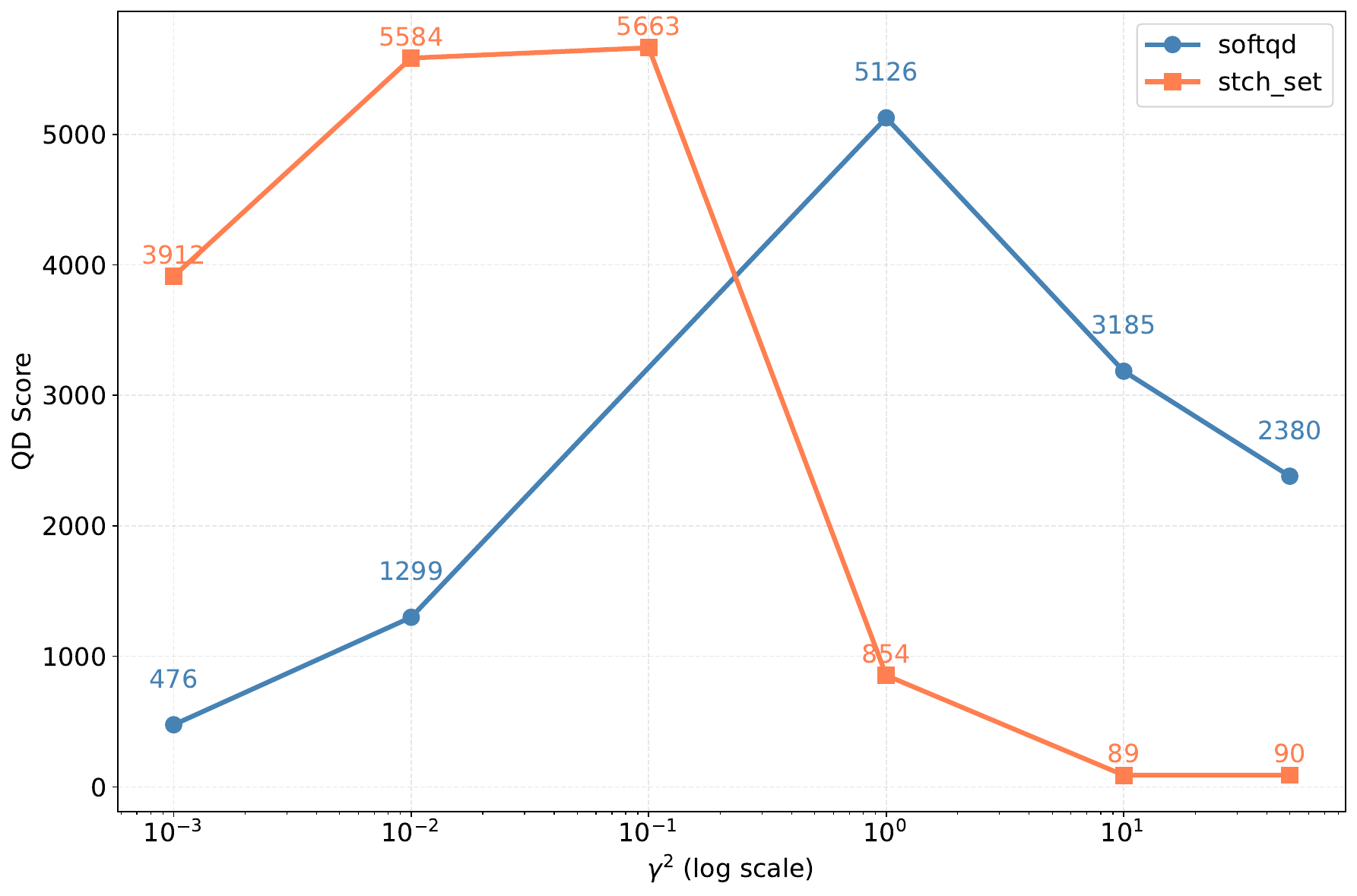}
    \caption{Effect of Bandwidth $\gamma^2$}
    \label{fig_qd_score_gamma}
\end{figure*}

We compare the sensitivity of STCH-Set and the Soft QD baseline SQUAD to the bandwidth parameter $\gamma^2$ on the Image Composition (IC) benchmark. The results reveal a strikingly different response pattern between these two methods as shown in Figure~\ref{fig_qd_score_gamma}. STCH-Set achieves its best performance at a much smaller $\gamma^2=0.1$, and maintains consistently high performance across the range $\gamma^2 \in [0.01, 0.1]$. Moreover, STCH-Set's performance across this effective range is competitive with or superior to SQUAD's best performance. However, STCH-Set's performance degrades rapidly for $\gamma^2 \geq 1$, falling significantly below SQUAD's. We note that attempts to run SQUAD at $\gamma^2 = 0.1$ with three different random seeds all resulted in numerical instability (NaN), a behavior we plan to investigate further in future work.

\subsection{Effect of Solution Set Size $K$}

\begin{table}[h]
\centering
\caption{Effect of Solution Set Size $K$.}
\label{table_population_size}
\begin{tabular}{l c c}
\hline
Algorithm & QD Score (\(\times 10^3\)) & QVS \\
\hline
STCH-Set (\(K = 128\)) & $9.31 \pm 0.16 $  & $333.50 \pm 1.95$  \\
STCH-Set (\(K = 256\)) & $17.42 \pm 0.22 $  & $384.75 \pm 2.63$ \\
STCH-Set (\(K = 512\)) & $31.52 \pm 0.59 $  & $438.70 \pm 3.27$  \\
STCH-Set (\(K = 1024\)) & $52.16 \pm 0.70$ & $584.00 \pm 2.14$ \\
\hline
\end{tabular}
\end{table}

We investigate the effect of the solution set size $K$ for STCH-Set on the Linear Projection (Hard, $d=16$) benchmark. Each configuration was run with $5$ independent random seeds, and the reported values are the mean and standard deviation. As shown in Table~\ref{table_population_size}, both the QD Score and QVS of the STCH-Set algorithm exhibit substantial and consistent improvement as $K$ increases from $128$ to $1024$. The low standard deviations across runs confirm the robustness of this trend. These results demonstrate that our set-based formulation effectively leverages a larger population size to discover a more comprehensive and higher-performing collection of solutions, thereby achieving better coverage of the behavior space.

\subsection{Effect of Smooth Parameter $\mu$}

\begin{table}[h]
\centering
\caption{Effect of Smooth Parameter $\mu$}
\label{table_smooth_parameter}
\begin{tabular}{l c c}
\hline
Algorithm & QD Score (\(\times 10^3\)) & QVS \\ 
\hline
STCH-Set (\(\mu = 0.001\)) & $51.85 \pm 0.63$ & $593.34 \pm 2.17$ \\ 
STCH-Set (\(\mu = 0.01\))  & $52.41 \pm 0.55$ & $584.37 \pm 3.56$ \\ 
STCH-Set (\(\mu = 0.1\))  & $47.17 \pm 0.71$ & $499.20 \pm 1.88$ \\ 
STCH-Set (\(\mu = 0.5\))   & $36.22 \pm 0.48$ & $292.77 \pm 3.24$ \\ 
STCH-Set (\(\mu = 1.0\))   & $35.85 \pm 0.77$ & $223.67 \pm 2.93$ \\ 
\hline
\end{tabular}
\end{table}

We analyze the sensitivity of the STCH-Set algorithm to its smooth parameter $\mu$ on the Linear Projection (Hard, $d=16$) task. As shown in Table~\ref{table_smooth_parameter}, performance remains high and stable for smaller values of $\mu$ ($0.001$ and $0.01$), achieving the best QD Score and QVS. This result aligns with the theoretical property of STCH-Set as a uniform smooth approximation of TCH-Set, which becomes exact as $\mu \rightarrow 0$. As $\mu$ increases to $0.1$ and beyond, both metrics decline noticeably, indicating that excessive smoothing weakens the alignment with the original optimization objective and degrades solution quality and diversity. These results suggest that in practice, a sufficiently small $\mu$ (e.g., $0.001$–$0.01$) should be chosen to balance numerical stability with faithful optimization.

\subsection{Full Results for Linear Projection}

\begin{table*}[ht]
\centering
\setlength{\tabcolsep}{4pt}
\caption{\textbf{Full Results for the Linear Projection (LP) Benchmark.}}
\label{table_lp}
\begin{tabular}{lccccccc}
\toprule
& \multicolumn{2}{c}{\textbf{Quality}} & \multicolumn{2}{c}{\textbf{Diversity}} & \multicolumn{2}{c}{\textbf{Overall Performance}} \\
\cmidrule(lr){2-3} \cmidrule(lr){4-5} \cmidrule(lr){6-7}
\textbf{Algorithm} & \textbf{Mean Objective} & \textbf{Max Objective} & \textbf{Coverage} & \textbf{Vendi Score} & \textbf{QD Score} & \textbf{QVS} \\
\midrule

\multicolumn{7}{c}{\textbf{Easy $(d=4)$}} \\
\midrule
\multicolumn{7}{c}{\textbf{MOO Methods}} \\
SoM                & $60.33 \pm 0.32$        & $89.88 \pm 1.02$       & $60.74 \pm 0.39$   & $5.79 \pm 0.07$      & $38844.5 \pm 498.2$ & $349.46 \pm 5.66$ \\
TCH-Set            & $59.74 \pm 0.03$        & $62.77 \pm 0.39$       & $0.17 \pm 0.00$   & $1.03 \pm 0.00$      & $103.98 \pm 28.58$ & $61.61 \pm 0.06$ \\
SSoM               & $64.67 \pm 0.17$        & $89.24 \pm 0.78$       & $66.68 \pm 0.24$   & $6.65 \pm 0.05$      & $44560.3 \pm 721.3$ & $430.12 \pm 3.31$ \\
STCH-Set           & $65.14 \pm 0.34$        & $89.96 \pm 1.43$       & $66.14 \pm 0.37$   & $6.58 \pm 0.07$      & $44647.7 \pm 486.8$ & $428.34 \pm 6.40$ \\
\addlinespace
\cmidrule(r){1-7}
\multicolumn{7}{c}{\textbf{QD Methods}} \\

SQUAD              & $68.36 \pm 0.05$        & $89.28 \pm 0.47$       & $63.96 \pm 0.09$   & $6.55 \pm 0.02$      & $45442.7 \pm 699.6$ & $448.09 \pm 1.65$ \\
CMA-MAEGA          & $66.05 \pm 1.06$        & $91.54 \pm 1.81$       & $94.81 \pm 0.18$   & $6.94 \pm 0.22$      & $66666.5 \pm 1850.7$ & $458.39 \pm 16.92$ \\
CMA-MEGA           & $66.40 \pm 0.67$        & $94.45 \pm 1.08$       & $98.04 \pm 0.10$   & $7.57 \pm 0.13$      & $71062.8 \pm 1227.8$ & $502.71 \pm 10.62$ \\
CMA-MAE            & $69.81 \pm 0.89$        & $78.71 \pm 0.91$       & $1.09 \pm 0.18$   & $1.25 \pm 0.02$      & $790.1 \pm 119.5$ & $87.20 \pm 1.30$ \\
PGA-ME             & $70.96 \pm 2.35$        & $79.54 \pm 0.80$       & $0.21 \pm 0.10$   & $1.07 \pm 0.01$      & $157.4 \pm 80.0$ & $76.24 \pm 2.00$ \\
DNS                & $67.95 \pm 0.15$        & $78.14 \pm 0.19$       & $5.62 \pm 0.19$   & $1.63 \pm 0.01$      & $4028.4 \pm 130.7$ & $111.01 \pm 0.99$ \\
DNS-G              & $78.36 \pm 0.37$        & $92.64 \pm 0.39$       & $2.80 \pm 0.53$   & $1.36 \pm 0.02$      & $2384.2 \pm 213.8$ & $106.38 \pm 1.51$ \\
NSLC               & $58.64 \pm 0.52$        & $62.22 \pm 0.89$       & $0.58 \pm 0.11$   & $1.13 \pm 0.01$      & $347.5 \pm 67.9$ & $66.20 \pm 0.53$ \\
\addlinespace

\multicolumn{7}{c}{\textbf{Medium $(d=8)$}} \\
\midrule
\multicolumn{7}{c}{\textbf{MOO Methods}} \\
SoM                & $66.82 \pm 0.11$        & $87.64 \pm 0.85$       & $66.10 \pm 0.22$   & $8.65 \pm 0.06$      & $46017.1 \pm 508.0$ & $578.21 \pm 4.28$ \\
TCH-Set            & $58.86 \pm 0.06$        & $62.96 \pm 0.54$       & $3.08 \pm 0.17$   & $1.13 \pm 0.01$      & $1066.7 \pm 51.6$ & $66.55 \pm 0.29$ \\
SSoM               & $70.18 \pm 0.08$        & $88.58 \pm 0.70$       & $67.51 \pm 0.15$   & $9.25 \pm 0.04$      & $49046.0 \pm 416.1$ & $648.84 \pm 2.70$ \\
STCH-Set           & $70.17 \pm 0.09$        & $88.31 \pm 0.78$       & $67.39 \pm 0.20$   & $9.24 \pm 0.06$      & $48959.8 \pm 708.9$ & $648.40 \pm 4.32$ \\
\addlinespace
\cmidrule(r){1-7}
\multicolumn{7}{c}{\textbf{QD Methods}} \\

SQUAD              & $69.41 \pm 0.06$        & $87.70 \pm 1.13$       & $69.76 \pm 0.18$   & $9.17 \pm 0.05$      & $50030.1 \pm 620.5$ & $636.21 \pm 3.61$ \\
CMA-MAEGA          & $61.80 \pm 0.38$        & $84.41 \pm 1.51$       & $99.30 \pm 0.29$   & $9.25 \pm 0.08$      & $67465.7 \pm 472.7$ & $571.49 \pm 6.38$ \\
CMA-MEGA           & $61.77 \pm 1.44$        & $84.90 \pm 1.74$       & $97.32 \pm 0.18$   & $7.34 \pm 0.55$      & $65520.4 \pm 2830.5$ & $453.84 \pm 41.70$ \\
CMA-MAE            & $66.49 \pm 4.48$        & $77.04 \pm 0.41$       & $0.80 \pm 0.29$   & $1.25 \pm 0.06$      & $578.3 \pm 142.9$ & $82.80 \pm 2.80$ \\
PGA-ME             & $70.09 \pm 2.35$        & $78.99 \pm 0.52$       & $0.10 \pm 0.00$   & $1.07 \pm 0.01$      & $79.0 \pm 0.5$ & $75.18 \pm 2.11$ \\
DNS                & $66.89 \pm 0.23$        & $77.53 \pm 0.39$       & $5.45 \pm 0.26$   & $1.68 \pm 0.01$      & $3849.0 \pm 179.6$ & $112.04 \pm 0.56$ \\
DNS-G              & $76.02 \pm 0.28$        & $91.67 \pm 0.65$       & $2.67 \pm 0.47$   & $1.43 \pm 0.01$      & $2261.5 \pm 213.5$ & $108.60 \pm 1.12$ \\
NSLC               & $62.80 \pm 0.90$        & $64.57 \pm 0.53$       & $0.12 \pm 0.02$   & $1.07 \pm 0.02$      & $77.3 \pm 27.0$ & $66.87 \pm 0.89$ \\
\addlinespace

\multicolumn{7}{c}{\textbf{Hard $(d=16)$}} \\
\midrule
\multicolumn{7}{c}{\textbf{MOO Methods}} \\
SoM                & $70.71 \pm 0.23$        & $86.49 \pm 1.26$       & $68.51 \pm 0.12$   & $7.97 \pm 0.03$      & $50474.5 \pm 547.3$ & $563.51 \pm 3.12$ \\
TCH-Set            & $57.19 \pm 0.11$        & $62.62 \pm 0.32$       & $33.08 \pm 0.45$   & $1.34 \pm 0.01$      & $18690.7 \pm 461.7$ & $76.44 \pm 0.49$ \\
SSoM               & $73.51 \pm 0.07$        & $87.05 \pm 0.83$       & $68.51 \pm 0.13$   & $7.94 \pm 0.03$      & $52047.9 \pm 616.8$ & $583.63 \pm 2.60$ \\
STCH-Set           & $73.51 \pm 0.09$        & $86.92 \pm 1.24$       & $68.24 \pm 0.12$   & $7.94 \pm 0.03$      & $51835.9 \pm 698.0$ & $584.00 \pm 2.14$ \\
\addlinespace
\cmidrule(r){1-7}
\multicolumn{7}{c}{\textbf{QD Methods}} \\

SQUAD              & $72.86 \pm 0.04$        & $83.92 \pm 0.67$       & $68.40 \pm 0.13$   & $6.61 \pm 0.02$      & $51389.9 \pm 1014.0$ & $481.40 \pm 1.62$ \\
CMA-MAEGA          & $66.33 \pm 7.54$        & $82.46 \pm 3.26$       & $68.55 \pm 0.38$   & $4.74 \pm 2.50$      & $46340.9 \pm 31605.7$ & $297.96 \pm 146.84$ \\
CMA-MEGA           & $58.93 \pm 1.95$        & $76.88 \pm 1.75$       & $97.52 \pm 0.46$   & $3.95 \pm 0.48$      & $62526.9 \pm 2237.8$ & $233.58 \pm 36.22$ \\
CMA-MAE            & $65.04 \pm 2.66$        & $78.87 \pm 0.34$       & $2.77 \pm 0.26$   & $1.33 \pm 0.04$      & $1693.5 \pm 580.5$ & $86.38 \pm 2.28$ \\
PGA-ME             & $78.60 \pm 2.06$        & $89.48 \pm 0.55$       & $1.72 \pm 0.20$   & $1.08 \pm 0.02$      & $1397.1 \pm 754.7$ & $84.49 \pm 1.74$ \\
DNS                & $66.12 \pm 0.17$        & $77.21 \pm 0.28$       & $38.69 \pm 0.63$   & $1.68 \pm 0.02$      & $27011.8 \pm 623.1$ & $110.83 \pm 1.19$ \\
DNS-G              & $74.37 \pm 0.32$        & $90.39 \pm 0.36$       & $34.82 \pm 0.53$   & $1.45 \pm 0.01$      & $27309.3 \pm 1021.0$ & $108.02 \pm 1.08$ \\
NSLC               & $62.40 \pm 0.60$        & $64.72 \pm 0.54$       & $1.00 \pm 0.12$   & $1.09 \pm 0.01$      & $637.3 \pm 251.6$ & $67.89 \pm 0.44$ \\
\bottomrule
\end{tabular}
\end{table*}

Full results on the linear projection (LP) benchmark are reported in Table~\ref{table_lp}.

\end{document}